%% file: main.tex
\documentclass{article} 
\usepackage{iclr2027_conference,times}

\input{math_commands.tex}

\newcommand{\eg}{\textit{e.g.}}
\newcommand{\ie}{\textit{i.e.}}

\usepackage{microtype}
\usepackage{graphicx}
\usepackage{subcaption}
\usepackage{booktabs} 
\usepackage{colortbl}
\usepackage{xcolor}

\usepackage{mathrsfs}
\usepackage{multirow}
\usepackage{array}
\usepackage{algorithm}
\usepackage{algorithmic}
\usepackage{epstopdf}

\usepackage{amsmath}
\usepackage{amssymb}
\usepackage{mathtools}
\usepackage{amsthm}

\usepackage{bm}
\usepackage{wrapfig} 

\usepackage{stfloats}
\usepackage{enumitem}
\usepackage{makecell}
\usepackage{bbm}
\usepackage{hyperref}
\usepackage{url}
\usepackage[capitalize,noabbrev]{cleveref}

\usepackage{diagbox}
\usepackage[flushleft]{threeparttable}
\usepackage{threeparttable}
\definecolor{redorange}{rgb}{1, 0.5, 0}
\definecolor{darkgreen}{rgb}{0.0, 0.6, 0.0}
\definecolor{darkred}{rgb}{0.9, 0.0, 0.0}
\usepackage{pifont}
\usepackage{tikz}
\usepackage{bm}

\usepackage{wasysym}
\usepackage{marvosym}
\newcommand{\with}{\textcolor{darkgreen}{\ding{52}}}  
\newcommand{\without}{\textcolor{darkred}{\ding{56}}}

\theoremstyle{plain}
\newtheorem{theorem}{Theorem}
\newtheorem{proposition}{Proposition}

\newtheorem{assumption}{Assumption}
\theoremstyle{remark}

\AddToHook{cmd/appendix/before}{%
  \crefalias{section}{appendix}%
  \crefalias{subsection}{appendix}%
  \crefalias{subsubsection}{appendix}%
}

\title{Normality Constraint Learning: Adapting Foundation Models for Time Series Anomaly Detection}

\author{
	Xiaohui Zhou$^{1,2}$,
	Yijie Wang$^{1,2*}$,
	Hongzuo Xu$^{3}$,
	Weixuan Liang$^{2}$,
	Guansong Pang$^{4*}$ \\
	$^{1}$National Key Laboratory of Parallel and Distributed Computing \\
	$^{2}$College of Computer Science and Technology, National University of Defense Technology \\
	$^{3}$Intelligent Game and Decision Lab (IGDL) \\
	$^{4}$School of Computing and Information Systems, Singapore Management University
}

\iclrfinalcopy 
\begin{document}

\maketitle
\vspace{-3mm}
\begin{abstract}
Time Series Foundation Models (TSFMs) achieve strong generalization by learning to reconstruct or forecast broad temporal patterns from large-scale time series during pre-training.
Yet this strength can become a weakness for anomaly detection: TSFMs may model rare anomalous patterns as effectively as normal ones, allowing anomalies to be accurately reconstructed or forecasted and thus diminishing their reconstruction/forecasting error-based anomaly scores. 
This paper proposes \textbf{\underline{N}ormality \underline{C}onstraint \underline{L}earning} (\textbf{NCL}), a lightweight plug-and-play framework that adapts pre-trained TSFMs for accurate anomaly detection without modifying their pre-trained parameters. Our key insight is to constrain the broad pattern space of TSFMs to the normal structure of a target time series, preventing their broad modeling capability from obscuring abnormal deviations. Specifically, NCL constructs a compact normality subspace from a few normal patch features and adaptively steers each patch feature toward normality within this subspace, guided by contrastive constraints that form compact and discriminative normality manifolds. 
The calibrated features are aggregated to reinforce normal components and fused with the original TSFM output, amplifying the discrepancy between normal and abnormal observations for the reconstruction/forecasting error-based anomaly scoring.
Extensive experiments across diverse TSFM families and benchmarks show that NCL consistently improves anomaly detection performance, providing a generalizable framework for adapting TSFMs to anomaly detection.

\end{abstract}

\vspace{-2mm}
\section{Introduction}\label{intro}
Time series foundation models (TSFMs) have recently delivered substantial gains in the landscape of time series analysis. Pre-trained on massive and diverse corpora, TSFMs such as Aurora~\citep{wu2026aurora}, ROSE~\citep{wang2025towards}, Chronos~\citep{ansari2024chronos}, and MOMENT~\citep{goswami2024moment} encode rich universal representations of temporal dynamics and demonstrate strong zero-shot and few-shot generalization across various downstream tasks. Time series anomaly detection (TSAD) is one of the key downstream tasks of TSFMs. It aims to identify deviations from expected patterns, playing a critical role in many real-world applications such as industrial control~\citep{wang2022detecting}, disease identification~\citep{pereira2019learning}, and fraud detection~\citep{anandakrishnan2018anomaly}.

Although TSFMs have strong expressive power, their design is fundamentally reconstruction- or forecasting-centric, where internal features are shaped for broad pattern modeling. Since such general-purpose representation capability can often efficiently model both normal and abnormal patterns, the existing common practice of directly repurposing the reconstruction/forecasting errors of TSFMs for anomaly detection is questionable.
As illustrated in \cref{Effciency}, both reconstruction-based and forecasting-based TSFMs can effectively reconstruct or forecast most of the abnormal observations, resulting in small reconstruction/forecasting errors for not only normal data but also anomalies. This phenomenon fundamentally undermines the widely-used TSAD intuition that anomalies are harder to reconstruct or forecast than normal data~\citep{pang2021deep,zamanzadeh2024deep,liu2024elephant,zhu2026foundation}. We also provide a theoretical analysis of this issue in \cref{app:TSFM_AD}.
%
\begin{figure*}[!t]
	\centering
	\vspace{-3mm}
	\begin{subfigure}[t]{0.48\textwidth}
		\centering 
		\includegraphics[width=\linewidth]{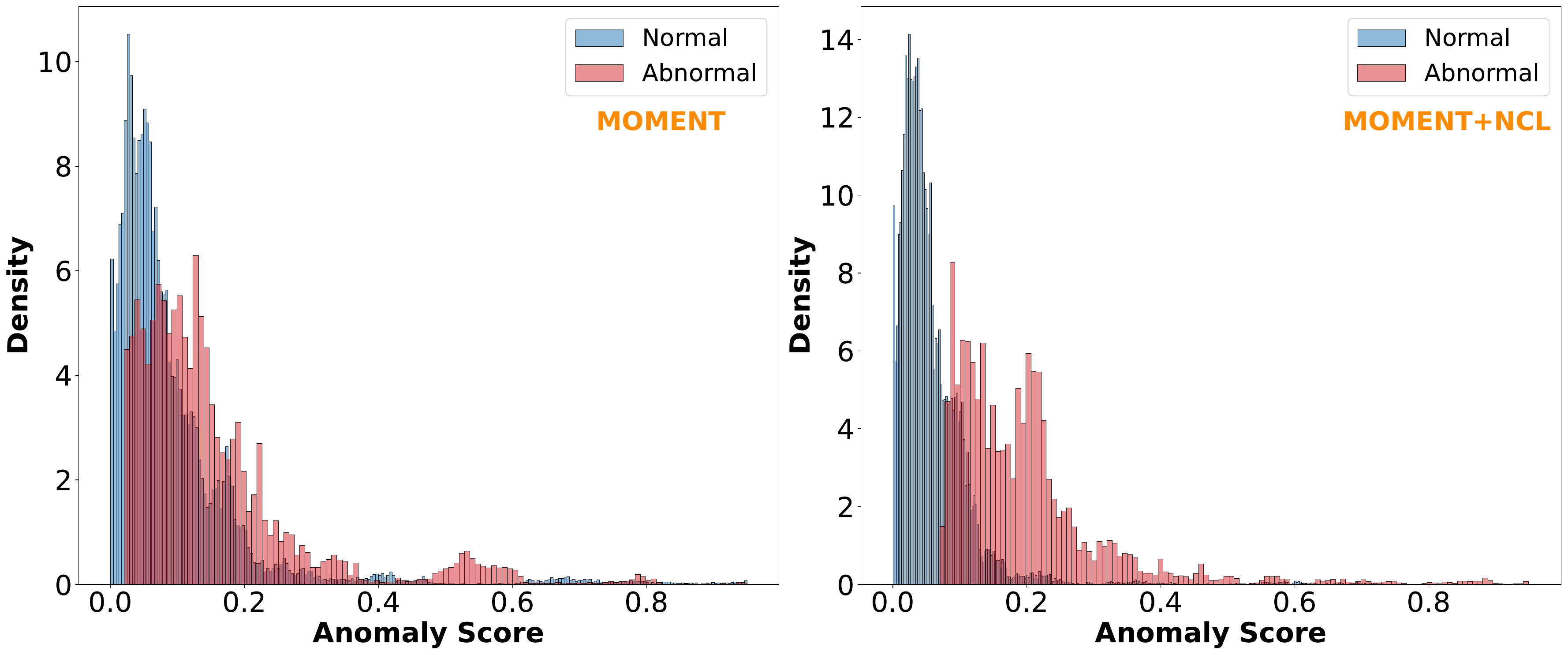} 
		\caption{Anomaly scores for reconstruction-based TSFMs.}
	\end{subfigure}
	\hspace{0.02\textwidth}
	\begin{subfigure}[t]{0.48\textwidth}
		\centering
		\includegraphics[width=\linewidth]{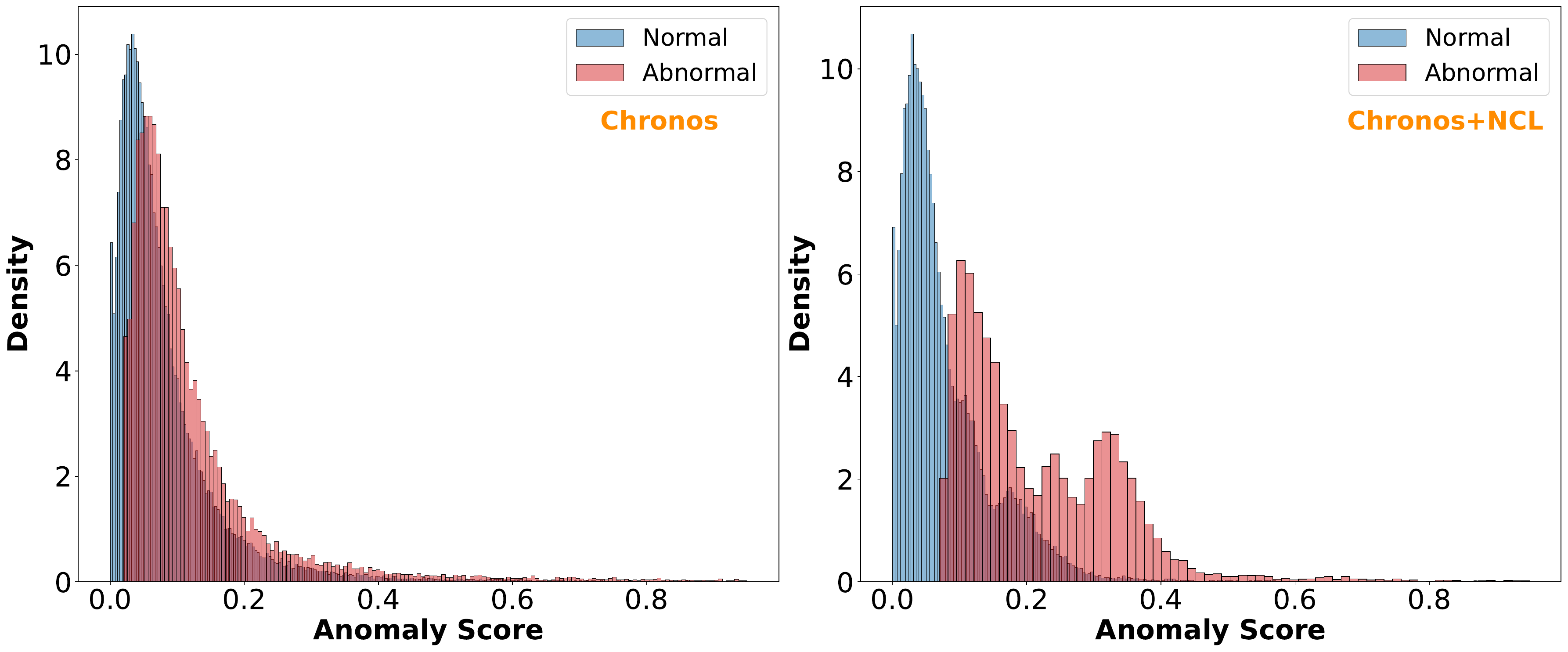}
		\caption{Anomaly scores for forecasting-based TSFMs.}
	\end{subfigure}%
	\caption{(a) and (b) respectively show the anomaly score distributions of normal and abnormal observations for reconstruction-based and forecasting-based TSFMs and their NCL-augmented counterparts on the PSM dataset~\citep{abdulaal2021practical}. NCL enables both types of TSFMs to produce substantially more discriminative anomaly scores. 
    }
	\label{Effciency}
    \vspace{-2mm}
\end{figure*}

To address this challenge, a natural idea is to adapt TSFMs for anomaly detection through fine-tuning strategies~\citep{hu2022lora,qiu2023controlling,park2026comparative}. However, fine-tuning pre-trained foundation models risks catastrophic forgetting of the rich temporal knowledge, on which the anomaly scoring mechanisms in the fine-tuned TSFMs still heavily rely when computing the reconstruction or forecasting errors.
This raises a critical question: \emph{How can we effectively leverage the powerful representational capacity of frozen TSFMs while steering their outputs toward a normality regime so that abnormal deviations stand out more clearly?}

In this paper, we propose \textbf{\underline{N}ormality \underline{C}onstraint \underline{L}earning} (\textbf{NCL}), a lightweight plug-and-play framework that can augment diverse TSFMs for accurate anomaly detection without modifying their pre-trained parameters. Our key insight is that, while TSFMs can capture a broad spectrum of temporal patterns, reconstruction/forecasting-error-based anomaly detection requires identifying and emphasizing the relevant normal patterns only; we therefore first construct a normality subspace from this broad pattern space and then adaptively steer each patch feature toward normality within the subspace, guided by compact and discriminative manifolds learned from a few normal patch features.

To this end, NCL 
is instantiated with three main components, namely \underline{norm}al sub\underline{space} construction (\textbf{NormSpace}), \underline{patch}-adaptive \underline{n}ormality \underline{t}ransformation (\textbf{PatchNT}), and \underline{con}trastive \underline{n}ormality \underline{c}onstraint (\textbf{ConNC}). NormSpace constructs a compact normality subspace via singular value decomposition (SVD) of patch features from a few training subsequences encoded by a frozen TSFM. 
Within this subspace, 
PatchNT adaptively calibrates patch features toward a learnable normality manifold along each principal direction, while ConNC guides this adaptive transformation by encouraging the formation of compact and discriminative normality manifolds.
The calibrated features across all normality directions are then aggregated into consensus features that reinforce normal components.
Finally, these features are decoded and fused with the original TSFM output to enhance the normal patterns, thereby amplifying the discrepancy between normal and abnormal data.
As shown in \cref{Effciency}, augmenting TSFMs with NCL yields more discriminative anomaly score distributions than vanilla TSFMs (see \cref{app:anomly_dis} for additional results).
In summary, our contributions are as follows:
\begin{itemize}
	\item We propose NCL, a novel plug-and-play framework that augments frozen TSFMs with normality constraints to learn calibrated normality features, thereby substantially sharpening the reconstruction/forecasting error-based anomaly scores of the TSFMs.
	\item We further instantiate NCL with three components: NormSpace, PatchNT, and ConNC. NormSpace constructs a normality subspace, while PatchNT adaptively transforms patch features toward the normality regime. This transformation is guided by ConNC to encourage the formation of compact yet discriminative normality manifolds within the regime.
	\item Extensive experiments on popular TSAD datasets demonstrate that NCL consistently and significantly improves the anomaly detection performance of diverse families of TSFMs.
\end{itemize}

\section{Related Work}
\label{related_work}
\subsection{Time Series Anomaly Detection}
Conventional TSAD approaches rely on statistical techniques and machine learning methods, which detect anomalies based on deviation from statistical properties or density estimates~\citep{breunig2000lof,yairi2001fault,liu2008isolation,hariri2019extended}. Recently, the advent of deep learning has significantly advanced TSAD capabilities. Reconstruction-based methods~\citep{CrossAD,wu2025catch,zhangicdiffad,huteamwork} learn to reconstruct normal patterns and identify anomalies through large reconstruction errors, while forecasting-based approaches~\citep{liu2025gcad,cho2025structured,zhou2026modeling} detect anomalies by measuring discrepancies between predicted and actual future observations. Meanwhile, contrastive learning~\citep{luo2026anomalytcn,mou2026roca,park2026paano} has emerged as a powerful paradigm for learning discriminative representations.

Despite these advances, existing methods typically require training a separate model for each new dataset or application
, \ie, they are trained from scratch on large-scale domain-specific normal data and evaluated on the testing dataset from the same domain. This strong dependence on abundant in-domain data severely limits their applicability to scenarios where such data are scarce or costly to obtain \citep{shentu2025towards, lan2026towards,he2026timeradar}.

\subsection{Time Series Foundation Models for Anomaly Detection}
The recent emergence of TSFMs provides a generalization path, modeling universal temporal representations through large-scale time series pre-training. Most existing TSFMs~\citep{gao2024units,shi2025time,wu2026aurora} are task-general, which are pre-trained on massive data with general objectives (\eg, reconstruction or forecasting), and are applied to various downstream tasks such as forecasting and anomaly detection. However, directly applying TSFMs to anomaly detection presents fundamental challenges. As noted by recent studies~\citep{liu2024elephant,zhu2026foundation}, task-general TSFMs effectively model both normal and anomalous patterns. This undermines the widely-used TSAD intuition that anomalies are harder to reconstruct or forecast than normal data.

More recently, TSAD-specific foundation models have also been explored~\citep{shentu2025towards,lan2026towards,he2026timeradar}. For instance, DADA~\citep{shentu2025towards} utilizes injected anomalies to explicitly distinguish between normal and abnormal patterns, while TimeRCD~\citep{lan2026towards} conducts relative context discrepancy pre-training on synthetic anomalies. TimeRadar~\citep{he2026timeradar} is focused on leveraging these synthetic anomalies to enforce contextual deviations. 
These methods require mass synthetic anomalies to enrich the pre-training corpus. However, Low-quality anomalies may cause the foundation models to specialize in artificial deviations that do not transfer to real-world distributions~\citep{xu2024calibrated,zhou2026impact}, thereby leading to high detection errors.

\section{Preliminaries}
\label{pre}

\textbf{Time Series Anomaly Detection (TSAD).} Given a time series $\mathbf{X} = (\mathbf{x}_1, \mathbf{x}_2, \ldots, \mathbf{x}_T) \in \mathbb{R}^{D \times T}$ with $D$ channels and $T$ ordered observations, the task of TSAD is to calculate an anomaly score $s_t \in \mathbb{R}$ for each $\mathbf{x}_t \in \mathbb{R}^D$ at timestamp $t$. The optimal goal is to assign each anomalous observation a larger anomaly score than all the normal observations.

\textbf{Time Series Foundation Models for Anomaly Detection.} Most TSFMs rely on data reconstruction and/or forecasting objectives in their training, and furthermore, reconstruction and forecasting errors are widely-used effective anomaly measures in TSAD. Thus, these models are the backend TSFMs in this work. Formally, a TSFM $\mathcal{F}$ typically regards a time series subsequence $\mathbf{X}_{t-L:t} = (\mathbf{x}_{t-L}, \mathbf{x}_{t-L+1}, \ldots, \mathbf{x}_{t-1}) \in \mathbb{R}^{D \times L}$ as a sample input, and produces either a reconstruction of the input $\hat{\mathbf{X}}_{t-L:t}$ or a forecast of future observations $\hat{\mathbf{X}}_{t:t+F}$ over a horizon $F$. For notational brevity, we unify both paradigms by letting $\mathbf{Y}_t$ denote the prediction target, \ie, $\mathbf{Y}_t = \mathbf{X}_{t-L:t}$ under reconstruction and $\mathbf{Y}_t = \mathbf{X}_{t:t+F}$ under forecasting. The corresponding model output is formulated as $\hat{\mathbf{Y}}_t = \mathcal{F}(\mathbf{X}_{t-L:t})$. The pointwise anomaly score is then derived from the discrepancy between $\mathbf{Y}_t$ and $\hat{\mathbf{Y}}_t$, commonly measured by the squared error $s_t = \|\mathbf{y}_t - \hat{\mathbf{y}}_t\|^2$, where $\mathbf{y}_t\in\mathbf{Y}_t$ and $\hat{\mathbf{y}}_t \in \hat{\mathbf{Y}}_t$.

\section{Methodology}
\label{method}

\subsection{Overview of NCL: Toward Normality-Constrained TSFMs}

TSFMs are pre-trained on large corpora to learn a variety of universal temporal patterns.
However, this generality allows them to reconstruct or forecast normal and anomalous data similarly well, so anomalies may not yield larger errors. 
To mitigate this issue, we introduce the plug-and-play \textbf{normality constraint learning (NCL)} module $\mathcal{G}$ to achieve normality-constrained TSFMs by steering the representational capacity of a TSFM $\mathcal{F}$ toward the normal regime out of its broad pattern space. 

As illustrated in \cref{overview}, NCL consists of three main components. Given a few normal training subsequences, 
its first component, \textcircled{\small{1}} \textbf{Normal Subspace Construction (NormSpace)}, identifies $k$ principal normal directions via SVD on their patch features $\mathbf{Z}$ extracted from the frozen TSFM $\mathcal{F}$. The \textcircled{\small{2}} \textbf{Patch-Adaptive Normality Transformation (PatchNT)} component is then devised to enforce a lightweight yet patch-adaptive alignment of each feature with a learnable normality manifold along each principal direction. The \textcircled{\small{3}} \textbf{Contrastive Normality Constraint (ConNC)} component is further proposed to constrain the transformed features 
tightly cluster around their respective normality modes. In doing so, NCL amplifies the discrepancy signal between normal and abnormal observations that the superior fitting capacity of TSFMs would otherwise diminish. All these three components are done without changing any of the pre-trained parameters of $\mathcal{F}$ itself. Let $\mathcal{F}^{\prime}$ denote the frozen TSFM augmented with the proposed NCL module, the predictions can then be defined as $\tilde{\mathbf{Y}}_t = \mathcal{F}^{\prime}(\mathbf{X}_{t-L:t})$.
\begin{figure}[h]  
	\centering
	\includegraphics[width=1\linewidth]{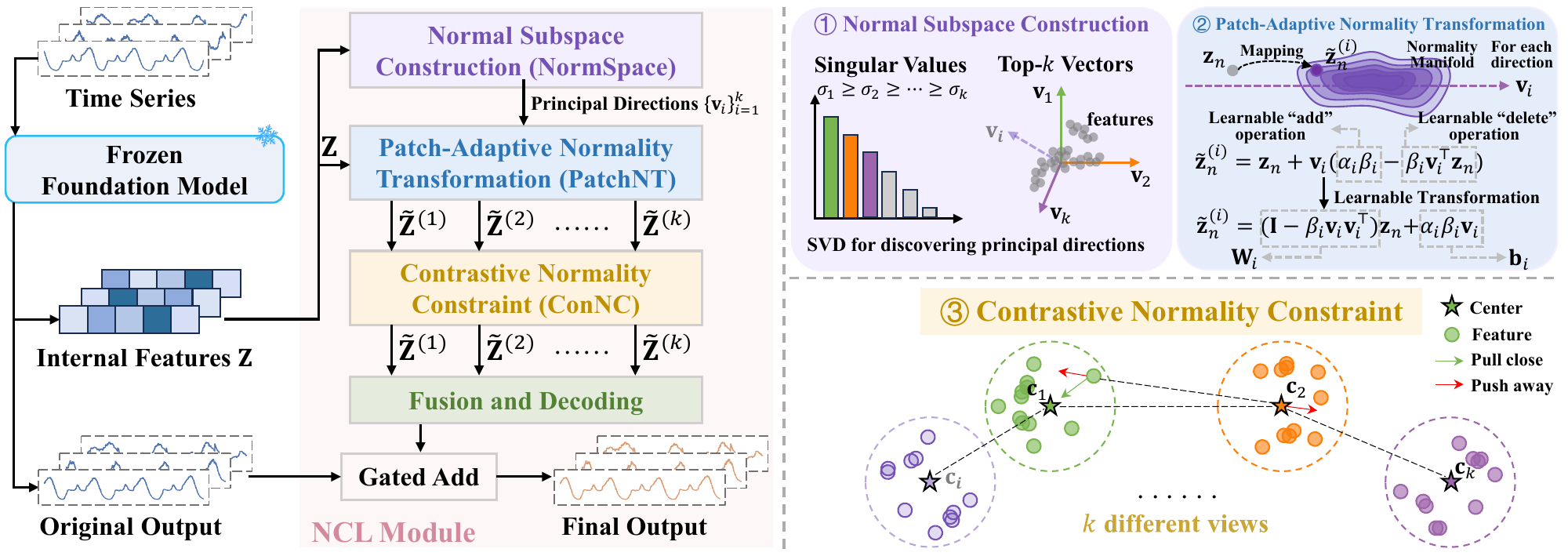} 
	\caption{Overview of NCL. \textcircled{\small{1}} Given few-shot normal training subsequences, a frozen TSFM is used to extract their patch features $\mathbf{Z}$, upon which NormSpace constructs a normal subspace of $k$ principal directions via SVD. \textcircled{\small{2}} PatchNT is then trained to perform patch-adaptive transformation of each feature onto a learnable normality manifold along each principal direction. \textcircled{\small{3}} ConNC is further leveraged to calibrate the transformed features via constraining their distributions tightly cluster around the normal prototypes $\{\mathbf{c}_i\}_{i=1}^{k}$. Finally, these calibrated features are synthesized with the original TSFM output to amplify the error discrepancy between normal and abnormal data.}
	\label{overview}
\end{figure}
\subsection{NormSpace: Normal Subspace Construction via Few-Shot SVD}
\label{sec:esd}
Let $\mathbf{Z}=\left[\mathbf{z}_{1}, \mathbf{z}_{2}, \ldots, \mathbf{z}_{N}\right]^{\top} \in \mathbb{R}^{N \times d}$ be the matrix of normal patch features extracted from $\mathcal{F}$, where $N$ is the total number of patches and $d$ is the feature dimension. To identify the most informative directions of variation within the normal feature space, we perform SVD on $\mathbf{Z}$:
\begin{equation}
	\mathbf{Z} = \mathbf{U}_r \boldsymbol{\Sigma}_r \mathbf{V}_r^\top = \sum_{i=1}^r \sigma_i \mathbf{u}_i \mathbf{v}_i^{\top},
\end{equation}
where $\mathbf{U}_r \in \mathbb{R}^{N \times r}$, $\boldsymbol{\Sigma}_r = \mathrm{diag}(\sigma_1, \sigma_2, \ldots, \sigma_r)$ with $\sigma_1 \geq \sigma_2 \geq \cdots \geq \sigma_r \geq 0$, and $\mathbf{V}_r \in \mathbb{R}^{d \times r}$, with $r = \text{rank}(\mathbf{Z})$. The columns of $\mathbf{V}_r$, denoted $\{\mathbf{v}_1, \mathbf{v}_2, \ldots, \mathbf{v}_r\}$, are the right singular vectors and correspond to the principal directions of $\mathbf{Z}$. 

The singular vectors with the largest singular values capture the dominant and stable patterns shared across all $N$ normal patch features, \ie,  the directions along which normal temporal structure is most concentrated. 
Therefore, we construct a normal-pattern-focused space by retaining the top-$k$ right singular vectors:
\begin{equation}
    \label{top-k}
	\{\mathbf{v}_i\}_{i=1}^{k}, \quad k \ll d,
\end{equation}
which collectively span a $k$-dimensional \emph{normal subspace}. These vectors are stored as running buffers during inference, ensuring that the normal subspace is estimated from the training distribution.
We theoretically show in \cref{app:the_analysis} that \textbf{(i)} performing SVD on the patch features from few-shot normal samples is sufficient to construct an informative normality space and \textbf{(ii)} choosing the top-$k$ vectors to span a $k$-dimensional normal subspace is most effective.

\subsection{PatchNT: Lightweight Patch-Adaptive Normality Transformation}
\label{sec:transform}

The normal subspace spanned by $\{\mathbf{v}_i\}_{i=1}^{k}$ captures the principal directions of normal temporal patterns, but SVD projection alone only preserves feature components along these directions without explicitly aligning them with normality. Moreover, different patches may deviate from normality in different ways and to different degrees, requiring patch-specific calibration. We therefore introduce patch-adaptive normality transformation (PatchNT), which uses only a few learnable parameters to selectively add or delete information from each patch feature, steering its component along $\mathbf{v}_i$ via these learnable ``add'' and ``delete'' operations toward a normality-consistent representation. Applying PatchNT separately along the $k$ directions then yields $k$ complementary views, each calibrating the patch feature toward the normal temporal structure captured along one direction.

Formally, for each patch feature $\mathbf{z}_{n} \in \mathbb{R}^d$, the $i$-th normality transformation proceeds as follows. First, the scalar projection of $\mathbf{z}_{n}$ onto $\mathbf{v}_i$ is computed:
\begin{equation}
	p_n^{(i)} = \mathbf{v}_i^\top \mathbf{z}_{n}.
\end{equation}
Next, inspired by deep delta learning~\citep{zhang2026deep} in residual update, two patch-adaptive vector parameters are defined to specify the learnable ``add'' and ``delete'' operations for steering
patch feature along each direction $\mathbf{v}_i$. Specifically, the target value $\boldsymbol{\alpha}$ and the calibration strength $\boldsymbol{\beta}$ are produced by lightweight linear heads applied to $\mathbf{z}_n$ for the feature steering in all $k$ directions:
\begin{equation}
	\boldsymbol{\alpha} = \sigma(\mathbf{W}_\alpha \mathbf{z}_{n} + \mathbf{b}_\alpha) \in (0, 1)^k, \quad
	\boldsymbol{\beta} = 2 \cdot \sigma(\mathbf{W}_\beta \mathbf{z}_{n} + \mathbf{b}_\beta) \in (0, 2)^k,
\end{equation}
where $\sigma(\cdot)$ is the sigmoid function, and $\mathbf{W}_\alpha, \mathbf{W}_\beta \in \mathbb{R}^{k \times d}$ are the learnable weights. Each target value $\alpha_i \in \boldsymbol{\alpha}$ specifies where the calibrated feature should lie along $\mathbf{v}_i$, while $\beta_i \in \boldsymbol{\beta}$ controls the degree of calibration. Then we can obtain the calibrated feature as follows:
\begin{equation}
	\tilde{\mathbf{z}}_{n}^{(i)} = \mathbf{z}_{n} + \beta_i \mathbf{v}_i(\alpha_i - p_n^{(i)})= \mathbf{z}_{n} + \beta_i \mathbf{v}_i \bigl(\alpha_i - \mathbf{v}_i^\top \mathbf{z}_{n}\bigr) = \mathbf{z}_{n} + \mathbf{v}_i \bigl(\underbrace{\alpha_i \beta_i}_{\text{add}} - \underbrace{\beta_i \mathbf{v}_i^\top \mathbf{z}_{n}}_{\text{delete}}\bigr),
	\label{eq:edit}
\end{equation}
indicating that the introduced patch-adaptive parameters can achieve both information addition and deletion along the direction $\mathbf{v}_i$. However, it is still unclear to what extent the $\beta_i$ will modify the relative distances of the feature from $\alpha_i$ in the direction $\mathbf{v}_i$. We further treat the calibration in \cref{eq:edit} as a rank-1 affine update:
\begin{equation}
	\tilde{\mathbf{z}}_{n}^{(i)} = \mathbf{z}_{n} + \mathbf{v}_i \bigl(\alpha_i \beta_i - \beta_i \mathbf{v}_i^\top \mathbf{z}_{n}\bigr) = (\mathbf{I} - \beta_i \mathbf{v}_i \mathbf{v}_i^\top) \mathbf{z}_{n} + \alpha_i \beta_i \mathbf{v}_i = \mathbf{W}_i \mathbf{z}_{n} + \mathbf{b}_i,
	\label{eq:transform}
\end{equation}
where $\mathbf{W}_i=\mathbf{I} - \beta_i \mathbf{v}_i \mathbf{v}_i^\top$ and $\mathbf{b}_i=\alpha_i \beta_i \mathbf{v}_i$. Based on \cref{eq:transform}, we can then characterize how $\beta_i$ governs the geometric behavior of the transformation by analyzing the spectral structure of $\mathbf{W}_i$. We further propose \cref{prop1} (see \cref{app:prop1} for its proof).
\begin{proposition}[Spectral and Geometric Property] \label{prop1}
{The spectrum of $\mathbf{W}_i=\mathbf{I} - \beta_i \mathbf{v}_i \mathbf{v}_i^\top$ is $\Lambda(\mathbf{W}_i) = \{1 - \beta_i, 1, \ldots, 1\}$ with one eigenvalue $(1-\beta_i)$ for eigenvector $\mathbf{v}_i$ and $(d-1)$ eigenvalues of $1$ in $\mathbf{v}_i^\perp = \{ \mathbf{k} \in \mathbb{R}^d \mid \mathbf{v}_i^\top\mathbf{k}=0 \}$. At $\beta_i = 1$, the affine update $\tilde{\mathbf{z}}_{n}^{(i)}=\mathbf{W}_i \mathbf{z}_{n} + \mathbf{b}_i$ exactly deletes the $\mathbf{v}_i$-component of $\mathbf{z}_{n}$ and adds $\alpha_i$. At $\beta_i < 1$ the update is partial and at $\beta_i > 1$ it slightly overshoots, allowing the model to learn under- or over-corrective behaviors where beneficial.}
\end{proposition}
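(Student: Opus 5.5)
The plan is a direct eigen-decomposition of the rank-one perturbation of the identity, followed by a coordinate computation along $\mathbf{v}_i$. Throughout I use that $\mathbf{v}_i$ is a right singular vector from NormSpace (\cref{sec:esd}), so $\|\mathbf{v}_i\|_2=1$. First, apply $\mathbf{W}_i$ to $\mathbf{v}_i$:
\begin{equation*}
\mathbf{W}_i\mathbf{v}_i=\mathbf{v}_i-\beta_i\mathbf{v}_i(\mathbf{v}_i^\top\mathbf{v}_i)=(1-\beta_i)\mathbf{v}_i .
\end{equation*}
Hence $\mathbf{v}_i$ is an eigenvector with eigenvalue $1-\beta_i$. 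Next, for any $\mathbf{k}\in\mathbf{v}_i^\perp$ we have $\mathbf{W}_i\mathbf{k}=\mathbf{k}-\beta_i\mathbf{v}_i(\mathbf{v}_i^\top\mathbf{k})=\mathbf{k}$. Since $\mathbf{v}_i^\perp$ has dimension $d-1$, this gives $d-1$ eigenvalues equal to $1$.

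The space $\mathbb{R}^d=\mathrm{span}(\mathbf{v}_i)\oplus\mathbf{v}_i^\perp$ decomposes into these eigenspaces, and $\mathbf{W}_i$ is symmetric. So $\mathbf{W}_i$ is orthogonally diagonalizable and its spectrum is exactly $\{1-\beta_i,1,\ldots,1\}$, counted with multiplicity. The one point needing care is the degenerate case $\beta_i=0$, where the eigenvalues coincide. This case is excluded anyway because $\beta_i\in(0,2)$, and even if it occurred the multiset statement would still hold. As a check, I will confirm the result with $\det\mathbf{W}_i=1-\beta_i$ (matrix determinant lemma) and $\mathrm{tr}\,\mathbf{W}_i=d-\beta_i$.

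For the geometric claims, I will write $\mathbf{z}_n=p_n^{(i)}\mathbf{v}_i+\mathbf{z}_n^\perp$ with $\mathbf{z}_n^\perp\in\mathbf{v}_i^\perp$. Then
\begin{equation*}
\tilde{\mathbf{z}}_n^{(i)}=\mathbf{z}_n^\perp+\bigl((1-\beta_i)p_n^{(i)}+\beta_i\alpha_i\bigr)\mathbf{v}_i .
\end{equation*}
The orthogonal component is untouched, and the new coordinate along $\mathbf{v}_i$ is a convex or affine combination of $p_n^{(i)}$ and $\alpha_i$. Equivalently, the signed distance to the target satisfies
\begin{equation*}
\mathbf{v}_i^\top\tilde{\mathbf{z}}_n^{(i)}-\alpha_i=(1-\beta_i)\bigl(p_n^{(i)}-\alpha_i\bigr).
\end{equation*}
This identity gives the three regimes. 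At $\beta_i=1$ the $\mathbf{v}_i$-coordinate equals $\alpha_i$ exactly, so the old component is deleted and replaced by $\alpha_i$. For $\beta_i\in(0,1)$ the factor lies in $(0,1)$, so the update is a partial move toward $\alpha_i$ that keeps the sign of the deviation. For $\beta_i\in(1,2)$ the factor lies in $(-1,0)$, so the coordinate crosses $\alpha_i$ (overshoot) while its distance to $\alpha_i$ still contracts. That contraction is what justifies calling the overshoot "slight" and connects it to the range $\beta_i\in(0,2)$ fixed by the $2\sigma(\cdot)$ parameterization.

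No step is a real obstacle. The only substantive point is making the informal words "partial" and "slightly overshoots" precise. I plan to do this through the contraction factor $|1-\beta_i|<1$ in the signed-distance identity above.
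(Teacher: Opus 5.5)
Your proof is correct and follows essentially the same route as the paper. You verify $\mathbf{W}_i\mathbf{v}_i=(1-\beta_i)\mathbf{v}_i$ and $\mathbf{W}_i\mathbf{k}=\mathbf{k}$ on $\mathbf{v}_i^\perp$, then read the three regimes off the identity $\mathbf{v}_i^\top\tilde{\mathbf{z}}_n^{(i)}=\alpha_i+(1-\beta_i)(p_n^{(i)}-\alpha_i)$, which is exactly the paper's projection computation. Your additions tighten points the paper leaves informal: the direct-sum argument showing these eigenvalues exhaust the spectrum, the trace and determinant check, and the contraction factor $|1-\beta_i|<1$ that gives ``slightly overshoots'' a precise meaning.
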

The proposition shows that PatchNT offers a brand-new affine transformation that requires only selective modification of the $\mathbf{v}_i$-component of $\mathbf{z}$, leaving the orthogonality of all components intact. Applying such transformation to all $N$ patch features across all $k$ directions, we obtain $k$ normal views, each consisting of $N$ transformed features:
\begin{equation}
	\widetilde{\mathbf{Z}}^{(i)} = \bigl[\tilde{\mathbf{z}}_{1}^{(i)}, \tilde{\mathbf{z}}_{2}^{(i)}, \ldots, \tilde{\mathbf{z}}_{N}^{(i)}\bigr]^{\top} \in \mathbb{R}^{N \times d}, \quad i \in \{1, 2, \ldots, k\},
\end{equation}
where each $\tilde{\mathbf{z}}_{n}^{(i)}$ is adaptively calibrated toward normality captured along the $\mathbf{v}_i$ direction. Thus, $\widetilde{\mathbf{Z}}^{(i)}$ is effectively aligned with a direction-specific view of normality.
\subsection{ConNC: Contrastive Normality Constraint}
\label{sec:contrastive}
In the transformed feature space, we aim to promote coherence among patch features calibrated along the same direction while preserving distinctions between features calibrated along different directions. This helps each view to accurately align with the direction-specific normality and reduces redundancy across views. Contrastive learning naturally supports this objective by encouraging within-view compactness and between-view separation. Inspired by self-supervised transformation classification~\citep{he2018triplet,bergman2020goad}, we therefore introduce a contrastive normality constraint that uses the $k$ transformation views as self-supervised labels, encouraging the transformed features to form $k$ compact and discriminative normality manifolds.

Formally, let $\tilde{\mathbf{z}}_{n}^{(i)} \in \mathbb{R}^d$ denote the $i$-th transformation view of patch feature $\mathbf{z}_{n}$, the center $\mathbf{c}_{i}$ be the average feature over all $N$ normal patch features for the $i$-th transformation, \ie, $\mathbf{c}_{i} = \frac{1}{N} \sum_{n=1}^N \tilde{\mathbf{z}}_{n}^{(i)}$, and the contrastive normality constraint can then be formulated as follows:
\begin{equation}
	\mathcal{L}_\mathrm{ConNC} = \frac{1}{N} \sum_{n=1}^{N} \frac{1}{k} \sum_{i=1}^{k}
	\max(0,\; \|\tilde{\mathbf{z}}_n^{(i)} - \mathbf{c}_i\|_2 + \gamma
	- \min_{j \neq i} \|\tilde{\mathbf{z}}_n^{(i)} - \mathbf{c}_j\|_2),
	\label{eq:loss_nc}
\end{equation}
where $\gamma > 0$ is a margin hyperparameter. The above loss encourages each transformation view $\tilde{\mathbf{z}}_{n}^{(i)}$ to be closer to its own cluster center $\mathbf{c}_{i}$ than to any other cluster center $\mathbf{c}_{j}$ ($j \neq i$) by a margin of at least $\gamma$. This ensures that the $k$ normality-calibrated views are not only compact within each cluster but also mutually discriminative, forming a structured partition of the normal feature space that is difficult for anomalous features to satisfy.

\subsection{Training and Inference}
\label{sec:fusion}
\textbf{Training.} Given a normal patch feature $\mathbf{z}_{n}$, following the SVD projection and normality transformation stated above, we obtain its $k$ calibrated features, which are then aggregated into a single normal feature by taking their element-wise mean, \ie, $\tilde{\mathbf{z}}_{n} = \frac{1}{k} \sum_{i=1}^k \tilde{\mathbf{z}}_{n}^{(i)}$.
Fusing over these $k$ principal directions yields a consensus feature that reinforces the normality components. These fused patch features are then concatenated and passed through a linear head to produce the module prediction $\tilde{\mathbf{Y}}_t^{\mathrm{ncl}}$. 
To retain the strong modeling capacity of the original TSFM while amplifying the normality properties, we fuse the module prediction with the frozen TSFM prediction $\hat{\mathbf{Y}}_t$ as follows:
\begin{equation}
	\tilde{\mathbf{Y}}_t = \boldsymbol{\lambda} \odot \tilde{\mathbf{Y}}_t^{\mathrm{ncl}} + (1 - \boldsymbol{\lambda}) \odot \hat{\mathbf{Y}}_t,
	\label{eq:fusion}
\end{equation}
where $\odot$ denotes channel-wise broadcasting multiplication, and $\boldsymbol{\lambda} \in (0, 1)^{D}$ is a learnable gated weight that captures the channel-specific reliance on the normality constraint versus the original TSFM output. This fusion mechanism is to increase reconstruction/forecasting error for abnormal observations through the normality-constrained component $\tilde{\mathbf{Y}}_t^{\mathrm{ncl}}$. The key insight is that for abnormal input, the normality constraint pushes them away from the pre-training support $\mathcal{S}_j$, causing NCL-enhanced TSFMs to produce larger errors. This error amplification effect is theoretically justified via \cref{error_amplification}, with the proof provided in \cref{app::error_amplification}.
\begin{theorem}[Error Amplification]
	\label{error_amplification}
	Let $\mathcal{F}^*$ be a frozen TSFM pre-trained on $\mathcal{S}_j$ with Lipschitz constant $L_{\mathcal{F}}$, 
	and let $(\mathbf{X}_a, \mathbf{Y}_a) \in \mathrm{Supp}(\mathcal{P}_a) \subseteq \mathcal{N}_{\rho}(\mathcal{S}_j)$ 
	be an anomalous pair.
	Under assumption with linear expansion rate $\omega(r) = \kappa r$ where 
	$\kappa \in (0, L_{\mathcal{F}})$, for the fused prediction $\tilde{\mathbf{Y}}_a$, we have:
	\begin{equation}
		\|\mathbf{Y}_a - \tilde{\mathbf{Y}}_a\|_2^2 > \|\mathbf{Y}_a - \hat{\mathbf{Y}}_a\|_2^2,
	\end{equation}
	demonstrating that output fusion amplifies the reconstruction or forecasting errors on anomalies.
\end{theorem}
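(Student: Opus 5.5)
The plan is to recast the fused prediction as a convex combination and compare the two errors directly. Write $\mathbf{e} := \mathbf{Y}_a - \hat{\mathbf{Y}}_a$ for the frozen-TSFM error and $\mathbf{g} := \tilde{\mathbf{Y}}_a^{\mathrm{ncl}} - \hat{\mathbf{Y}}_a$ for the displacement induced by the NCL branch. By \cref{eq:fusion},
\begin{equation*}
\mathbf{Y}_a - \tilde{\mathbf{Y}}_a = \mathbf{e} - \boldsymbol{\lambda}\odot \mathbf{g}.
\end{equation*}
Expanding the square gives
\begin{equation*}
\|\mathbf{Y}_a - \tilde{\mathbf{Y}}_a\|_2^2 - \|\mathbf{e}\|_2^2 = \|\boldsymbol{\lambda}\odot\mathbf{g}\|_2^2 - 2\langle \mathbf{e}, \boldsymbol{\lambda}\odot \mathbf{g}\rangle .
\end{equation*}
It therefore suffices to prove $\|\boldsymbol{\lambda}\odot\mathbf{g}\|_2 > 2\|\mathbf{e}\|_2$, since Cauchy--Schwarz then controls the cross term. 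A cleaner sufficient condition is $\|\boldsymbol{\lambda}\odot\mathbf{g}\|_2 > 2\|\mathbf{e}\|_2$ in magnitude, or a lower bound on $\|\mathbf{Y}_a - \tilde{\mathbf{Y}}_a\|_2$ via the reverse triangle inequality.

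\textbf{Bounding $\|\mathbf{e}\|_2$ from above.} Since $(\mathbf{X}_a,\mathbf{Y}_a)$ lies in $\mathcal{N}_\rho(\mathcal{S}_j)$, I pick a support point $(\mathbf{X}_s,\mathbf{Y}_s)\in\mathcal{S}_j$ within distance $\rho$. The pre-trained $\mathcal{F}^*$ fits its support, $\mathcal{F}^*(\mathbf{X}_s)=\mathbf{Y}_s$. The Lipschitz constant then gives
\begin{equation*}
\|\mathbf{e}\|_2 \le \|\mathbf{Y}_a-\mathbf{Y}_s\|_2 + L_{\mathcal{F}}\|\mathbf{X}_a-\mathbf{X}_s\|_2 \lesssim (1+L_{\mathcal{F}})\rho .
\end{equation*}
This formalises the claim that TSFMs fit anomalies well.

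\textbf{Bounding $\|\mathbf{g}\|_2$ from below.} Here I invoke the unstated assumption with expansion rate $\omega(r)=\kappa r$. I read it as follows: the normality constraint moves an input at distance $r$ from the normal training support away from $\mathcal{S}_j$, so the NCL output differs from the frozen output by at least $\omega(r)$. The reading is that PatchNT snaps the $\mathbf{v}_i$-components to the learned $\alpha_i$, by \cref{prop1}, and ConNC pulls features to the normal prototypes $\mathbf{c}_i$. For an anomaly, $\tilde{\mathbf{Y}}^{\mathrm{ncl}}_a$ therefore approximates a normal pattern, while $\hat{\mathbf{Y}}_a$ tracks the anomaly. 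This yields
\begin{equation*}
\|\mathbf{g}\|_2 \ge \kappa r_a,
\end{equation*}
where $r_a$ is the anomaly's deviation from normality. I then combine this with $\lambda_{\min}=\min_c\lambda_c>0$ to get $\|\boldsymbol{\lambda}\odot\mathbf{g}\|_2\ge\lambda_{\min}\kappa r_a$.

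\textbf{Main obstacle.} The difficulty is making the final comparison strict without an extra quantitative condition. For the argument to close, the assumption must supply either near-orthogonality, $\langle\mathbf{e},\boldsymbol{\lambda}\odot\mathbf{g}\rangle\le 0$, or a dominance condition relating $\kappa r_a\lambda_{\min}$ to $2(1+L_{\mathcal{F}})\rho$. The role of $\kappa<L_{\mathcal{F}}$ is presumably to guarantee the NCL branch does not collapse back onto $\mathcal{F}^*$.

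My first attempt is to show the cross term is nonpositive. The NCL branch pushes the output toward the normal manifold, while $\mathbf{Y}_a$ lies off it, so $\mathbf{g}$ points away from $\mathbf{Y}_a$ relative to $\hat{\mathbf{Y}}_a$. With that, the result follows immediately:
\begin{equation*}
\|\mathbf{Y}_a-\tilde{\mathbf{Y}}_a\|^2 \ge \|\mathbf{e}\|^2 + \|\boldsymbol{\lambda}\odot\mathbf{g}\|^2 > \|\mathbf{e}\|^2,
\end{equation*}
whenever $\mathbf{g}\neq 0$.

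If the assumption does not give this sign, I would fall back on the magnitude comparison above. The gap left in that case is an explicit condition, which I would state explicitly, requiring $\lambda_{\min}\kappa r_a$ to exceed $2(1+L_{\mathcal{F}})\rho$.
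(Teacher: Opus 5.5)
Your skeleton matches the paper's. You bound the frozen error by $B_j=\epsilon_j+\delta+(1+L_{\mathcal F})\rho$ via a nearby support point and the Lipschitz bound. Your exact-fit $\mathcal F^*(\mathbf X_s)=\mathbf Y_s$ corresponds to the paper's final step of discarding $\epsilon_j$ and $\delta$; it is cleaner to carry $B_j$ and drop them at the end. You then lower-bound the NCL-induced displacement by a linear expansion, apply the reverse triangle inequality, and require the displacement to exceed twice the frozen error. The gap is in how you close the strict inequality.

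The cross-term route does not work. Pushing the prediction ``toward normality'' gives no control on the sign of $\langle\mathbf e,\boldsymbol\lambda\odot\mathbf g\rangle$. The residual $\mathbf e$ is small, and its direction depends on details such as whether $\hat{\mathbf Y}_a$ over- or undershoots the anomaly. Your fallback then leaves $\lambda_{\min}\kappa r_a>2(1+L_{\mathcal F})\rho$ as an unexplained extra hypothesis. This comes from attaching the assumption to the wrong object. In the paper the expansion is a property of the \emph{frozen model}, not of the NCL branch: $\omega(\|\mathbf X_1-\mathbf X_2\|_2)\le\|\mathcal F^*(\mathbf X_1)-\mathcal F^*(\mathbf X_2)\|_2\le L_{\mathcal F}\|\mathbf X_1-\mathbf X_2\|_2$ on $\mathcal N_{\rho+\tau}(\mathcal S_j)\setminus\mathcal N_\rho(\mathcal S_j)$. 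That is why $\kappa<L_{\mathcal F}$: it is consistency between a lower and an upper Lipschitz bound, not a non-collapse condition. Under your reading, $\mathbf g$ is not a difference of $\mathcal F^*$-values, so that constraint has no role.

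The missing idea is the displacement $\tau\gg\rho$, which the paper builds into the hypotheses. The paper models the fused prediction as $\mathcal F^*(\tilde{\mathbf X}_a)$. Here $\tilde{\mathbf X}_a$ is the $\lambda$-fusion of the original input with the input reconstructed from the calibrated patch features. It then posits $\|\tilde{\mathbf X}_a-\widetilde{\mathbf X}\|_2\ge\|\mathbf X_a-\widetilde{\mathbf X}\|_2+\tau$, where $\widetilde{\mathbf X}$ is the support point within $\rho$ of $\mathbf X_a$. The reverse triangle inequality and the expansion give $\|\mathbf Y_a-\tilde{\mathbf Y}_a\|_2\ge\kappa(\rho+\tau)-B_j$. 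After discarding $\epsilon_j$ and $\delta$, the error increase is therefore at least $\kappa(\rho+\tau)-(2+2L_{\mathcal F})\rho$. This is positive once $\kappa>(2+2L_{\mathcal F})/(1+\tau/\rho)$, a threshold that vanishes as $\tau/\rho$ grows.

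So your $r_a$ is this displacement, and your ``extra condition'' is exactly what $\tau\gg\rho$ delivers. The paper writes the displacement as $\rho+\tau$. Read literally, the hypothesis only gives $\|\tilde{\mathbf X}_a-\widetilde{\mathbf X}\|_2\ge\tau$, since $\|\mathbf X_a-\widetilde{\mathbf X}\|_2\le\rho$. That changes the threshold to $(2+2L_{\mathcal F})\rho/\tau$ and nothing else.

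Your output-space bookkeeping is not wrong, and it has one advantage. It makes explicit that amplification needs $\lambda_{\min}$ bounded away from $0$, which the paper hides inside $\tilde{\mathbf X}_a$. But to match the stated theorem you should derive the displacement lower bound from an input shift $\tau\gg\rho$ and the non-contraction of $\mathcal F^*$. Postulating $\|\mathbf g\|_2\ge\kappa r_a$ and then adding dominance by hand does not do this.
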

Accordingly, the normality transformation is optimized by minimizing the following training loss that combines the squared error (for either reconstruction or forecasting) loss with the contrastive normality constraint:
\begin{equation}
	\mathcal{L} = \mathcal{L}_{\mathrm{Pred}} + \mathcal{L}_{\mathrm{ConNC}},
	\label{eq:total_loss}
\end{equation}
where $\mathcal{L}_{\mathrm{Pred}}$ is the squared error between the fused prediction $\tilde{\mathbf{Y}}_t$ and the ground-truth target $\mathbf{Y}_t$. During the training process, only the parameters of the NCL module are updated, while all parameters of the TSFM $\mathcal{F}$ remain frozen. We also conduct detailed theoretical analysis in \cref{regularization} to demonstrate that, in case there is abnormal information in the training normal patch features, $\mathcal{L}_{\mathrm{ConNC}}$ can serve as a regularization term that prevents the NCL module from overfitting to it.

\textbf{Inference.} During inference, the anomaly score consists of two components. One is the amplified reconstruction/forecasting squared error between $\tilde{\mathbf{Y}}_t$ and $\mathbf{Y}_t$. Another anomaly scoring component is to leverage the misalignment between each test sample's normality-calibrated views and the centers. For patch features $\mathbf{Z}_{\mathrm{test}}=\{\mathbf{z}_{1}, \mathbf{z}_{2}, \ldots, \mathbf{z}_{M}\}$ from a test sample, this misalignment-based score for the sample is defined as the maximum over all $M$ patches of the average negative log-probability of each view being assigned to its own center:
\begin{equation}
	s = \mathop{{\max}}_{m \leq M}-\frac{1}{k}\sum_{i=1}^k\log \frac{e^{-\|\tilde{\mathbf{z}}_{m}^{(i)} - \mathbf{c}^{(i)}\|_2}}{\sum_{j=1}^k e^{-\|\tilde{\mathbf{z}}_{m}^{(i)} - \mathbf{c}^{(j)}\|_2}}.
\end{equation}
For a normal sample, each calibrated view $\tilde{\mathbf{z}}_{m}^{(i)}$ lies close to its corresponding center $\mathbf{c}^{(i)}$. For an abnormal one, the normality transformation cannot align the anomalous feature to center equally well. The maximum over all $M$ patches is used as the sample score, as the top anomalous patch conveys the most information about abnormality.

\section{Experiments}
\label{experiments}
\subsection{Experimental Setup}

\textbf{Datasets.} To evaluate the effectiveness of NCL, we conduct comprehensive experiments on ten TSAD datasets, including five univariate datasets—$\textbf{IOPS}$~\citep{ren2019time}, $\textbf{UCR}$~\citep{wu2021current}, $\textbf{NAB}$~\citep{ahmad2017unsupervised}, $\textbf{MSL}$~\citep{benecki2021detecting}, and $\textbf{SMAP}$~\citep{benecki2021detecting}—and five multivariate datasets—$\textbf{PSM}$~\citep{abdulaal2021practical}, $\textbf{SMD}$~\citep{su2019robust}, $\textbf{SWAT}$~\citep{mathur2016swat}, $\textbf{PUMP}$~\citep{feng2021time}, and $\textbf{NYC}$~\citep{cui2016comparative}. Details about the datasets are included in \cref{app:datasets}.

\textbf{Baselines.} We choose both forecasting-based and reconstruction-based TSFMs for anomaly detection as backbones. Forecasting-based TSFMs include Aurora~\citep{wu2026aurora}, ROSE~\citep{wang2025towards}, Chronos~\citep{ansari2024chronos}, and TTM~\citep{ekambaram2024tiny}. For reconstruction-based TSFMs, MOMENT~\citep{goswami2024moment} and UniTS~\citep{gao2024units} are selected. Additionally, we also include TSPulse~\citep{ekambaram2026tspulse} and SEMPO~\citep{he2025sempo}, which employ both forecasting and reconstruction. Other popular TSFMs, such as Time-MoE~\citep{shi2025time} and TimesFM~\citep{das2024decoder}, are also compared in \cref{eff_ana} and \cref{compare_with_tsad}. NCL-augmented TSFMs are also compared with deep TSAD methods, simple one-liner baselines, and TSAD-specific foundation models to further demonstrate the advances made by our NCL, with the results presented in \cref{compare_with_classic} and \ref{compare_with_tsad}. The detailed descriptions of these baselines can be found in \cref{app:baselines}.

\textbf{Implementation Details.} NCL is trained for five epochs with only five normal samples randomly selected from the training set (see \cref{app:utility_few_shot} for other few-shot sizes). All reported results are averaged over three independent runs. More implementation details are given in \cref{app:implementation}. 

\textbf{Evaluation Metrics.}
Recent works have highlighted that point-adjusted metrics may exhibit severe overestimation issues~\citep{liu2024elephant,qiu2025tab}, whereas VUS-PR~\citep{paparrizos2022volume} emerges as a more robust and fair evaluation measure. Therefore, we adopt VUS-PR as our primary metric. Several other PR and ROC metrics are reported in \cref{all_results}.

\subsection{Main Results}

\noindent\textbf{Enabling Anomaly Detection of Diverse TSFMs.}
\input{tab/main_results.tex}
\cref{tab:main} presents the results of eight different TSFMs with/without using our proposed NCL across ten TSAD datasets. The full results (including several PR and ROC metric results with the standard deviations) are provided in \cref{all_results}. It is clear that NCL consistently and significantly improves the anomaly detection performance of diverse TSFMs, with an average improvement ratio of 41.49\% across all backbone models. This dramatic enhancement demonstrates NCL's ability to address the fundamental limitation of TSFMs in distinguishing normal and abnormal patterns.

\noindent\textbf{Comparison with Other TSFM Adaptation Methods.}
To demonstrate the superiority of NCL over alternative adaptation strategies, we compare it against two popular TSFM adaptation methods, including LoRA~\citep{hu2022lora} and CoRA~\citep{cheng2026cora}. As shown in \cref{fig:performance_compare}, NCL consistently outperforms these methods across three representative TSFMs. LoRA can further utilize the limited normal samples to fine-tune TSFMs, while CoRA can enhance the modeling ability across channels; however, neither of them has an explicit normality constraint design for anomaly detection. In contrast, NCL constructs a sample-efficient normality space and constrains the features to align with its normality manifolds, thereby making abnormal deviations stand out more clearly. 
\begin{figure*}[t]
	\centering
	\includegraphics[width=0.93\linewidth]{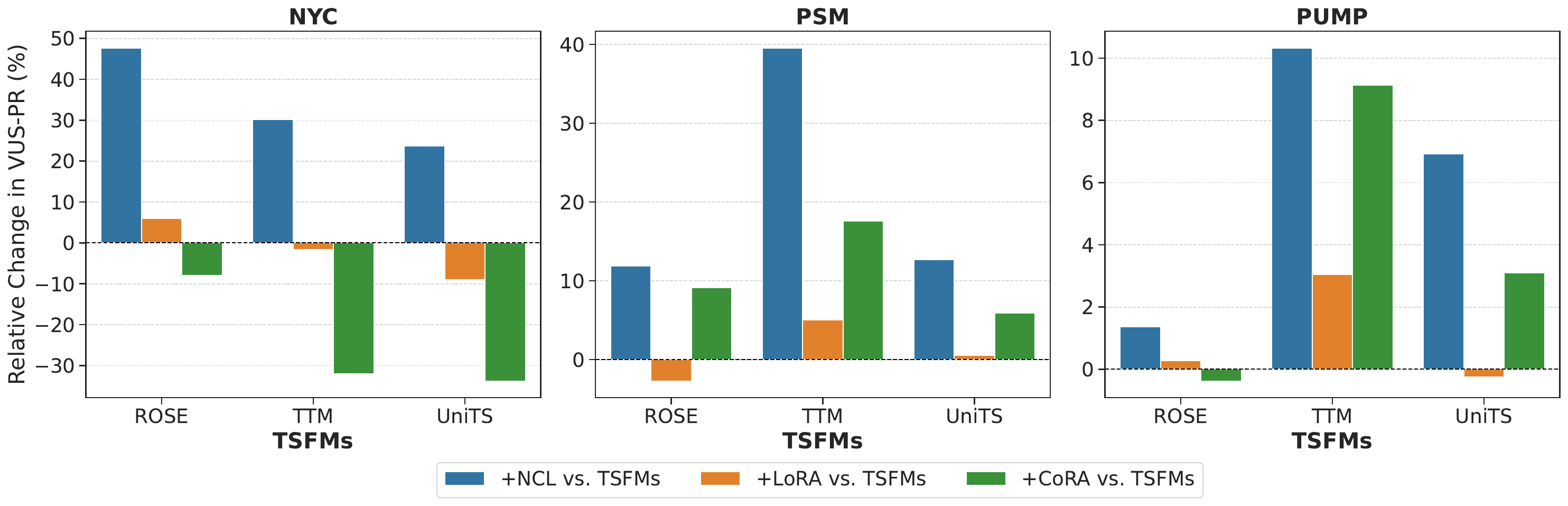}
	\caption{Performance changes relative to vanilla TSFMs for three different TSFMs.}
	\label{fig:performance_compare}
\end{figure*}

\subsection{Ablation Studies}
To validate the contribution of each component in NCL, we conduct comprehensive ablation studies by systematically removing or replacing its individual components. \cref{tab:ablation_studies} presents the results on IOPS and PSM datasets with three representative TSFMs (ROSE, TMM, UniTS).

\noindent\textbf{Normality Grounded by NormSpace.} As a NCL variant that uses only NormSpace,
relying on the top-$k$ singular vectors to perform reconstruction-based anomaly detection directly, $\text{NCL}_{\text{N}}$ gains consistent improvement compared to all three vanilla TSFMs. This indicates that, although only a few patch features are used, NormSpace can achieve a strong sample-efficient normality grounding that effectively narrows down the broad pattern space of TSFMs onto the normality space.
This also confirms Theorems \ref{thm:few_shot_subspace} and \ref{theorem1} in \cref{app:the_analysis}. Its significance can also be observed when \textbf{(i)} comparing $\text{NCL}_{\text{P}}$ to  $\text{NCL}_{\text{NP}}$ or \textbf{(ii)} comparing $\text{NCL}_{\text{PC}}$ to the full NCL method $\text{NCL}_{\text{NPC}}$.
\begin{wraptable}{r}{0.6\columnwidth}
\input{tab/ablation_results.tex}
\vspace{-2mm}
\end{wraptable}
\noindent\textbf{Learnable Transformation to Calibrate Features via PatchNT.} Adding the learnable transformation in PatchNT on top of $\text{NCL}_{\text{N}}$, $\text{NCL}_{\text{NP}}$ further effectively calibrates the features in the normality space across all three TSFMs on both datasets. Even without using NormSpace, the adaptive transformation in PatchNT works well for the calibration as highlighted by the difference between $\text{NCL}_{\text{C}}$ (which uses random affine transformation to replace PatchNT) and $\text{NCL}_{\text{PC}}$. Besides, the features transformed solely by $\text{NCL}_{\text{P}}$ are also better than the vanilla TSFM features. Collectively, these results systematically justify the effectiveness of our learnable transformation design.

\noindent\textbf{Sharpening the Feature Normality Further via ConNC.} 
Adding ConNC on top of $\text{NCL}_{\text{NP}}$, our full method $\text{NCL}_{\text{NPC}}$ gains consistently large improvements across all cases, showcasing the importance of the contrastive constraint for enhancing the discriminability of the calibrated features. This constraint can also work well solely (\ie, $\text{NCL}_{\text{C}}$) compared to the vanilla TSFMs. Overall, these well justify the role of ConNC in guiding PatchNT to further sharpen the normality within the subspace.

\subsection{Efficiency Analysis}\label{eff_ana}

NCL enjoys not only substantial detection improvements but also parameter and training efficiency.
\cref{fig:mean_compare} compares its training costs with LoRA, CoRA, and full fine-tuning TSFMs, with the results averaged over the five multivariate datasets. Our key observations are as follows.
\begin{wrapfigure}{r}{0.5\textwidth}
	\centering
	\includegraphics[width=1.0\linewidth]{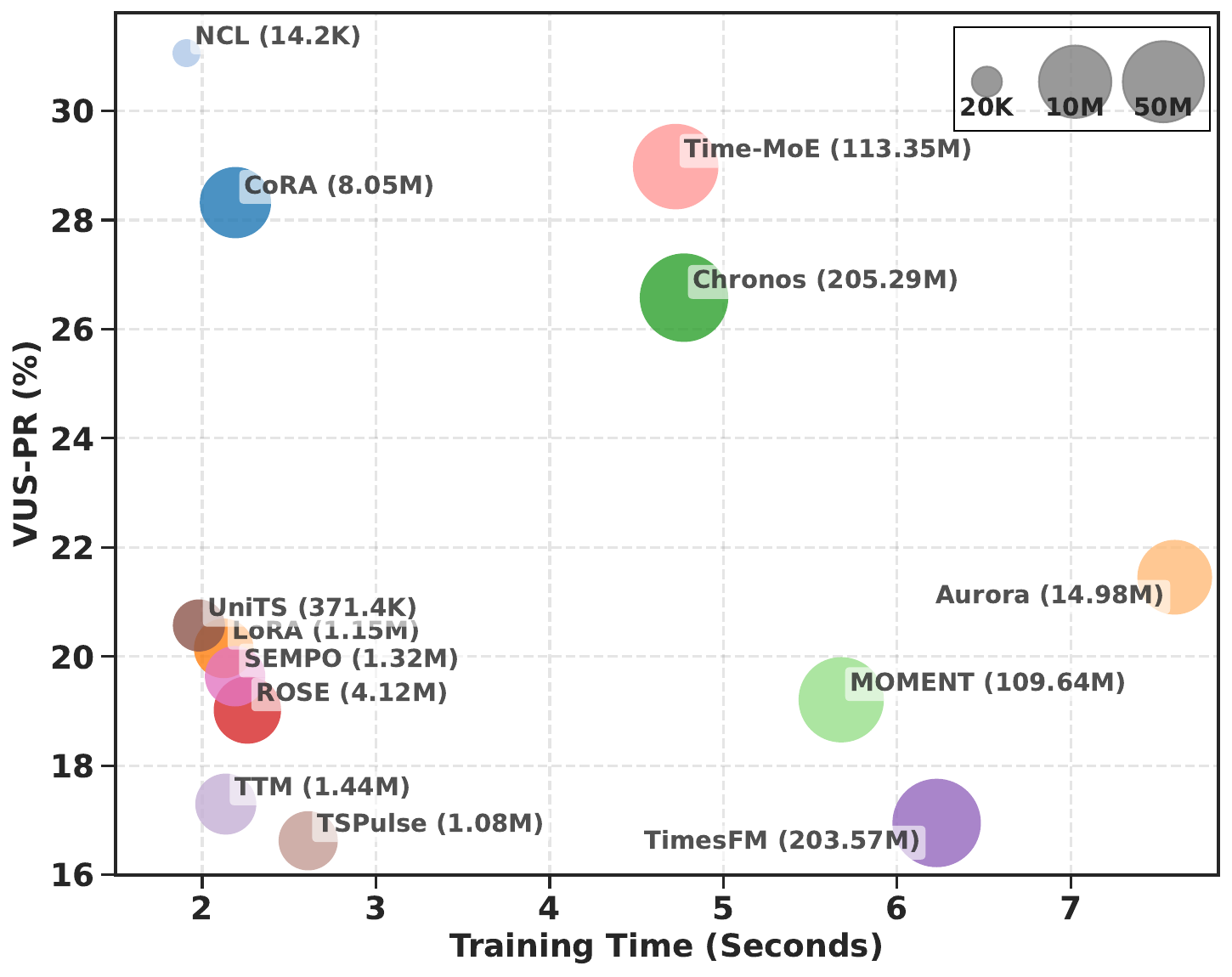}
	\caption{Effectiveness vs. efficiency trade-off.}
	\label{fig:mean_compare}
\vspace{-6mm}
\end{wrapfigure}
\textbf{Parameter Efficiency.} NCL requires an average of only \textbf{14.2K} trainable parameters across different TSFMs, which is significantly fewer than both the parameter-efficient fine-tuning methods like LoRA (1.15M) and CoRA (8.05M) and the full fine-tuning (from 371.4K to 205.29M).

\textbf{Training Speed.} NCL exhibits remarkably fast training time, completing its full training within around \textbf{2 seconds}. The efficiency stems from two main factors: (i) NormSpace requires only few-shot samples to construct an informative normality space and (ii) only lightweight projection heads and fusion weights are learned while the TSFM remains frozen.

Overall, represented by a small bubble in the upper-left of the plot (denoting a lower number of training parameters), NCL achieves substantially better detection effectiveness (higher on the y-axis) while also having a fast training time (further left on the x-axis). In contrast, most competitors concentrate in the lower-right region, requiring longer training times yet inferior VUS-PR. More detailed efficiency comparisons are in \cref{app:efficiency_comparison}.
\section{Conclusion}
We propose NCL, a lightweight plug-and-play framework that empowers frozen TSFMs to achieve accurate anomaly detection. NCL is implemented with NormSpace, PatchNT, and ConNC. NormSpace constructs a normality subspace, while PatchNT adaptively steers patch features toward normality within this subspace, guided by ConNC to encourage the formation of compact yet discriminative normality manifolds. 
We justify the effectiveness of NCL in constraining TSFMs into the normality space through both theoretical and empirical evidence. 
These results imply that further specialization/adaptation of pre-trained TSFMs is required for the downstream anomaly detection task, and NCL provides a generalizable framework and a new perspective on tackling this issue.

\subsection*{AI use statement}
In this work, we used generative AI tools for language polishing of the paper. We have not used generative AI tools for formulating mathematical claims, assisting in the writing of proofs, proposing or refining hypotheses, or implementing methods; and the remaining required-disclosure tasks are not applicable to this work. We have reviewed all AI-assisted work. Specifically, all AI-assisted text was checked and revised by the authors to ensure fidelity to the intended meaning; the AI tools were used solely to improve grammar, clarity, and fluency, and did not generate new scientific content. No AI-generated text was incorporated without author verification. We take responsibility for the final content of this work, including text, claims or artifacts produced with the aid of generative AI.

\subsection*{Ethics statement}
This work provides a methodological contribution and relies exclusively on publicly available, non-proprietary datasets used under their original licenses. It involves neither human subjects nor sensitive data. We believe our results advance the understanding of specialized foundation models and time series anomaly detection while raising no additional ethical concerns.

\subsection*{Reproducibility statement}
We provide the hyperparameter configurations and implementation details for our experiments in \cref{detail_setup}. The core algorithmic component of our method is described in \cref{app:algorithms}. The theoretical results presented in this paper, including their proofs and assumptions, are detailed in \cref{app:TSFM_AD}, \ref{app:the_analysis}, and \ref{app:prop1}. The source code has been submitted as supplemental material and will be made publicly available upon acceptance of this paper.

\bibliography{iclr2027_conference}
\bibliographystyle{iclr2027_conference}

\newpage
\appendix

\section{Theoretical Analysis of Applying TSFMs to Anomaly Detection}
\label{app:TSFM_AD}
In \cref{intro}, we emphasize that both reconstruction-based and forecasting-based TSFMs can effectively reconstruct or forecast most of the abnormal observations, meaning that anomalies do not necessarily incur larger reconstruction or forecasting errors. We attribute this issue to two aspects: \textbf{(i)} TSFMs are typically pre-trained on large-scale time-series datasets encompassing diverse patterns and domains. Consequently, observations that constitute anomalies in a specific application domain may represent normal patterns in other domains within the pre-training corpus, enabling the model to reconstruct or forecast them with high fidelity; and \textbf{(ii)} the large parameter capacity of TSFMs allows them to further capture and internalize a wide range of general temporal patterns. 

These two factors jointly lead to the extremely broad pattern modeling ability of TSFMs, making anomalies not inherently difficult to reconstruct or forecast. This undermines the fundamental assumption that anomalies should yield higher reconstruction or forecasting errors, thereby limiting the effectiveness of applying TSFMs to anomaly detection.
In order to further understand the influence of these two aspects on the broad pattern modeling ability of TSFMs during the pre-training process, we conducted a detailed theoretical analysis in \cref{app:pre_train}.
\subsection{Pre-Training and Anomaly Detection on the TSFMs}
\label{app:pre_train}
Let $\{\mathcal{P}_i\}_{i=1}^{s}$ represent the collection of various time series distributions in a large-scale corpus. Then we know the empirical distribution of the entire pre-training data is $\mathcal{P}_{pre}=\sum_{i=1}^s \eta_i \mathcal{P}_i$, where $\sum_{i=1}^s \eta_i=1$. Pre-training a reconstruction-based or forecasting-based TSFM $\mathcal{F}$ aims to minimize the following objective:
\begin{equation}
	\mathcal{L}_{\mathrm{Pred}}(\mathcal{F}(\cdot)) = \sum_{i=1}^s \eta_i \mathbb{E}_{(\mathbf{X},\mathbf{Y})\sim {\mathcal{P}_i}}\left[ \|\mathbf{Y} - \mathcal{F}(\mathbf{X}) \|_2^2 \right].
	\label{eq:objective}
\end{equation}
By the universal approximation theorem~\citep{hornik1991approximation}, for any continuous function $f$ and any $\epsilon > 0$, there exists a neural network $\mathcal{F}$ with sufficient capacity such that
\begin{equation}
	\sup_{\mathbf{X} \in \mathcal{K}} \|\mathcal{F}(\mathbf{X}) - f(\mathbf{X})\|_2^2 \leq \epsilon,
\end{equation}
where $\mathcal{K}$ is a compact subset. 
For each distribution $\mathcal{P}_i$ in the pre-training corpus, let $f_i^*(\cdot)$ denote the optimal predictor that minimizes the expected squared error:
\begin{equation}
	f_i^*(\mathbf{X}) = \mathbb{E}_{\mathbf{Y}|\mathbf{X} \sim \mathcal{P}_i}[\mathbf{Y}|\mathbf{X}].
\end{equation}
Given sufficient model capacity and training data, the pre-trained TSFM $\mathcal{F}$ approximates a weighted combination of these optimal predictors. Specifically, minimizing \cref{eq:objective} yields:
\begin{equation}
	\mathcal{F}^*(\mathbf{X}) \approx \sum_{i=1}^s \eta_i f_i^*(\mathbf{X}) = \mathbb{E}_{\mathbf{Y}|\mathbf{X} \sim \mathcal{P}_{pre}}[\mathbf{Y}|\mathbf{X}],
	\label{eq:optimal_predictor}
\end{equation}
where the approximation becomes exact as model capacity increases.

Then we consider a downstream anomaly detection task on a target distribution $\mathcal{P}_{t}$, where anomalies correspond to observations drawn from an anomalous distribution $\mathcal{P}_{a}$. The fundamental challenge arises when the support of $\mathcal{P}_{a}$ overlaps with or lies close to the support of pre-training distributions.
Let $\mathcal{S}_j = \mathrm{Supp}(\mathcal{P}_j)$ denote the support of the $j$-th pre-training distribution in the input-output space. For $\rho \geq 0$, we define its $\rho$-neighborhood as:
\begin{equation}
	\mathcal{N}_{\rho}(\mathcal{S}_j) = \left\{(\mathbf{X}, \mathbf{Y}) : \inf_{(\widetilde{\mathbf{X}}, \widetilde{\mathbf{Y}}) \in \mathcal{S}_j} \left\|(\mathbf{X}, \mathbf{Y}) - (\widetilde{\mathbf{X}}, \widetilde{\mathbf{Y}})\right\|_2 \leq \rho\right\}.
\end{equation}
Suppose the support of the anomalous distribution is contained within this neighborhood, \ie, $\mathrm{Supp}(\mathcal{P}_{a}) \subseteq \mathcal{N}_{\rho}(\mathcal{S}_j)$, for some $j \in \{1, \ldots, s\}$ and sufficiently small $\rho > 0$. This condition means that, although these observations are abnormal in the downstream domain, their temporal patterns are close to patterns that appeared in the pre-training corpus.
Assume the pre-trained model achieves uniformly small prediction error on $\mathcal{S}_j$:
\begin{equation}
	\sup_{(\mathbf{X}, \mathbf{Y}) \in \mathcal{S}_j} \|\mathbf{Y} - \mathcal{F}^*(\mathbf{X})\|_2 \leq \sup_{(\mathbf{X}, \mathbf{Y}) \in \mathcal{S}_j} \|\mathbf{Y} - f^*(\mathbf{X})\|_2 + \|f^*(\mathbf{X}) - \mathcal{F}^*(\mathbf{X})\|_2 \leq \delta + \epsilon_j,
	\label{eq:uniform_error}
\end{equation}
where $\delta$ is the irreducible uncertainty error, and $\epsilon_j$ is the approximation error.
Furthermore, assume $\mathcal{F}^*$ is $L_{\mathcal{F}}$-Lipschitz continuous on $\mathcal{N}_{\rho}(\mathcal{S}_j)$:
\begin{equation}
	\|\mathcal{F}^*(\mathbf{X}_1) - \mathcal{F}^*(\mathbf{X}_2)\|_2 \leq L_{\mathcal{F}} \|\mathbf{X}_1 - \mathbf{X}_2\|_2.
	\label{eq:lipschitz}
\end{equation}
For any anomalous pair $(\mathbf{X}_{a}, \mathbf{Y}_{a}) \in \mathrm{Supp}(\mathcal{P}_{a})$, since $\mathrm{Supp}(\mathcal{P}_{a}) \subseteq \mathcal{N}_{\rho}(\mathcal{S}_j)$, there exists $(\widetilde{\mathbf{X}}, \widetilde{\mathbf{Y}}) \in \mathcal{S}_j$ such that
\begin{equation}
	\|\mathbf{X}_{a} - \widetilde{\mathbf{X}}\|_2 \leq \rho, \quad \|\mathbf{Y}_{a} - \widetilde{\mathbf{Y}}\|_2 \leq \rho.
	\label{supp}
\end{equation}
By the triangle inequality:
\begin{align}
	\|\mathbf{Y}_{a} - \mathcal{F}^*(\mathbf{X}_{a})\|_2 
	&\leq \|\mathbf{Y}_{a} - \widetilde{\mathbf{Y}}\|_2 + \|\widetilde{\mathbf{Y}} - \mathcal{F}^*(\widetilde{\mathbf{X}})\|_2 + \|\mathcal{F}^*(\widetilde{\mathbf{X}}) - \mathcal{F}^*(\mathbf{X}_{a})\|_2 \nonumber\\
	&\leq \rho + \delta + \epsilon_j + L_{\mathcal{F}}\rho \nonumber\\
	&= \epsilon_j + \delta + (1 + L_{\mathcal{F}})\rho,
	\label{eq:anomaly_error_bound}
\end{align}
where the second inequality follows from: (i) $\|\mathbf{Y}_{a} - \widetilde{\mathbf{Y}}\|_2 \leq \rho$ by Inequation (\ref{supp}); (ii) $\|\widetilde{\mathbf{Y}} - \mathcal{F}^*(\widetilde{\mathbf{X}})\|_2 \leq \delta + \epsilon_j$ since $(\widetilde{\mathbf{X}}, \widetilde{\mathbf{Y}}) \in \mathcal{S}_j$ and the uniform error bound in Inequation (\ref{eq:uniform_error}) holds for all points in $\mathcal{S}_j$; and (iii) $\|\mathcal{F}^*(\widetilde{\mathbf{X}}) - \mathcal{F}^*(\mathbf{X}_{a})\|_2 \leq L_{\mathcal{F}}\|\widetilde{\mathbf{X}} - \mathbf{X}_{a}\|_2 \leq L_{\mathcal{F}}\rho$ by the Lipschitz continuity in Inequation (\ref{eq:lipschitz}).

Consequently, the squared error satisfies:
\begin{equation}
	\|\mathbf{Y}_{a} - \mathcal{F}^*(\mathbf{X}_{a})\|_2^2 \leq \left[\epsilon_j + \delta + (1 + L_{\mathcal{F}})\rho\right]^2.
	\label{eq:squared_error_bound}
\end{equation}
Taking expectations over $\mathcal{P}_{a}$:
\begin{equation}
	\mathbb{E}_{(\mathbf{X}, \mathbf{Y}) \sim \mathcal{P}_{a}}\left[\|\mathbf{Y} - \mathcal{F}^*(\mathbf{X})\|_2^2\right] \leq \left[\epsilon_j + \delta + (1 + L_{\mathcal{F}})\rho\right]^2.
	\label{eq:expected_anomaly_error}
\end{equation}

This reveals the over-generalization phenomenon. In non-extremely unstable scenarios, $\delta$ is usually very small. When both $\epsilon_j$ and $\rho$ are small, anomalies that fall near the support of pre-training data can also be reconstructed or forecasted with high accuracy, making reconstruction or forecasting error an unreliable anomaly score.
\subsection{Theoretical Justification of Normality Constraint Learning}
\label{app:ncl_theory}
Having established that TSFMs suffer from over-generalization in \cref{app:pre_train}, we now demonstrate how the proposed normality constraint learning mitigates this issue through two complementary mechanisms: \textbf{(i)} error amplification on anomalies via output fusion, and \textbf{(ii)} regularization against anomaly overfitting via contrastive normality constraint.

\subsubsection{Error Amplification via Output Fusion (Proof of \cref{error_amplification})}
\label{app::error_amplification}
\begin{proof}
Recall from \cref{eq:fusion} that the final prediction is formed by:
\begin{equation}
	\tilde{\mathbf{Y}}_t = \boldsymbol{\lambda} \odot \tilde{\mathbf{Y}}_t^{\mathrm{ncl}} + (1 - \boldsymbol{\lambda}) \odot \hat{\mathbf{Y}}_t,
\end{equation}
where $\tilde{\mathbf{Y}}_t^{\mathrm{ncl}}$ is the normality-constrained prediction and $\hat{\mathbf{Y}}_t = \mathcal{F}^*(\mathbf{X}_t)$ is the frozen TSFM prediction. For notational simplicity, we denote the channel-wise fusion weight as $\lambda \in (0, 1)$ for a single-channel analysis.

For a normal pair $(\mathbf{X}_n, \mathbf{Y}_n) \in \mathrm{Supp}(\mathcal{P}_t)$, the normality constraint learning module is trained to minimize $\mathcal{L}_{\mathrm{Pred}}$ in \cref{eq:total_loss}. Since the transformation $\tilde{\mathbf{z}}_{n}^{(i)} = \mathbf{W}_i \mathbf{z}_{n} + \mathbf{b}_i$ projects patch features onto the normal manifold spanned by $\{\mathbf{v}_i\}_{i=1}^{k}$, the module prediction $\tilde{\mathbf{Y}}_n^{\mathrm{ncl}}$ approximates the normal conditional expectation:
\begin{equation}
	\tilde{\mathbf{Y}}_n^{\mathrm{ncl}} \approx \mathbb{E}_{\mathbf{Y}|\mathbf{X}_n \sim \mathcal{P}_t}[\mathbf{Y}|\mathbf{X}_n] = f_t^*(\mathbf{X}_n).
\end{equation}
Assuming both $\hat{\mathbf{Y}}_n$ and $\tilde{\mathbf{Y}}_n^{\mathrm{ncl}}$ achieve small errors on normal data:
\begin{equation}
	\|\mathbf{Y}_n - \hat{\mathbf{Y}}_n\|_2 \leq \epsilon_{\mathcal{F}}, \quad \|\mathbf{Y}_n - \tilde{\mathbf{Y}}_n^{\mathrm{ncl}}\|_2 \leq \epsilon_{\mathrm{ncl}},
\end{equation}
the fused prediction error satisfies:
\begin{align}
	\|\mathbf{Y}_n - \tilde{\mathbf{Y}}_n\|_2 
	&= \|\mathbf{Y}_n - \lambda \tilde{\mathbf{Y}}_n^{\mathrm{ncl}} - (1-\lambda) \hat{\mathbf{Y}}_n\|_2 \nonumber\\
	&\leq \lambda \|\mathbf{Y}_n - \tilde{\mathbf{Y}}_n^{\mathrm{ncl}}\|_2 + (1-\lambda) \|\mathbf{Y}_n - \hat{\mathbf{Y}}_n\|_2 \nonumber\\
	&\leq \lambda \epsilon_{\mathrm{ncl}} + (1-\lambda) \epsilon_{\mathcal{F}}.
	\label{eq:normal_error}
\end{align}
Thus, the fused prediction maintains low error on normal data.

For an anomalous pair $(\mathbf{X}_a, \mathbf{Y}_a) \in \mathrm{Supp}(\mathcal{P}_a)$, the frozen TSFM prediction $\hat{\mathbf{Y}}_a$ achieves a small error due to over-generalization (Inequation (\ref{eq:anomaly_error_bound})):
\begin{equation}
	\label{error}
	\|\mathbf{Y}_a - \hat{\mathbf{Y}}_a\|_2 \leq \epsilon_j + \delta + (1 + L_{\mathcal{F}})\rho.
\end{equation}
Recall that $\mathrm{Supp}(\mathcal{P}_a) \subseteq \mathcal{N}_{\rho}(\mathcal{S}_j)$, meaning abnormal features lie within a $\rho$-neighborhood of the $j$-th pre-training distribution support. The normality constraint learning module transforms abnormal patch features $\mathbf{z}_a$ via:
\begin{equation}
	\tilde{\mathbf{z}}_a^{(i)} = (\mathbf{I} - \beta_i \mathbf{v}_i \mathbf{v}_i^\top) \mathbf{z}_a + \alpha_i \beta_i \mathbf{v}_i,
\end{equation}
where $\{\mathbf{v}_i\}_{i=1}^k$ are the principal directions of the normal distribution from $\mathcal{P}_t$. Since $\mathcal{P}_t \neq \mathcal{P}_j$ (the target normal distribution differs from the pre-training distribution), these transformations calibrate $\mathbf{z}_a$ toward the target normality manifolds rather than the pre-training support $\mathcal{S}_j$.
Let $\tilde{\mathbf{X}}_a$ denote the $\lambda$-fusion of the original input and the input reconstructed from the transformed patch features $\{\tilde{\mathbf{z}}_a^{(i)}\}_{i=1}^k$, the transformation then shifts the abnormal input away from $\mathcal{N}_{\rho}(\mathcal{S}_j)$ toward the support of $\mathcal{P}_t$. Specifically, for any $(\widetilde{\mathbf{X}}, \widetilde{\mathbf{Y}}) \in \mathcal{S}_j$ that was within $\rho$ distance of $(\mathbf{X}_a, \mathbf{Y}_a)$, the transformed input satisfies:
\begin{equation}
	\|\tilde{\mathbf{X}}_a - \widetilde{\mathbf{X}}\|_2 \geq \|\mathbf{X}_a - \widetilde{\mathbf{X}}\|_2 + \tau,
\end{equation}
where $\tau \!\gg\! \rho$ quantifies the displacement induced by the normality transformation.
Consequently, the transformed abnormal input $\tilde{\mathbf{X}}_a$ now lies in a larger neighborhood $\mathcal{N}_{\rho + \tau}(\mathcal{S}_j)$ relative to its original position. Since the pre-trained model $\mathcal{F}^*$ was trained on $\mathcal{S}_j$ and achieves small error only within $\mathcal{N}_{\rho}(\mathcal{S}_j)$, the prediction error on the transformed input increases. Thus, we have Assumption \ref{asm:inverse_modulus}.
\begin{assumption}
	\label{asm:inverse_modulus}
	There exists a strictly increasing $\omega:[0,\infty)\to[0,\infty)$ with $\omega(0)=0$ such that
	for all $\mathbf{X}_1,\mathbf{X}_2\in\mathcal{N}_{\rho + \tau}(\mathcal{S}_j) \setminus \mathcal{N}_{\rho}(\mathcal{S}_j)$,
	\begin{equation}
		\omega\big(\|\mathbf{X}_1-\mathbf{X}_2\|_2\big) \;\le\; \|\mathcal{F}^*(\mathbf{X}_1)-\mathcal{F}^*(\mathbf{X}_2)\|_2
		\;\le\; L_{\mathcal{F}}\|\mathbf{X}_1-\mathbf{X}_2\|_2 .
	\end{equation}
\end{assumption}
Under Assumption \ref{asm:inverse_modulus}, applying the reverse triangle inequality with $\mathcal{F}^*(\widetilde{\mathbf{X}})$ where $\widetilde{\mathbf{X}}$ has $\rho$ distance from $\mathbf{X}_a$, we have:
\begin{align}
	\|\mathbf{Y}_a-\tilde{\mathbf{Y}}_a\|_2
	&= \|\mathbf{Y}_a-\mathcal{F}^*(\tilde{\mathbf{X}}_a)\|_2 \nonumber\\
	&\ge \|(\mathbf{Y}_a-\mathcal{F}^*(\widetilde{\mathbf{X}}))+(\mathcal{F}^*(\widetilde{\mathbf{X}})-\mathcal{F}^*(\tilde{\mathbf{X}}_a))\|_2 \nonumber\\
	&\ge \|\mathcal{F}^*(\tilde{\mathbf{X}}_a)-\mathcal{F}^*(\widetilde{\mathbf{X}})\|_2-\|\mathbf{Y}_a-\mathcal{F}^*(\widetilde{\mathbf{X}})\|_2.
\end{align}
Since $\widetilde{\mathbf{X}}\in \mathcal{S}_j$ and $\mathcal{F}^*$ perform well in $\mathcal{N}_{\rho}(\mathcal{S}_j)$, we can simplify the above as:
\begin{align}
	\|\mathbf{Y}_a-\tilde{\mathbf{Y}}_a^{\mathrm{ncl}}\|_2
	&\ge \|\mathcal{F}^*(\tilde{\mathbf{X}}_a)-\mathcal{F}^*(\widetilde{\mathbf{X}})\|_2-\|\mathbf{Y}_a-\mathcal{F}^*(\mathbf{X}_a)\|_2 \nonumber\\
	&\ge
	\omega\big(\|\tilde{\mathbf{X}}_a-\widetilde{\mathbf{X}}\|_2\big) - B_j \nonumber\\
	&\ge
	\omega\big(\rho + \tau\big) - B_j,
\end{align}
where $B_j:=\epsilon_j+\delta+(1+L_{\mathcal{F}})\rho$ according to Inequation (\ref{error}). By simultaneously discarding the uncertainty and approximation parts in both $\omega\big(\rho + \tau\big)$ and $B_j$, \ie, $\epsilon_j$ and $\delta$, we can approximately regard $\omega\big(\rho + \tau\big)$ as a linear function of $\rho + \tau$, \ie, $\omega\big(\rho + \tau\big)=\kappa \cdot \big(\rho + \tau\big)$, where $0<\kappa<L_{\mathcal{F}}$. Then we have:
\begin{align}
	\|\mathbf{Y}_a-\tilde{\mathbf{Y}}_a^{\mathrm{ncl}}\|_2 - \|\mathbf{Y}_a-\hat{\mathbf{Y}}_a\|_2
	&\ge \|\mathcal{F}^*(\tilde{\mathbf{X}}_a)-\mathcal{F}^*(\widetilde{\mathbf{X}})\|_2-2\|\mathbf{Y}_a-\mathcal{F}^*(\mathbf{X}_a)\|_2 \nonumber\\
	&\ge
	\omega\big(\rho + \tau\big) - 2B_j \nonumber\\
	&\approx \kappa \cdot \big(\rho + \tau\big) - (2+2L_{\mathcal{F}})\rho.
\end{align}
Since $\rho$ denotes a very small neighborhood radius and $\tau \!\gg\! \rho$, $\kappa \in (0, L_{\mathcal{F}})$ can easily satisfy $\kappa > \frac{2+2L_{\mathcal{F}}}{1+\frac{\tau}{\rho}}\rightarrow 0$, so that:
\begin{equation}
	\|\mathbf{Y}_a-\tilde{\mathbf{Y}}_a\|_2^2 > \|\mathbf{Y}_a-\hat{\mathbf{Y}}_a\|_2^2.
\end{equation}
This demonstrates that the output fusion mechanism directly amplifies the reconstruction or forecasting errors on anomalies while preserving low errors on normal data.
\end{proof}

\subsubsection{Regularization Against Anomaly Overfitting}
\label{regularization}
The contrastive normality constraint $\mathcal{L}_{\mathrm{ConNC}}$ in \cref{eq:loss_nc} also acts as a regularization term that prevents the NCL module from overfitting to abnormal patterns. Firstly, the contrastive loss enforces that the $k$ transformation views $\{\tilde{\mathbf{z}}_n^{(i)}\}_{i=1}^k$ form $k$ well-separated clusters in the normality subspace, with each cluster centered at $\mathbf{c}_i$. Formally, for any patch feature $\mathbf{z}_n$ from normal data:
\begin{equation}
	\|\tilde{\mathbf{z}}_n^{(i)} - \mathbf{c}_i\|_2 + \gamma \leq \min_{j \neq i} \|\tilde{\mathbf{z}}_n^{(i)} - \mathbf{c}_j\|_2.
\end{equation}
This margin-based separation ensures that normal features are tightly clustered around their respective centers, with inter-cluster distances exceeding intra-cluster distances by at least $\gamma$.

Then for anomalous patch features $\mathbf{z}_a$ derived from $(\mathbf{X}_a, \mathbf{Y}_a) \sim \mathcal{P}_a$, the transformation $\tilde{\mathbf{z}}_a^{(i)} = \mathbf{W}_i \mathbf{z}_a + \mathbf{b}_i$ attempts to project $\mathbf{z}_a$ onto the normality manifold. However, if $\mathbf{z}_a$ contains abnormal components orthogonal to all $\{\mathbf{v}_i\}_{i=1}^k$, the projection cannot fully eliminate these components. Let $\mathbf{z}_a = \mathbf{z}_a^{\parallel} + \mathbf{z}_a^{\perp}$, where $\mathbf{z}_a^{\parallel} \in \mathrm{span}(\{\mathbf{v}_i\}_{i=1}^k)$ and $\mathbf{z}_a^{\perp} \in \mathrm{span}(\{\mathbf{v}_i\}_{i=1}^k)^{\perp}$. The transformation only modifies $\mathbf{z}_a^{\parallel}$:
\begin{equation}
	\tilde{\mathbf{z}}_a^{(i)} = \mathbf{W}_i (\mathbf{z}_a^{\parallel} + \mathbf{z}_a^{\perp}) + \mathbf{b}_i = \mathbf{W}_i \mathbf{z}_a^{\parallel} + \mathbf{b}_i + \mathbf{z}_a^{\perp}.
\end{equation}
The residual component $\mathbf{z}_a^{\perp}$ causes $\tilde{\mathbf{z}}_a^{(i)}$ to deviate from the cluster centers:
\begin{equation}
	\|\tilde{\mathbf{z}}_a^{(i)} - \mathbf{c}_i\|_2 \geq \|\mathbf{z}_a^{\perp}\|_2 - \|\mathbf{W}_i \mathbf{z}_a^{\parallel} + \mathbf{b}_i - \mathbf{c}_i\|_2.
\end{equation}
If $\|\mathbf{z}_a^{\perp}\|_2$ is sufficiently large, $\tilde{\mathbf{z}}_a^{(i)}$ will not satisfy the margin constraint, leading to zero loss, which does not result in parameter updates and prevents the NCL from adapting to abnormal patterns.

\section{Theoretical Analysis for Normal Subspace Construction}
\label{app:the_analysis}
To illustrate the effectiveness of normal subspace construction, we conduct theoretical analysis from two aspects: \textbf{(i)} few-shot normal samples are sufficient to construct an informative normality space (\cref{few-shot-svd}); and \textbf{(ii)} top-$k$ truncation effectively captures the essential normal information while suppressing noise (\cref{top-k-truncation}). 

\subsection{Few-Shot SVD for Effective Estimation of the Normality Space}\label{few-shot-svd}
In this section, we demonstrate that the construction of an effective normal subspace does not require a large number of normal samples. Generally, normal patch features naturally lie in a low-dimensional subspace with a nontrivial spectral gap~\citep{yu2026understanding}. Then we utilize standard matrix concentration and the Davis--Kahan theorem~\citep{davis1970rotation} to prove that the leading singular subspace estimated from a finite set of normal patches converges to the population normal subspace, with estimation error
decreasing as $O(N^{-1/2})$. Hence, patch features from a small number of normal samples can already provide a reliable estimate of the normality
space.
\begin{theorem}[Few-Shot Normal Subspace Construction]
	\label{thm:few_shot_subspace}
	Let $\mathbf{z}_1,\ldots,\mathbf{z}_N\in\mathbb{R}^d$ be i.i.d. normal patch features with $\mathbb{E}[\mathbf{z}_i\mathbf{z}_i^\top]=\widehat{\mathbf{M}}$, where
	$\widehat{\mathbf{M}}$ has eigenvalues $\lambda_1\geq\lambda_2\geq\cdots\geq\lambda_d\geq 0$, and nontrivial eigengap $\Delta_k=\lambda_k-\lambda_{k+1}>0$. For sample estimation subspace $\mathbf{V}_k$ and population normal subspace $\widehat{\mathbf{V}}_k$, with probability at least $1-\delta$,
	\begin{equation}
		\left\|
		\sin\Theta(\mathbf{V}_k, \widehat{\mathbf{V}}_k)
		\right\|_2
		=O\left(\frac{1}{\sqrt{N}}\right),
	\end{equation}
	demonstrating that estimation error decreases as $O(1/\sqrt{N})$. Therefore, only a relatively small number of normal samples are sufficient to construct an informative normality space.
\end{theorem}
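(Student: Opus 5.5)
The plan is to reduce the subspace-angle statement to a spectral-norm perturbation bound on the second-moment matrix, and then control that perturbation by matrix concentration.

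The right singular vectors of $\mathbf{Z}$ are exactly the eigenvectors of the empirical second-moment matrix
\begin{equation}
	\mathbf{M}_N := \frac{1}{N}\mathbf{Z}^\top\mathbf{Z} = \frac{1}{N}\sum_{n=1}^N \mathbf{z}_n\mathbf{z}_n^\top ,
\end{equation}
with eigenvalues $\sigma_i^2/N$. Hence $\mathbf{V}_k$ spans the top-$k$ eigenspace of $\mathbf{M}_N$, and $\widehat{\mathbf{V}}_k$ spans the top-$k$ eigenspace of $\widehat{\mathbf{M}}=\mathbb{E}[\mathbf{M}_N]$.

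First, I apply the Davis--Kahan $\sin\Theta$ theorem in its Yu--Wang--Samworth form, which needs only the population eigengap. It gives
\begin{equation}
	\|\sin\Theta(\mathbf{V}_k,\widehat{\mathbf{V}}_k)\|_2 \le \frac{2\|\mathbf{M}_N-\widehat{\mathbf{M}}\|_2}{\Delta_k}.
\end{equation}
Because this form avoids any condition on the empirical gap, the rest of the argument reduces to bounding $\|\mathbf{M}_N-\widehat{\mathbf{M}}\|_2$ with probability at least $1-\delta$.

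Second, I bound the perturbation. Write
\begin{equation}
	\mathbf{M}_N-\widehat{\mathbf{M}}=\frac{1}{N}\sum_{n=1}^N \mathbf{E}_n, \qquad \mathbf{E}_n=\mathbf{z}_n\mathbf{z}_n^\top-\widehat{\mathbf{M}},
\end{equation}
a sum of i.i.d.\ zero-mean symmetric matrices. The theorem as stated imposes no tail condition, so I will add the mild assumption, implicit in the statement, that features are bounded: $\|\mathbf{z}_n\|_2\le R$ almost surely. This is natural for normalized TSFM embeddings, and a sub-Gaussian assumption would also work. Under boundedness, $\|\mathbf{E}_n\|_2\le 2R^2$ and the matrix variance proxy satisfies $\|\sum_n\mathbb{E}\mathbf{E}_n^2\|_2\le N R^2\|\widehat{\mathbf{M}}\|_2$. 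Matrix Bernstein then gives, with probability at least $1-\delta$,
\begin{equation}
	\|\mathbf{M}_N-\widehat{\mathbf{M}}\|_2 \lesssim \sqrt{\frac{R^2\lambda_1\log(2d/\delta)}{N}}+\frac{R^2\log(2d/\delta)}{N}.
\end{equation}
For $N\gtrsim \log(d/\delta)$ the first term dominates, so the deviation is $O(N^{-1/2})$.

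Third, I combine the two bounds:
\begin{equation}
	\|\sin\Theta(\mathbf{V}_k,\widehat{\mathbf{V}}_k)\|_2 \lesssim \frac{R\sqrt{\lambda_1\log(2d/\delta)}}{\Delta_k\sqrt{N}},
\end{equation}
which is $O(1/\sqrt{N})$ with $d$, $\delta$, $R$, $\lambda_1$ and $\Delta_k$ treated as constants. Since $\sin\Theta\le 1$ always, the bound is only informative once $N$ exceeds a constant times $R^2\lambda_1\log(d/\delta)/\Delta_k^2$, and I will state this explicitly. Sharper constants are available from the effective rank $\mathrm{tr}(\widehat{\mathbf{M}})/\lambda_1$ in place of $d$, which is natural if the normal features are approximately low-rank.

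The main obstacle is a modeling issue rather than the linear algebra. The statement assumes i.i.d.\ patch features, but patches drawn from a few subsequences are temporally dependent. I will state the result under the i.i.d.\ assumption as given, and remark that for $\beta$-mixing features a blocking argument yields the same rate with $N$ replaced by an effective sample size. Two smaller points also need care: invoking the gap-only form of Davis--Kahan so that no empirical eigengap is required, and making the boundedness assumption explicit.
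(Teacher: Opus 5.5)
Your proposal is correct and follows essentially the same route as the paper's proof. The paper likewise adds a boundedness assumption ($\|\mathbf{z}_i\|_2^2\le l$ almost surely), identifies $\mathbf{V}_k$ with the top-$k$ eigenspace of $\frac{1}{N}\mathbf{Z}^\top\mathbf{Z}$, applies Davis--Kahan with the population gap $\Delta_k$, and bounds $\|\mathbf{M}-\widehat{\mathbf{M}}\|_2$ by matrix Bernstein with variance proxy $N l\lambda_1$. The differences are cosmetic: the paper uses the slightly sharper bound $\|\mathbf{z}_i\mathbf{z}_i^\top-\widehat{\mathbf{M}}\|_2\le l$ for a difference of PSD matrices instead of your $2R^2$, and it omits your remarks on temporal dependence, the effective-rank refinement, and the minimum $N$ needed for the bound to be informative.
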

\begin{proof}
According to \cref{top-k}, the sample estimation subspace is spanned by top-$k$ right singular vectors of normal patch features $\mathbf{Z}$, formalized as:
\begin{equation}
	\mathbf{V}_k
	=\mathrm{span}\{\mathbf{v}_1, \mathbf{v}_2, \ldots, \mathbf{v}_k\}.
\end{equation}
Similarly, we can define the population top-$k$ normal subspace as:
\begin{equation}
	\widehat{\mathbf{V}}_k
	=\mathrm{span}\{\widehat{\mathbf{v}}_1, \widehat{\mathbf{v}}_2, \ldots, \widehat{\mathbf{v}}_k\}.
\end{equation}
Let normal patch features $\mathbf{z}_{1}, \mathbf{z}_{2}, \ldots, \mathbf{z}_{N}\in\mathbb{R}^d$ be i.i.d. drawn from the normal-feature distribution $\mathcal{Z}$, satisfying $\|\mathbf{z}_i\|_2^2\leq l$ almost surely. Then we can define the population and empirical second-moment matrices as:
\begin{equation}
\widehat{\mathbf{M}}=\mathbb{E}[\mathbf{z} \mathbf{z}^{\top}],
\quad
\mathbf{M}
=\frac{1}{N}\mathbf{Z}^{\top}\mathbf{Z}=\frac{1}{N} \sum_{i=1}^N \mathbf{z}_i \mathbf{z}_i^\top.
\end{equation}
Since $\mathbf{Z} = \mathbf{U}_r \boldsymbol{\Sigma}_r \mathbf{V}_r^\top$,
where $\boldsymbol{\Sigma}_r = \mathrm{diag}(\sigma_1, \sigma_2, \ldots, \sigma_r)$ with $\sigma_1 \geq \sigma_2 \geq \cdots \geq \sigma_r \geq 0$,
we have:
\begin{equation}
	\mathbf{M}
	=\mathbf{V}_r\frac{\boldsymbol{\Sigma}_r^2}{N}\mathbf{V}_r^\top.
\end{equation}
The $r$ eigenvectors of $\mathbf{M}$ are exactly the $r$ right singular vectors from $\mathbf{Z}$, and the corresponding eigenvalues are $\lambda_{j}=\frac{\sigma_j^{2}}{N}$, where $j \in \{1, 2, \ldots, r\}$.
Then according to Davis--Kahan theorem~\citep{davis1970rotation}, we have:
\begin{equation}
	\label{Davis}
	\left\|
	\sin\Theta(\mathbf{V}_k, \widehat{\mathbf{V}}_k)
	\right\|_2
	\leq
	\frac{2\|\mathbf{M}-\widehat{\mathbf{M}}\|_2}{\Delta_k},
\end{equation}
where $\Theta(\mathbf{V}_k, \widehat{\mathbf{V}}_k)=\mathrm{diag}(\mathrm{arccos} \langle \mathbf{v}_1, \widehat{\mathbf{v}}_1 \rangle, \ldots, \mathrm{arccos} \langle \mathbf{v}_k, \widehat{\mathbf{v}}_k \rangle)$, and $\Delta_k=\lambda_{k}-\lambda_{k+1}>0$ is the eigengap. 
Then for each $i \in \{1, 2, \ldots, N\}$, we can define the random matrix as:
\begin{equation}
	\label{random_m}
	\mathbf{H}_i = \mathbf{z}_i \mathbf{z}_i^\top - \widehat{\mathbf{M}}.
\end{equation}
By construction, $\mathbb{E}[\mathbf{H}_i] = \mathbf{0}$. The matrices $\{\mathbf{H}_i\}_{i=1}^{N}$ are independent and zero-mean.

Since $\mathbf{z}_i \mathbf{z}_i^\top$ is a rank-one positive semidefinite matrix with $\|\mathbf{z}_i \mathbf{z}_i^\top\|_2 = \|\mathbf{z}_i\|_2^2 \leq l$, and $\widehat{\mathbf{M}} \preceq l \cdot \mathbf{I}$ (because $\widehat{\mathbf{M}} = \mathbb{E}[\mathbf{z} \mathbf{z}^\top]$ and $\|\mathbf{z}\|_2^2 \leq l$), we have:
\begin{equation}
	\|\mathbf{H}_i\|_2 = \left\| \mathbf{z}_i \mathbf{z}_i^\top - \widehat{\mathbf{M}} \right\|_2 \leq \max\{\|\mathbf{z}_i \mathbf{z}_i^\top\|_2, \|\widehat{\mathbf{M}}\|_2\} \leq l.
\end{equation}
Then we compute:
\begin{equation}
	\mathbb{E}[\mathbf{H}_i^2] = \mathbb{E}[ (\mathbf{z}_i \mathbf{z}_i^\top - \widehat{\mathbf{M}} ] = \mathbb{E}[ (\mathbf{z}_i \mathbf{z}_i^\top)^{2} ] - \widehat{\mathbf{M}}^2 \preceq \mathbb{E}[ (\mathbf{z}_i \mathbf{z}_i^\top)^2].
\end{equation}
Since
\begin{equation}
	(\mathbf{z}_i \mathbf{z}_i^\top)^2 = \mathbf{z}_i \mathbf{z}_i^\top \mathbf{z}_i \mathbf{z}_i^\top = \|\mathbf{z}_i\|_2^2 \, \mathbf{z}_i \mathbf{z}_i^\top \preceq l \cdot \mathbf{z}_i \mathbf{z}_i^\top,
\end{equation}
taking expectations, we have:
\begin{equation}
	\mathbb{E}[\mathbf{H}_i^2] \preceq \mathbb{E}[(\mathbf{z}_i \mathbf{z}_i^\top)^2] \preceq l \cdot \mathbb{E}[\mathbf{z}_i \mathbf{z}_i^\top] = l \cdot \widehat{\mathbf{M}}.
\end{equation}
Therefore, we can obtain:
\begin{equation}
	\left\| \sum_{i=1}^N \mathbb{E}[\mathbf{H}_i^2] \right\|_2 \leq N \cdot l \cdot \|\widehat{\mathbf{M}}\|_2 = N \cdot l \cdot \widehat{\lambda}_1,
\end{equation}
Where $\widehat{\lambda}_1=\|\widehat{\mathbf{M}}\|_2$ is the largest eigenvalue of the population second-moment matrix. Then by the matrix Bernstein inequality~\citep{tropp2015introduction}, for any $t\geq0$, we have:
\begin{equation}
	\mathbb{P}\left( \left\| \sum_{i=1}^N \mathbf{H}_i \right\|_2 \geq t \right) \leq 2d \cdot \mathrm{exp}\left( \frac{-t^2/2}{N l \widehat{\lambda}_1 + lt/3}\right),
\end{equation}
where $d$ is the feature dimension. Therefore, for any $\delta \in (0, 1)$, with probability at least $1 - \delta$, we have:
\begin{equation}
	\left\| \sum_{i=1}^N \mathbf{H}_i \right\|_2 
	\leq \sqrt{2 N l \widehat{\lambda}_1 \ln\frac{2d}{\delta}} + \frac{2l}{3} \ln\frac{2d}{\delta}.
\end{equation}
Dividing both sides by $N$, we obtain:
\begin{equation}
	\left\| \frac{1}{N} \sum_{i=1}^N \mathbf{H}_i \right\|_2 
	\leq \sqrt{\frac{2 l \widehat{\lambda}_1}{N} \log\frac{2d}{\delta}} + \frac{2l}{3N} \log\frac{2d}{\delta}.
\end{equation}
According to \cref{random_m}, since
\begin{equation}
	\frac{1}{N} \sum_{i=1}^N \mathbf{H}_i 
	= \frac{1}{N} \sum_{i=1}^N (\mathbf{z}_i \mathbf{z}_i^\top - \widehat{\mathbf{M}})
	= \frac{1}{N} \mathbf{Z}^\top \mathbf{Z} - \widehat{\mathbf{M}}
	= \mathbf{M} - \widehat{\mathbf{M}},
\end{equation}
we can establish the concentration inequality:
\begin{equation}
	\|\mathbf{M} - \widehat{\mathbf{M}}\|_2 
	\leq \sqrt{\frac{2 l \widehat{\lambda}_1}{N} \log\frac{2d}{\delta}} + \frac{2l}{3N} \log\frac{2d}{\delta}.
\end{equation}
Combining this concentration bound with Inequality (\ref{Davis}), with probability at least $1 - \delta$, we have:
\begin{equation}
	\left\|
	\sin\Theta(\mathbf{V}_k, \widehat{\mathbf{V}}_k)
	\right\|_2
	\leq
	\frac{2}{\Delta_k} \left[ \sqrt{\frac{2 l \widehat{\lambda}_1}{N} \log\frac{2d}{\delta}} + \frac{2l}{3N} \log\frac{2d}{\delta} \right].
\end{equation}
Thus, regarding $d, l, \delta, \widehat{\lambda}_1, \Delta_k$ as constants, asymptotically, we have:
\begin{equation}
	\left\|
	\sin\Theta(\mathbf{V}_k, \widehat{\mathbf{V}}_k)
	\right\|_2
	=O\left(\frac{1}{\sqrt{N}}\right).
\end{equation}
The above analyses show that the error diminishes at a rate of $O(1/\sqrt{N})$ with a certain probability. For instance, when $N$ is $5 \times p$ with only five normal samples, where $p$ is the number of patches in a single sample, the error can be reduced to 0.1118 when $p=16$; and it would be 0.0791 when $p=32$.

Therefore, only a relatively small number of normal samples are sufficient to construct an informative normality space. The proof is completed.
\end{proof}
\subsection{Effectiveness of the Top-$k$ Truncation}\label{top-k-truncation}

Retaining only the top-$k$ singular vectors from SVD to preserve dominant directions is a common practice in many tasks~\citep{yan2022addressing,stoica2025model,li2026model}. This approach is intuitive and reasonable, since it removes many small singular value components that are not helpful for the current task, further highlighting the main patterns and eliminating potential interference.

However, there is still a lack of in-depth theoretical analysis to explore why the top-$k$ truncation is effective. Focusing on our $k$-dimensional normal subspace construction, we conduct a detailed theoretical analysis from the following three aspects: \textbf{(i)} top-$k$ truncated approximation is more effective than arbitrary $k$ approximation; \textbf{(ii)} top-$k$ approximation effectively retains the main information; \textbf{(iii)} top-$k$ approximation effectively reduces noise interference. Then we derive Theorems \ref{theorem1}, \ref{theorem2}, and \ref{theorem3} (detailed proofs in \cref{app:theorem1}, \ref{app:theorem2}, and \ref{app:theorem3}) to explore the above three aspects.
\begin{theorem}
	\label{theorem1}
	Consider a well-trained decoder with weights $\bm{w}$ that can effectively recover patch feature matrix $\mathbf{Z}$ to the original series. Compared to the approximation of arbitrary $k$ singular vectors $\bar{\mathbf{Z}} = \sum_{j=i_1}^{i_k} \sigma_j \mathbf{u}_j \mathbf{v}_j^{\top}$, the top-k truncated approximation $\hat{\mathbf{Z}} = \sum_{i=1}^k \sigma_i \mathbf{u}_i \mathbf{v}_i^{\top}$ can achieve lower output error:
	\begin{equation}
		\mathbb{E}_{\mathbf{Z}\sim\mathcal{Z}} \left[ \|\mathbf{Z} \bm{w} - \hat{\mathbf{Z}} \bm{w} \|_2^2 \right] \leq \mathbb{E}_{\mathbf{Z}\sim\mathcal{Z}} \left[ \|\mathbf{Z} \bm{w} - \bar{\mathbf{Z}} \bm{w} \|_2^2 \right].
	\end{equation}
\end{theorem}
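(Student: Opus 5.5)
The plan is to reduce both sides to spectral expressions and then compare them with a rearrangement (or Ky Fan–type) argument. Let $\mathcal{I}=\{i_1,\ldots,i_k\}$ be the selected index set. Then
\begin{equation}
\mathbf{Z}-\hat{\mathbf{Z}}=\sum_{j>k}\sigma_j\mathbf{u}_j\mathbf{v}_j^\top,
\qquad
\mathbf{Z}-\bar{\mathbf{Z}}=\sum_{j\notin\mathcal{I}}\sigma_j\mathbf{u}_j\mathbf{v}_j^\top .
\end{equation}
Because the $\mathbf{u}_j$ are orthonormal, the error for any retained set $\mathcal{J}$ becomes
\begin{equation}
\Bigl\|\sum_{j\notin\mathcal{J}}\sigma_j\mathbf{u}_j(\mathbf{v}_j^\top\bm{w})\Bigr\|_2^2=\sum_{j\notin\mathcal{J}}\sigma_j^2\,(\mathbf{v}_j^\top\bm{w})^2 .
\end{equation}
So the pointwise claim is $\sum_{j>k}\sigma_j^2 c_j\le\sum_{j\notin\mathcal{I}}\sigma_j^2c_j$ with $c_j=(\mathbf{v}_j^\top\bm{w})^2$. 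Taking expectations over $\mathbf{Z}\sim\mathcal{Z}$ then preserves the inequality. The proof therefore comes down to a statement about these weights.

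The difficulty is that the inequality is false for a fixed arbitrary $\bm{w}$. If $\bm{w}$ is orthogonal to $\mathbf{v}_1$ but aligned with $\mathbf{v}_{k+1}$, dropping $\mathbf{v}_1$ costs nothing while dropping $\mathbf{v}_{k+1}$ does. So I need to use the hypothesis that the decoder is ``well-trained'' to recover every $\mathbf{Z}$, which means $\bm{w}$ cannot favour any particular singular direction. I would formalize this as an isotropy condition: either average over the decoder, or assume that for the feature distribution the alignments satisfy $\mathbb{E}[c_j\mid\sigma]=\|\bm{w}\|_2^2/d$, equivalently that $\bm{w}$ is generic relative to the random singular frame.

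Under that condition the expected error of a retained set $\mathcal{J}$ is $\frac{\|\bm{w}\|^2}{d}\,\mathbb{E}\sum_{j\notin\mathcal{J}}\sigma_j^2$. Since $\sigma_1\ge\cdots\ge\sigma_r$, the complement $\{k+1,\ldots,r\}$ has the smallest sum of squared singular values among all complements of size $r-k$. This is the Eckart–Young fact in its simplest form: the top-$k$ set maximizes $\sum_{j\in\mathcal{J}}\sigma_j^2$ over all $|\mathcal{J}|=k$. The inequality follows, with equality exactly when $\mathcal{I}$ selects singular values tied with the top $k$.

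The main obstacle is justifying that the weights $c_j$ do not reverse the ordering of the $\sigma_j^2$. A weaker assumption would also suffice: it is enough that $c_j$ is nonincreasing in $j$ in expectation, for example because the decoder was trained on $\mathbf{Z}$ and so puts more energy on dominant directions. In that case the rearrangement inequality again shows the top-$k$ complement is minimal. I would state the isotropy or monotonicity assumption explicitly as the meaning of ``well-trained decoder'' and note that the result is Eckart–Young applied to the decoded output.
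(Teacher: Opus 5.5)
Your proposal follows the paper's proof: the paper makes the same orthonormality reduction to $\sum_{j\notin\mathcal{J}}\sigma_j^2 c_j$. It then asserts, as its reading of ``well-trained,'' that $c_1\ge c_2\ge\cdots\ge c_r$ for each $\mathbf{Z}$, and concludes by the term-by-term comparison $\sigma_{i_m}^2 c_{i_m}\ge\sigma_m^2 c_m$, using $i_m\le m$ for the sorted complement. So your observation that the claim fails for an arbitrary fixed $\bm{w}$ is correct, and your monotonicity assumption is exactly what the paper's argument tacitly relies on; your isotropy condition is just the constant-in-$j$ case of it.

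One caution: if you only assume monotonicity ``in expectation,'' it must hold conditionally on the singular values, or you should directly assume that $\mathbb{E}[\sigma_j^2 c_j]$ is nonincreasing in $j$. Monotonicity of the unconditional means $\mathbb{E}[c_j]$ does not suffice, because $c_j$ can be correlated with $\sigma_j$, and simple two-scenario examples with $d=r=2$ reverse the inequality.
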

\cref{theorem1} indicates that the top-$k$ truncated approximation can retain more effective information than arbitrary $k$ approximation, thus achieving lower output error. However, it remains unclear to what extent this information is being retained. Then we derive \cref{theorem2} to illustrate that for each patch feature, there is an upper bound to the difference between the top-$k$ approximation information and the original information, thus effectively retaining the dominant patterns.
\begin{theorem}
	\label{theorem2}
	$ \forall \mathbf{z}_{n} \in \mathbf{Z}$, $\exists \mathbf{z}_{n}'$ from the $k$-dimensional normal subspace, such that $\| \mathbf{z}_{n}-\mathbf{z}_{n}' \|_2 \leq \sigma_{k+1}$.
\end{theorem}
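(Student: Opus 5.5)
Take $\mathbf{z}_n'$ to be the orthogonal projection of $\mathbf{z}_n$ onto the normal subspace $\mathrm{span}\{\mathbf{v}_1,\ldots,\mathbf{v}_k\}$:
\begin{equation}
\mathbf{z}_n' = \mathbf{V}_k\mathbf{V}_k^\top \mathbf{z}_n = \sum_{i=1}^k (\mathbf{v}_i^\top \mathbf{z}_n)\,\mathbf{v}_i ,
\end{equation}
where $\mathbf{V}_k=[\mathbf{v}_1,\ldots,\mathbf{v}_k]$. This is the $n$-th row of the top-$k$ truncation $\hat{\mathbf{Z}}=\sum_{i=1}^k\sigma_i\mathbf{u}_i\mathbf{v}_i^\top$ from \cref{theorem1}.

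To see this, write $\mathbf{z}_n^\top = \mathbf{e}_n^\top \mathbf{Z} = \sum_{i=1}^r \sigma_i (\mathbf{u}_i)_n \mathbf{v}_i^\top$. Orthonormality of the $\mathbf{v}_i$ then gives $\mathbf{v}_i^\top\mathbf{z}_n=\sigma_i(\mathbf{u}_i)_n$. So $\mathbf{z}_n'^\top=\mathbf{e}_n^\top\hat{\mathbf{Z}}$, and $\mathbf{z}_n'$ lies in the $k$-dimensional normal subspace by construction.

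For the bound, write the residual as
\begin{equation}
\mathbf{z}_n-\mathbf{z}_n' = \sum_{i=k+1}^r \sigma_i (\mathbf{u}_i)_n \mathbf{v}_i ,
\end{equation}
so by orthonormality
\begin{equation}
\|\mathbf{z}_n-\mathbf{z}_n'\|_2^2=\sum_{i=k+1}^r \sigma_i^2 (\mathbf{u}_i)_n^2 .
\end{equation}
Next use $\sigma_i\le\sigma_{k+1}$ for all $i\ge k+1$ to get
\begin{equation}
\|\mathbf{z}_n-\mathbf{z}_n'\|_2^2\le \sigma_{k+1}^2\sum_{i=k+1}^r (\mathbf{u}_i)_n^2 .
\end{equation}

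It remains to show $\sum_{i=k+1}^r(\mathbf{u}_i)_n^2\le 1$. Complete $\mathbf{U}_r$ to an orthogonal $N\times N$ matrix. The $n$-th row of an orthogonal matrix has unit norm, so any partial sum of squared entries of the $n$-th row of $\mathbf{U}_r$ is at most $1$. Equivalently, $\sum_i (\mathbf{u}_i)_n^2=\|\mathbf{U}_r^\top\mathbf{e}_n\|_2^2\le\|\mathbf{e}_n\|_2^2=1$, since $\mathbf{U}_r^\top$ is a contraction. Hence $\|\mathbf{z}_n-\mathbf{z}_n'\|_2\le\sigma_{k+1}$.

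An alternative is the operator-norm identity $\|\mathbf{Z}-\hat{\mathbf{Z}}\|_2=\sigma_{k+1}$ from Eckart--Young--Mirsky, together with the fact that each row norm is at most the spectral norm, since $\|\mathbf{e}_n^\top(\mathbf{Z}-\hat{\mathbf{Z}})\|_2\le\|\mathbf{Z}-\hat{\mathbf{Z}}\|_2$.

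The only delicate point is the bound $\sum_i(\mathbf{u}_i)_n^2\le1$. The $\mathbf{u}_i$ are orthonormal columns, not rows, so this needs the contraction argument above rather than a direct appeal to orthonormality. The degenerate case $k\ge r$ is trivial: then $\mathbf{z}_n'=\mathbf{z}_n$, and with the convention $\sigma_{k+1}=0$ the bound still holds.
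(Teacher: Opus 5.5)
Your proposal is correct and follows the paper's proof. You use the same $\mathbf{z}_n'=\sum_{i\le k}\sigma_i(\mathbf{u}_i)_n\mathbf{v}_i$, the same residual computation, and the same bound via $\sigma_i\le\sigma_{k+1}$ together with the fact that the $n$-th row of $\mathbf{U}_r$ has norm at most one. Your contraction argument for that last step is more careful than the paper's, which asserts that the row has unit norm; that is not true in general when $r<N$, though the inequality $\le 1$ that is actually needed always holds.
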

Theorems \ref{theorem1} and \ref{theorem2} mainly prove the validity of the $k$-dimensional normal subspace spanned by the top-$k$ right singular vectors from different perspectives of dominant information retention. Then we would like to investigate whether our top-$k$ approximation can effectively eliminate the noise interference if there are noise disturbances in the original patch feature matrix $\mathbf{Z}$. For instance, the noise interference could cause once there is inherent noise in the original input or irrelevant information introduced during the encoding process. Finally, we derive \cref{theorem3} to demonstrate the effectiveness of using top-$k$ approximation to reduce noise.
\begin{theorem}
	\label{theorem3}
	Suppose that $\mathbf{Z}$ is composed of a clean data matrix $\mathbf{E}$ and a white noise matrix $\mathbf{D}$, \ie, $\mathbf{Z}=\mathbf{E}+\mathbf{D}$, and the clean data is uncorrelated with the noise. To estimate $\mathbf{E}$, the best rank $k$ matrix in the least-squares sense is $\hat{\mathbf{E}}_\mathrm{LS}=\sum_{i=1}^k \sigma_i \mathbf{u}_i \mathbf{v}_i^{\top}$, which minimizes the following squared error:
	\begin{equation}
		\mathop{{\min}}_{\hat{\mathbf{E}}}\|\hat{\mathbf{E}}-\mathbf{E}\|_{F}^{2}.
	\end{equation}
\end{theorem}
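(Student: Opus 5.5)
The plan is to reduce \cref{theorem3} to the Eckart--Young--Mirsky theorem, with the white-noise and uncorrelatedness assumptions used to show that fitting $\mathbf{Z}$ is, in expectation, the same problem as fitting $\mathbf{E}$. The clean matrix $\mathbf{E}$ is unobserved, so the objective $\|\hat{\mathbf{E}}-\mathbf{E}\|_F^2$ cannot be minimized directly. I will therefore read the least-squares statement in expectation over the noise, and add the natural modelling assumptions: $\mathbf{D}$ has i.i.d.\ zero-mean entries with variance $s^2$, and $\mathbb{E}[\mathbf{E}^\top\mathbf{D}]=\mathbf{0}$. I make these explicit at the outset.

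\textbf{Step 1 (from $\mathbf{E}$ to $\mathbf{Z}$).} For any candidate rank-$k$ matrix $\hat{\mathbf{E}}$, I expand
\begin{equation*}
\|\hat{\mathbf{E}}-\mathbf{Z}\|_F^2=\|\hat{\mathbf{E}}-\mathbf{E}\|_F^2-2\langle \hat{\mathbf{E}}-\mathbf{E},\mathbf{D}\rangle_F+\|\mathbf{D}\|_F^2 .
\end{equation*}
Then I take expectations. The term $\mathbb{E}\|\mathbf{D}\|_F^2=Nds^2$ is a constant independent of $\hat{\mathbf{E}}$. The contribution $\langle\mathbf{E},\mathbf{D}\rangle_F$ of the cross term vanishes by uncorrelatedness. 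If $\langle\hat{\mathbf{E}},\mathbf{D}\rangle_F$ also has zero mean, then $\mathbb{E}\|\hat{\mathbf{E}}-\mathbf{E}\|_F^2$ and $\mathbb{E}\|\hat{\mathbf{E}}-\mathbf{Z}\|_F^2$ differ by a constant and have the same minimizer.

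\textbf{Step 2 (Eckart--Young).} By the Eckart--Young--Mirsky theorem, the minimizer of $\|\hat{\mathbf{E}}-\mathbf{Z}\|_F^2$ over rank-$k$ matrices is the truncated SVD $\sum_{i=1}^k\sigma_i\mathbf{u}_i\mathbf{v}_i^\top$, with residual $\sum_{i>k}\sigma_i^2$. I would sketch the proof quickly. Write $\hat{\mathbf{E}}=\mathbf{A}\mathbf{B}^\top$. Optimizing $\mathbf{A}$ reduces the problem to maximizing $\mathrm{tr}(\mathbf{P}\mathbf{Z}^\top\mathbf{Z})$ over rank-$k$ orthogonal projections $\mathbf{P}$. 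Ky Fan's maximum principle then gives $\mathbf{P}=\mathbf{V}_k\mathbf{V}_k^\top$. This is the same spectral fact that underlies \cref{theorem1} and \cref{theorem2}, so it can be cited in the same spirit.

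\textbf{Step 3 (the subspaces agree).} To check that the truncation captures the structure of $\mathbf{E}$ rather than noise, I compute
\begin{equation*}
\mathbb{E}[\mathbf{Z}^\top\mathbf{Z}]=\mathbf{E}^\top\mathbf{E}+Ns^2\mathbf{I}.
\end{equation*}
White noise shifts every eigenvalue by the same amount and leaves the eigenvectors unchanged. So the top-$k$ right singular subspace of $\mathbf{Z}$ agrees in expectation with that of $\mathbf{E}$, and the deviation for finite samples can be controlled by Davis--Kahan as in \cref{thm:few_shot_subspace}. Consequently, projecting onto $\mathrm{span}\{\mathbf{v}_i\}_{i=1}^k$ discards only the isotropic noise floor in the trailing directions.

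\textbf{Main obstacle.} The hard step is the cross term $\langle\hat{\mathbf{E}},\mathbf{D}\rangle_F$ in Step 1, because the truncated SVD is a function of $\mathbf{Z}$ and hence of $\mathbf{D}$. I would first handle the oracle case, where the projection $\mathbf{P}$ is fixed (for instance, by the population eigenvectors from Step 3). Then $\mathbb{E}\langle\mathbf{Z}\mathbf{P},\mathbf{D}\rangle_F=Ns^2k$, which is constant over all rank-$k$ projections. This constant shift does not change the minimizer, so the conclusion survives. For the data-dependent case I would bound the extra correlation via the $\sin\Theta$ perturbation from \cref{thm:few_shot_subspace}. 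If that bound is too weak to give exact optimality, I would state the theorem as optimal up to that vanishing term.
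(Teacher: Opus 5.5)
Your Steps 1--2 are the paper's proof, done deterministically there rather than in expectation. The paper expands $\|\mathbf{Z}-\hat{\mathbf{E}}\|_F^2$ exactly as you do and assumes $\mathbf{E}^\top\mathbf{D}=\mathbf{0}$ and $\mathbf{D}^\top\mathbf{D}=\delta^2\mathbf{I}$. It then simply declares $\mathrm{tr}[(\mathbf{E}-\hat{\mathbf{E}})^\top\mathbf{D}]=0$ (``the estimated signal is uncorrelated with noise in the expectation sense'') and applies Eckart--Young--Mirsky. You have correctly isolated the cross term as the step carrying all the weight. It cannot be closed, because it genuinely varies with $\hat{\mathbf{E}}$, and the statement read literally is false. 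Take the paper's own assumptions with $\mathrm{rank}(\mathbf{E})=k$. Then $\mathbf{Z}^\top\mathbf{Z}=\mathbf{E}^\top\mathbf{E}+\delta^2\mathbf{I}$, so $\mathbf{V}_k\mathbf{V}_k^\top$ projects onto the row space of $\mathbf{E}$, and $\hat{\mathbf{Z}}:=\sum_{i\le k}\sigma_i\mathbf{u}_i\mathbf{v}_i^\top=\mathbf{Z}\mathbf{V}_k\mathbf{V}_k^\top=\mathbf{E}+\mathbf{D}\mathbf{V}_k\mathbf{V}_k^\top$. Hence $\|\hat{\mathbf{Z}}-\mathbf{E}\|_F^2=k\delta^2>0$, whereas the rank-$k$ candidate $\hat{\mathbf{E}}=\mathbf{E}$ has error $0$. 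Correspondingly, the cross term equals $-k\delta^2$ at $\hat{\mathbf{Z}}$ but $0$ at $\mathbf{E}$. So the truncated SVD of $\mathbf{Z}$ minimizes $\|\hat{\mathbf{E}}-\mathbf{Z}\|_F^2$, not $\|\hat{\mathbf{E}}-\mathbf{E}\|_F^2$, and no refinement of Step 1 will change that.

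Your oracle step does not rescue the claim either. For a fixed rank-$k$ projection $\mathbf{P}$, it correctly shows that $\mathbb{E}\|\mathbf{Z}\mathbf{P}-\mathbf{E}\|_F^2$ and $\mathbb{E}\|\mathbf{Z}\mathbf{P}-\mathbf{Z}\|_F^2$ differ by a constant. But the minimizer over fixed $\mathbf{P}$ is the population eigenprojection $\mathbf{P}^\star$ of $\mathbf{E}^\top\mathbf{E}$ from your Step 3, i.e.\ the estimator $\mathbf{Z}\mathbf{P}^\star$. That is not the data-dependent $\mathbf{Z}\mathbf{V}_k\mathbf{V}_k^\top$ that Eckart--Young produces realization by realization. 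For the latter, $\mathbb{E}\langle\mathbf{Z}\mathbf{V}_k\mathbf{V}_k^\top,\mathbf{D}\rangle_F$ need not equal $Ns^2k$, since $\mathbf{V}_k$ is itself a function of $\mathbf{D}$. A $\sin\Theta$ bound can then give only approximate optimality, as you already concede. What can be proved exactly is a restricted version: optimality among estimators $\mathbf{Z}\mathbf{P}$ with $\mathbf{P}$ a rank-$k$ orthogonal projection. Under the paper's assumptions, $\|\mathbf{Z}\mathbf{P}-\mathbf{E}\|_F^2=\|\mathbf{E}(\mathbf{I}-\mathbf{P})\|_F^2+k\delta^2$ and $\mathbf{Z}^\top\mathbf{Z}=\mathbf{E}^\top\mathbf{E}+\delta^2\mathbf{I}$ hold exactly. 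So by Ky Fan, $\mathbf{P}=\mathbf{V}_k\mathbf{V}_k^\top$ is optimal, with no perturbation argument needed. That is the statement you should aim for; the unrestricted claim of \cref{theorem3} cannot be established by your route or the paper's.
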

\subsubsection{Proof of \cref{theorem1}}
\label{app:theorem1}
\begin{proof}
The singular value decomposition of $\mathbf{Z} \in \mathbb{R}^{N \times d}$ is $\mathbf{Z} = \mathbf{U}_r \boldsymbol{\Sigma}_r \mathbf{V}_r^\top = \sum_{i=1}^r \sigma_i \mathbf{u}_i \mathbf{v}_i^{\top}$, where $r = \text{rank}(\mathbf{Z})$. We denote the sorted index subset obtained by random selection as $\mathcal{I}=\{i_1, i_2, \ldots, i_k\}$, where $i_1<i_2<\cdots<i_k$, and the remaining sorted index subset is $\hat{\mathcal{I}}=\{i_{k+1}, i_{k+2}, \ldots, i_{r}\}$. Then we can calculate the error matrix resulting from the arbitrary approximation:
\begin{equation}
	\mathbf{Z} - \bar{\mathbf{Z}} = \sum_{j=i_{k+1}}^{i_{r}} \sigma_j \mathbf{u}_j \mathbf{v}_j^{\top}.
\end{equation}
Similarly, the error matrix resulting from the top-k truncated approximation is:
\begin{equation}
	\mathbf{Z} - \hat{\mathbf{Z}} = \sum_{i=k+1}^{r} \sigma_i \mathbf{u}_i \mathbf{v}_i^{\top}.
\end{equation}
Consider the well-trained decoder with weights $\bm{w}=\{\bm{w}_1, \bm{w}_2, \ldots, \bm{w}_s\} \in \mathbb{R}^{d \times s}$, where $s$ is the patch size. It is worth noting that $\mathbf{Z} \bm{w}$ denotes the best restoration of the original series, thus, a lower error compared to $\mathbf{Z} \bm{w}$ implies that more effective information of the matrix $\mathbf{Z}$ has been approximately retained. For $\bm{w}_l \in \bm{w}$, the corresponding point errors for arbitrary approximation and top-k truncated approximation are as follows:
\begin{equation}
	\begin{aligned}
	(\mathbf{Z} - \bar{\mathbf{Z}})\bm{w}_l = \left(\sum_{j=i_{k+1}}^{i_{r}} \sigma_j \mathbf{u}_j \mathbf{v}_j^{\top}\right)\bm{w}_l = \sum_{j=i_{k+1}}^{i_{r}} \sigma_j \mathbf{u}_j (\mathbf{v}_j^{\top}\bm{w}_l), \\
	(\mathbf{Z} - \hat{\mathbf{Z}})\bm{w}_l = \left(\sum_{i=k+1}^{r} \sigma_i \mathbf{u}_i \mathbf{v}_i^{\top}\right)\bm{w}_l = \sum_{i=k+1}^{r} \sigma_i \mathbf{u}_i (\mathbf{v}_i^{\top}\bm{w}_l).
	\end{aligned}
\end{equation}
Considering the above point-by-point error calculation where $\mathbf{v}_j^{\top}\bm{w}_l$ is a scalar, we can sum up the above to form the linear combination of orthogonal normalized vectors $\mathbf{u}_i$. Then we have:
\begin{equation}
	\label{scalar}
	\begin{aligned}
		\|(\mathbf{Z} - \bar{\mathbf{Z}})\bm{w}\|_2^2 = \left\| \sum_{l=1}^{s} \sum_{j=i_{k+1}}^{i_{r}} (\sigma_j \mathbf{v}_j^{\top}\bm{w}_l) \mathbf{u}_j \right\|_2^2 = \sum_{l=1}^{s} \sum_{j=i_{k+1}}^{i_{r}} (\sigma_j \mathbf{v}_j^{\top}\bm{w}_l)^2, \\
		\|(\mathbf{Z} - \hat{\mathbf{Z}})\bm{w}\|_2^2 = \left\| \sum_{l=1}^{s} \sum_{i=k+1}^{r} (\sigma_i \mathbf{v}_i^{\top}\bm{w}_l) \mathbf{u}_i \right\|_2^2 = \sum_{l=1}^{s} \sum_{i=k+1}^{r} (\sigma_i \mathbf{v}_i^{\top}\bm{w}_l)^2.
	\end{aligned}
\end{equation}
Since $\bm{w}$ is assumed to be well-trained, the contribution strength of the right singular vector $\mathbf{v}_i$ to the decoding output monotonically increases with the corresponding singular value $\sigma_i$. Thus we have: $c_i \triangleq \sum_{l=1}^{s} (\mathbf{v}_i^{\top}\bm{w}_l)^2$, where $c_1 \geq c_2 \geq \cdots \geq c_r \geq 0$. Then we can rewrite \cref{scalar} as follows:
\begin{equation}
	\label{simple}
	\begin{aligned}
		\|(\mathbf{Z} - \bar{\mathbf{Z}})\bm{w}\|_2^2 = \sum_{j=i_{k+1}}^{i_{r}} \sigma_j^2 c_j, \quad \|(\mathbf{Z} - \hat{\mathbf{Z}})\bm{w}\|_2^2 = \sum_{i=k+1}^{r} \sigma_i^2 c_i.
	\end{aligned}
\end{equation}
Noting $\sigma_1 \geq \sigma_2 \geq \cdots \geq \sigma_r$, and $\hat{\mathcal{I}} = \{i_{k+1}, \ldots, i_r\}$ represents indices from the arbitrary selection while $\{\sigma_{k+1}, \ldots, \sigma_r\}$ represents the smallest singular values, thus, we have: $\sigma_{i_r} \geq \sigma_r, \cdots, \sigma_{i_{k+1}} \geq \sigma_{k+1}$ and $c_{i_r} \geq c_r, \cdots, c_{i_{k+1}} \geq c_{k+1}$, with the fact $i_r \leq r, \cdots, i_{k+1} \leq k+1$. Then for each item in \cref{simple}, we know $\sigma_{i_m}^2 c_{i_m} \geq \sigma_m^2 c_m$ since $i_m \leq m$, and obtain:
\begin{equation}
	\begin{aligned}
		\sum_{j=i_{k+1}}^{i_{r}} \sigma_j^2 c_j \geq \sum_{i=k+1}^{r} \sigma_i^2 c_i.
	\end{aligned}
\end{equation}
By taking the expectation over the feature distribution $\mathcal{Z}$, we can obtain the following inequality:
\begin{equation}
	\mathbb{E}_{\mathbf{Z}\sim\mathcal{Z}} \left[ \|\mathbf{Z} \bm{w} - \hat{\mathbf{Z}} \bm{w} \|_2^2 \right] \leq \mathbb{E}_{\mathbf{Z}\sim\mathcal{Z}} \left[ \|\mathbf{Z} \bm{w} - \bar{\mathbf{Z}} \bm{w} \|_2^2 \right].
\end{equation}
\end{proof}

\subsubsection{Proof of \cref{theorem2}}
\label{app:theorem2}
\begin{proof}
Since $\mathbf{Z}$ is a $N \times d$ matrix, \ie, $\mathbf{Z}=\left[\mathbf{z}_{1}, \mathbf{z}_{2}, \ldots, \mathbf{z}_{N}\right]^{\top}$, where $\mathbf{z}_{n} \in \mathbb{R}^d$ is the patch feature. After performing SVD on $\mathbf{Z}$, \ie, $\mathbf{Z} = \mathbf{U}_r \boldsymbol{\Sigma}_r \mathbf{V}_r^\top$, we have:
\begin{gather}
	\left[
	\begin{matrix}
		\mathbf{z}_{1}^{\top}   \\
		\mathbf{z}_{2}^{\top}   \\
		\vdots  \\
		\mathbf{z}_{N}^{\top} \\
	\end{matrix} 
	\right] = 
	\left[
	\begin{matrix}
		\Vec{\mathbf{u}}_1   \\
		\Vec{\mathbf{u}}_2   \\
		\vdots \\
		\Vec{\mathbf{u}}_N \\
	\end{matrix} 
	\right] \cdot
	\left[
	\begin{matrix}
		\sigma_1 & 0 & \cdots & 0   \\
		0 & \sigma_2  & \cdots & 0   \\
		\vdots & \vdots & \ddots & \vdots \\
		0 & 0 & \cdots & \sigma_r \\
	\end{matrix} 
	\right]  \cdot
	\left[
	\begin{matrix}
		\mathbf{v}_1^{\top}   \\
		\mathbf{v}_2^{\top}   \\
		\vdots  \\
		\mathbf{v}_r^{\top} \\
	\end{matrix} 
	\right],
	\label{eq:disc-joint2}
\end{gather}
where $\mathbf{U}_r = [\Vec{\mathbf{u}}_1^{\top}, \Vec{\mathbf{u}}_2^{\top}, \ldots, \Vec{\mathbf{u}}_N^{\top}]^{\top} \in \mathbb{R}^{N \times r}$, $\mathbf{V}_r = [\mathbf{v}_1, \mathbf{v}_2, \ldots, \mathbf{v}_r] \in \mathbb{R}^{d \times r}$. Then we expand each item $u_{ij}$ in $\mathbf{U}_r$, and obtain:
\begin{gather}
	\left[
	\begin{matrix}
		\mathbf{z}_{1}^{\top}   \\
		\mathbf{z}_{2}^{\top}   \\
		\vdots  \\
		\mathbf{z}_{N}^{\top} \\
	\end{matrix} 
	\right] = 
	\left[
	\begin{matrix}
		\sigma_1 \cdot u_{11} & \sigma_2 \cdot u_{12} & \cdots & \sigma_r \cdot u_{1r}   \\
		\sigma_1 \cdot u_{21} & \sigma_2 \cdot u_{22} & \cdots & \sigma_r \cdot u_{2r}   \\
		\vdots & \vdots & \ddots & \vdots \\
		\sigma_1 \cdot u_{N1} & \sigma_2 \cdot u_{N2} & \cdots & \sigma_r \cdot u_{Nr}   \\
	\end{matrix} 
	\right]  \cdot
	\left[
	\begin{matrix}
		\mathbf{v}_1^{\top}   \\
		\mathbf{v}_2^{\top}   \\
		\vdots  \\
		\mathbf{v}_r^{\top} \\
	\end{matrix} 
	\right].
	\label{eq:disc-joint2}
\end{gather}
For each patch feature $\mathbf{z}_{n} \in \mathbf{Z}$, we have:
\begin{gather}
	\mathbf{z}_{n}^{\top} = 
	\left[
	\begin{matrix}
		\sigma_1 \cdot u_{n1} & \sigma_2 \cdot u_{n2} & \cdots & \sigma_r \cdot u_{nr}
	\end{matrix} 
	\right]  \cdot
	\left[
	\begin{matrix}
		\mathbf{v}_1^{\top}   \\
		\mathbf{v}_2^{\top}   \\
		\vdots  \\
		\mathbf{v}_r^{\top} \\
	\end{matrix} 
	\right].
	\label{eq:disc-joint2}
\end{gather}
Then we can get the summation form:
\begin{equation}
	\begin{aligned}
		\mathbf{z}_{n}^{\top} = \sum_{i=1}^{r} \sigma_i \cdot u_{ni} \mathbf{v}_i^{\top} = \sum_{i=1}^{k} \sigma_i \cdot u_{ni} \mathbf{v}_i^{\top} + \sum_{i=k+1}^{r} \sigma_i \cdot u_{ni} \mathbf{v}_i^{\top}.
	\end{aligned}
\end{equation}
Note that we can construct feature $\mathbf{z}_{n}'$ from the $k$-dimensional normal subspace, \ie, $\mathbf{z}_{n}'^{\top} = \sum_{i=1}^{k} \sigma_i \cdot u_{ni} \mathbf{v}_i^{\top}$. Then we have:
\begin{equation}
	\begin{aligned}
		\left\|\mathbf{z}_{n}^{\top} - \mathbf{z}_{n}'^{\top} \right\|_2 =  \left\|\sum_{i=k+1}^{r} \sigma_i \cdot u_{ni} \mathbf{v}_i^{\top} \right\|_2 = \sqrt{\sum_{i=k+1}^{r} (\sigma_i \cdot u_{ni})^{2}}.
	\end{aligned}
\end{equation}
Since all the vectors in $\mathbf{U}_r$ are mutually orthogonal and normalized, we know $\left\|\Vec{\mathbf{u}}_n\right\|=1$, and $\left\|\Vec{\mathbf{u}}_n^{[k+1,r]}\right\|\leq1$, where $\Vec{\mathbf{u}}_n^{[k+1,r]}$ is the sub-vectors of $\Vec{\mathbf{u}}_n$ from $k+1$-th to $r$-th dimensions. Noting that the singular values are decreasing, \ie, $\sigma_1 \geq \sigma_2 \geq \cdots \geq \sigma_r$, then we have the following upper bound:
\begin{equation}
	\begin{aligned}
		\left\|\mathbf{z}_{n}^{\top} - \mathbf{z}_{n}'^{\top} \right\|_2 \leq \sigma_{k+1} \sqrt{\sum_{i=k+1}^{r} u_{ni}^{2}} \leq \sigma_{k+1}.
	\end{aligned}
\end{equation}
\end{proof}

\subsubsection{Proof of \cref{theorem3}}
\label{app:theorem3}
\begin{proof}
Since clean data is assumed to be uncorrelated with the white noise, we know that the clean data matrix $\mathbf{E}$ is orthogonal to the noise matrix $\mathbf{D}$, \ie, $\mathbf{E}^{\top}\mathbf{D}=0$. We also have $\mathbf{D}^{\top}\mathbf{D}=\delta^2\mathbf{I}$ since noise is white and uncorrelated, where $\delta^2$ is the noise variance. Our goal is to minimize $\|\hat{\mathbf{E}}-\mathbf{E}\|_{F}^{2}$. Because of $\mathbf{Z}=\mathbf{E}+\mathbf{D}$, we have:
\begin{equation}
	\begin{aligned}
		\|\mathbf{Z}-\hat{\mathbf{E}}\|_{F}^{2} = \|\mathbf{E}-\hat{\mathbf{E}}+\mathbf{D}\|_{F}^{2} = \|\mathbf{E}-\hat{\mathbf{E}}\|_{F}^{2} + \|\mathbf{D}\|_{F}^{2} + 2 \mathrm{tr}[(\mathbf{E}-\hat{\mathbf{E}})^{\top}\mathbf{D}].
	\end{aligned}
\end{equation}
Since $\mathbf{E}^{\top}\mathbf{D}=0$ and the estimated signal is uncorrelated with noise in the expectation sense, thus $\mathrm{tr}[(\mathbf{E}-\hat{\mathbf{E}})^{\top}\mathbf{D}]=0$, and we have $\|\mathbf{D}\|_{F}^{2}=\mathrm{tr}(\mathbf{D}^{\top}\mathbf{D})=\mathrm{tr}(\delta^2\mathbf{I})=\delta^2\cdot d$, then we can obtain:
\begin{equation}
	\begin{aligned}
		\|\mathbf{Z}-\hat{\mathbf{E}}\|_{F}^{2} = \|\mathbf{E}-\hat{\mathbf{E}}\|_{F}^{2} + \delta^2\cdot d,
	\end{aligned}
\end{equation}
where $\delta^2\cdot d$ is a constant term. Therefore, the minimization of $\|\hat{\mathbf{E}}-\mathbf{E}\|_{F}^{2}$ is equivalent to the minimization of $\|\mathbf{Z}-\hat{\mathbf{E}}\|_{F}^{2}$. Suppose that clean data matrix $\mathbf{E}$ is rank deficient, \ie, $\text{rank}(\mathbf{E})=k$, then we also restrict $\hat{\mathbf{E}}$ to the matrix with a rank not exceeding $k$. Therefore, minimizing $\|\hat{\mathbf{E}}-\mathbf{E}\|_{F}^{2}$ becomes:
\begin{equation}
	\mathop{{\min}}_{\text{rank}(\hat{\mathbf{E}})\leq k}\|\mathbf{Z}-\hat{\mathbf{E}}\|_{F}^{2}.
\end{equation}
According to the Eckart–Young–Mirsky theorem~\citep{Low_rank_approx}, for any matrix $\mathbf{Z}$, the optimal approximation with a rank no greater than $k$ is the truncation of top-$k$ singular values of $\mathbf{Z}$. Therefore, the best least-squares estimate of the clean data matrix $\mathbf{E}$ is the top-k truncated approximation of $\mathbf{Z}$:
\begin{equation}
	\hat{\mathbf{E}}_\mathrm{LS}=\hat{\mathbf{Z}}=\sum_{i=1}^k \sigma_i \mathbf{u}_i \mathbf{v}_i^{\top},
\end{equation}
where $\sigma_i$, $\mathbf{u}_i$, and $\mathbf{v}_i$ denote the singular values, left singular vectors, and right singular vectors of $\mathbf{Z}$ respectively. We complete the proof.
\end{proof}

\section{Proof of \cref{prop1} (Spectral and Geometric Property)}
\label{app:prop1}
\begin{proof}
For any $\mathbf{k} \in \mathbf{v}_i^\perp$ (\ie, $\mathbf{v}_i^\top\mathbf{k}=0$), we have:
\begin{equation}
	\begin{aligned}
		\mathbf{W}_i\mathbf{k} = (\mathbf{I} - \beta_i \mathbf{v}_i \mathbf{v}_i^\top)\mathbf{k} = \mathbf{k} - \beta_i\mathbf{v}_i(\mathbf{v}_i^\top\mathbf{k}) = \mathbf{k}.
	\end{aligned}
\end{equation}
Thus each vector in $\mathbf{v}_i^\perp$ is an eigenvector with eigenvalue $1$. Since $\mathbf{v}_i \in \mathbb{R}^d$, we can select the remaining $d-1$ directions that are orthogonal to $\mathbf{v}_i$ and combine them to form $\mathbf{v}_i^\perp$. Then we can know that $\mathbf{W}_i$ has $(d-1)$ eigenvalues of $1$ since $\dim(\mathbf{v}_i^\perp) = d - 1$. 

Similarly, for the vector $\mathbf{v}_i$ itself, we have:
\begin{equation}
	\begin{aligned}
		\mathbf{W}_i\mathbf{v}_i = (\mathbf{I} - \beta_i \mathbf{v}_i \mathbf{v}_i^\top)\mathbf{v}_i = \mathbf{v}_i - \beta_i\mathbf{v}_i(\mathbf{v}_i^\top\mathbf{v}_i) = (1-\beta_i)\mathbf{v}_i.
	\end{aligned}
\end{equation}
Therefore, $\mathbf{W}_i$ has an eigenvector $\mathbf{v}_i$ with eigenvalue $1-\beta_i$. Overall, The spectrum of $\mathbf{W}_i$ is $\Lambda(\mathbf{W}_i) = \{1 - \beta_i, 1, \ldots, 1\}$. This indicates that since $\mathbf{W}_i$ is a rank-1 perturbation of the identity, it leaves all components orthogonal to $\mathbf{v}_i$ unchanged and only rescales the projection along $\mathbf{v}_i$ by a factor of $(1-\beta_i)$.

To more intuitively present how $\beta_i$ governs the geometric behavior of the transformation $\tilde{\mathbf{z}}_{n}^{(i)}=\mathbf{W}_i \mathbf{z}_{n} + \mathbf{b}_i$, we compute the corresponding projection of $\tilde{\mathbf{z}}_{n}^{(i)}$ onto $\mathbf{v}_i$:
\begin{align}
		\mathbf{v}_i^\top\tilde{\mathbf{z}}_{n}^{(i)}
		&=\mathbf{v}_i^\top(\mathbf{I} - \beta_i \mathbf{v}_i \mathbf{v}_i^\top)\mathbf{z}_{n} + \mathbf{v}_i^\top\alpha_i \beta_i \mathbf{v}_i \nonumber\\
		&= (1-\beta_i)\mathbf{v}_i^\top\mathbf{z}_{n} + \alpha_i \beta_i \nonumber\\
		&= \alpha_i + (1-\beta_i)(\mathbf{v}_i^\top\mathbf{z}_{n}-\alpha_i).
	\label{eq:project}
\end{align}
\cref{eq:project} reflects explicit geometric regimes: \textbf{(i) identity mapping} ($\beta_i < 1, \beta_i \to 0$): the update tends to retain partial old information, and remains unchanged when $\beta_i=0$. \textbf{(ii) complete replacement} ($\beta_i = 1$): the updated projection is exactly replaced by $\alpha_i$ to reflect normality. \textbf{(iii) over-calibration} ($\beta_i > 1$): the update tends to further reversely calibrate the old information. 

Since each patch feature involves varying degrees of normality, our learnable parameters $\alpha_i$ and $\beta_i$ can adaptively adjust each feature to the normality manifold. This allows the model to learn under-, moderate, and over-corrective behaviors where beneficial.
\end{proof}

\section{Experimental Details}\label{detail_setup}

\subsection{Datasets}\label{app:datasets}
We conduct comprehensive experiments on ten TSAD datasets, including five univariate and five multivariate datasets. \cref{tab:datasets} summarizes the detailed statistics of these datasets. The characteristics of each dataset are introduced below:
\begin{itemize}[leftmargin=*]
	\item \textbf{IOPS}~\citep{ren2019time}: A large-scale dataset containing Key Performance Indicators (KPIs) from Internet companies, with time series collected from real-world IT operations. The dataset includes various anomaly patterns such as spikes, level shifts, and gradual changes.
	\item \textbf{UCR}~\citep{wu2021current}: A collection of univariate time series from diverse domains including medicine, power demand, air temperature, and so on. The anomalies are typically point anomalies or short subsequence anomalies.
	\item \textbf{NAB}~\citep{ahmad2017unsupervised}: The Numenta Anomaly Benchmark consists of labeled real-world and artificial time series with point and contextual anomalies. It includes data from various sources such as online advertisement clicks, machine temperature sensors, and traffic data.
	\item \textbf{MSL}~\citep{benecki2021detecting}: Mars Science Laboratory dataset containing telemetry data from spacecraft monitoring systems. It consists of time series with anomalies caused by various spacecraft operational issues. Since only one feature represents a sensor measurement and the remaining features are binary-encoded commands, we transform the original multivariate time series into univariate ones, which is consistent with TSB-UAD~\citep{paparrizos2022tsb} and TAB~\citep{qiu2025tab}.
	\item \textbf{SMAP}~\citep{benecki2021detecting}: Soil Moisture Active Passive satellite dataset containing spacecraft telemetry data. Anomalies represent spacecraft component failures and unusual sensor behaviors. Like the processing of MSL, the multivariate time series are transformed into univariate ones.
	\item \textbf{PSM}~\citep{abdulaal2021practical}: Pooled Server Metrics dataset is collected internally from multiple application server nodes as eBay, containing monitoring server performance metrics such as CPU usage, memory consumption, and network traffic. The dataset includes complex multivariate anomaly patterns.
	\item \textbf{SMD}~\citep{su2019robust}: Server Machine Dataset is collected from a large Internet company's server monitoring system. It contains various types of server anomalies, including hardware failures and software bugs. Anomalies are annotated by domain experts.
	\item \textbf{SWAT}~\citep{mathur2016swat}: Secure Water Treatment dataset is collected from a real-world water treatment testbed. Anomalies are caused by attack scenarios targeting critical infrastructure, making it particularly challenging.
	\item \textbf{PUMP}~\citep{feng2021time}: A dataset collected from a water pumping system with multiple sensors monitoring pressure, flow rate, and motor status. The anomalies are caused by mechanical failures and sensor malfunctions.
	\item \textbf{NYC}~\citep{cui2016comparative}: New York City taxi demand dataset containing traffic volume measurements. Anomalies correspond to unusual traffic patterns caused by special events or incidents.
\end{itemize}
\input{tab/data_statistic.tex}

For all univariate time series, we follow the data filtering strategy proposed by TAB benchmark~\citep{qiu2025tab} to ensure data quality. Finally, we include a total of 361 univariate and multivariate time series.
For all datasets, we follow the standard train-test split provided by the original sources~\citep{qiu2025tab}. Only five normal samples from the training set are utilized to learn the normality constraint, while the test set includes both normal and anomalous observations for evaluation.

\subsection{Baselines}\label{app:baselines}

We evaluate NCL on ten state-of-the-art foundation models, including six forecasting-based, two reconstruction-based, and two forecasting-reconstruction-combined TSFMs:
\begin{itemize}[leftmargin=*]
	\item \textbf{Aurora}~\citep{wu2026aurora}: A multimodal foundation model pre-trained on a cross-domain multimodal time series corpus with generative probabilistic forecasting. It demonstrates strong zero-shot forecasting capabilities on both unimodal and multimodal scenarios.
	\item \textbf{ROSE}~\citep{wang2025towards}: A foundation model pre-trained with both reconstruction and forecasting tasks. For downstream tasks, only the pre-trained prediction head is used to fine-tune the model.
	\item \textbf{Time-MoE}~\citep{shi2025time}: A mixture-of-experts foundation model that uses a sparse gating mechanism to select specialized experts for different temporal patterns. Pre-trained on 300 billion time points with multi-resolution forecasting objectives.
	\item \textbf{Chronos}~\citep{ansari2024chronos}: A probabilistic forecasting model based on language modeling techniques. It tokenizes time series into discrete bins and applies autoregressive forecasting pre-training on massive time series datasets.
	\item \textbf{TimesFM}~\citep{das2024decoder}: A decoder-only foundation model pre-trained on Google's internal time series datasets with point forecasting objectives.
	\item \textbf{TTM}~\citep{ekambaram2024tiny}: A lightweight foundation model with channel-mixing and time-mixing layers. Despite its compact size, it achieves competitive performance through efficient architectural design.
	\item \textbf{MOMENT}~\citep{goswami2024moment}: A masked autoencoder foundation model pre-trained on the Time Series Pile dataset. It uses a transformer encoder to reconstruct randomly masked time series patches, learning robust temporal representations.
	\item \textbf{UniTS}~\citep{gao2024units}: A unified multi-task foundation model pre-trained with masked reconstruction. It demonstrates strong generalization across various downstream tasks.
    \item \textbf{TSPulse}~\citep{ekambaram2026tspulse}: An ultra-lightweight foundation model with disentangled representation learning. It pre-trains with masked reconstruction augmented by explicit disentanglement across temporal, spectral, and semantic spaces, and employs lightweight post-hoc fusers to selectively combine these views for task-specific adaptation.
	\item \textbf{SEMPO}~\citep{he2025sempo}: A lightweight foundation model featuring energy-aware spectral decomposition and mixture-of-prompts enabled Transformer. It models both high-energy and low-energy informative frequency signals for efficient pre-training, and employs dataset-specific prompts with adaptive routing for parameter-efficient cross-domain adaptation.
\end{itemize}

For anomaly detection, all TSFMs are applied in a zero-shot manner without fine-tuning their pre-trained parameters. We use the forecasting error (for forecasting-based TSFMs), reconstruction error (for reconstruction-based TSFMs), or combined error of forecasting and reconstruction (for forecasting-reconstruction-combined TSFMs) as the baseline anomaly score. NCL is then applied as a plug-and-play framework to enhance their anomaly detection capabilities.

Additionally, we compare NCL-augmented TSFMs with several deep TSAD methods, simple one-liner baselines, and TSAD-specific foundation models in \cref{compare_with_classic}, \ref{compare_with_tsad}, and \ref{app:one_to_one}, respectively. 

Conventional deep TSAD methods (\ie, methods trained from scratch on each dataset) include CroAD~\citep{CrossAD}, ConAD~\citep{zhuang2024noise}, TsNet~\citep{wutimesnet}, and Dlinear~\citep{zeng2023transformers}. 

One-liner baselines (\ie, simple baseline methods requiring no training) include LOF (Local Outlier Factor), Last-1 (last value as prediction), Last-3 (average of last 3 values), Var-100 (100-window moving variance), and Centered-5 (average of the 2 neighboring values before and after).

TSAD-specific foundation models include DADA~\citep{shentu2025towards}, TimeRCD~\citep{lan2026towards}, and TimeRadar~\citep{he2026timeradar}. These approaches aim to learn discriminative representations by pre-training on large-scale datasets augmented with synthetic anomalies. DADA employs dual adversarial decoders to explicitly distinguish normal from abnormal patterns using injected anomalies, while TimeRCD conducts relative context discrepancy pre-training on synthetic anomalies. Pre-training on the anomaly-injected time series corpus released by DADA, TimeRadar identifies diverse anomalies in a domain-rotatable time–frequency space.

\subsection{Implementation Details}
\label{app:implementation}

We utilize the TAB \citep{qiu2025tab} code repository for unified evaluation and follow its default parameter configuration. For all ten TSFMs, we rely on code from: 
\begin{itemize}[leftmargin=*]
	\item \textbf{Aurora}: \url{https://github.com/decisionintelligence/Aurora}
	\item \textbf{ROSE}: \url{https://github.com/decisionintelligence/ROSE}
	\item \textbf{Time-MoE}: \url{https://github.com/Time-MoE/Time-MoE}
	\item \textbf{Chronos}: \url{https://github.com/amazon-science/chronos-forecasting}
	\item \textbf{TimesFM}: \url{https://github.com/google-research/timesfm}
	\item \textbf{TTM}: \url{https://github.com/ibm-granite/granite-tsfm/tree/main/tsfm_public/models/tinytimemixer}
	\item \textbf{MOMENT}: \url{https://github.com/moment-timeseries-foundation-model/moment}
	\item \textbf{UniTS}: \url{https://github.com/mims-harvard/UniTS}
    \item \textbf{TSPulse}: \url{https://github.com/ibm-granite/granite-tsfm}
	\item \textbf{SEMPO}: \url{https://github.com/mala-lab/SEMPO}
\end{itemize}
Several deep TSAD methods (CroAD, ConAD, TsNet, and Dlinear) are adopted from \url{https://github.com/decisionintelligence/TAB}.
LOF is adapted from the pyod package \url{https://github.com/yzhao062/pyod}.
Last-1, Last-3, Var-100, and Centered-5 are adopted from \url{https://gitlab.kuleuven.be/m-group-campus-brugge/dtai_public/publications/iclr2026_timeseriesfoundationmodelsad/}.
DADA is from the official code repository \url{https://github.com/iambowen/DADA}.
TimeRCD is from \url{https://github.com/thu-sail-lab/Time-RCD}. TimeRadar is from \url{https://github.com/mala-lab/TimeRadar}.

For NCL hyperparameters, the number of principal directions (top-$k$ singular vectors) is set to $k=16$ for all experiments, and we also conduct a comprehensive sensitivity analysis in Appendix~\ref{app:top_k}. The margin $\gamma$ in the contrastive normality constraint (\cref{eq:loss_nc}) is set to 1.0. The learning rate is 1e-4 with the Adam optimizer. We train NCL for 5 epochs with only 5 normal samples randomly selected from the training set.
During training, we freeze all parameters of the pre-trained TSFM and only optimize the parameters in NCL.
All experiments are conducted on a workstation with an Intel(R) Xeon(R) Gold 6426Y CPU and a single NVIDIA A100 GPU (80GB memory).

\newpage

\section{Algorithms Details}
\label{app:algorithms}

\cref{alg:training} details the complete training procedure of NCL, including normal subspace construction, patch-adaptive normality transformation, contrastive normality constraint, and output fusion. Only the NCL module parameters are updated while the TSFM $\mathcal{F}$ remains frozen. The training procedure emphasizes five key aspects: \textbf{(i)} The TSFM $\mathcal{F}$ is kept frozen throughout training, preserving its pre-trained knowledge; \textbf{(ii)} The SVD identifies the principal directions of normal variation in the feature space; \textbf{(iii)} Transforming each feature onto the normality manifold along each direction; \textbf{(iv)} The contrastive normality constraint encourages that transformed features form well-separated clusters corresponding to distinct normal directions; and \textbf{(v)} The gated fusion mechanism adaptively combines NCL predictions with original TSFM outputs to amplify normal properties.

\begin{algorithm}[H]
	\caption{Training Procedure of NCL}
	\label{alg:training}
	\begin{algorithmic}[1]
		\renewcommand{\REQUIRE}{\item[\textbf{Input:}]}
		\renewcommand{\ENSURE}{\item[\textbf{Output:}]}
		\REQUIRE Frozen TSFM $\mathcal{F}$, normal training samples $\{\mathbf{X}_t, \mathbf{Y}_t\}_{t=1}^{N_{\text{train}}}$, learning rate $\eta$, margin $\gamma$, number of epochs $E$, number of principal directions $k$, trainable parameters $\mathbf{W}_\alpha, \mathbf{W}_\beta, \mathbf{b}_\alpha, \mathbf{b}_\beta$, decoder weights, fusion weights $\boldsymbol{\lambda}$
		\ENSURE Trained NCL parameters, top-$k$ right singular vectors $\{\mathbf{v}_i\}_{i=1}^k$, cluster centers $\{\mathbf{c}_i\}_{i=1}^k$
		\STATE Initialize learnable parameters: $\mathbf{W}_\alpha, \mathbf{W}_\beta \in \mathbb{R}^{k \times d}$, $\mathbf{b}_\alpha, \mathbf{b}_\beta \in \mathbb{R}^k$
		\STATE Initialize decoder head and fusion weights $\boldsymbol{\lambda} \in (0,1)^D$
		\FOR{epoch $e = 1$ to $E$}
		\STATE Select all the samples from $\{\mathbf{X}_t, \mathbf{Y}_t\}_{t=1}^{N_{\text{train}}}$ to form batch $\{(\mathbf{X}_t, \mathbf{Y}_t)\}_{t \in \mathcal{B}}$
		\STATE $\mathbf{Z} = \{\mathbf{z}_n\}_{n=1}^{N} = \mathcal{F}(\mathbf{X}_{\mathcal{B}})$ \COMMENT{Extract patch features from frozen TSFM}
		\STATE $\hat{\mathbf{Y}}_{\mathcal{B}} = \text{Decoder}_{\mathcal{F}}(\mathbf{Z})$ \COMMENT{Obtain TSFM predictions}
		\STATE $\mathbf{Z} = \mathbf{U}_r \boldsymbol{\Sigma}_r \mathbf{V}_r^\top$ with $\sigma_1 \geq \sigma_2 \geq \cdots \geq \sigma_r$ \COMMENT{Perform SVD on feature matrix}
		\STATE $\{\mathbf{v}_i\}_{i=1}^k \gets \text{columns 1 to } k \text{ of } \mathbf{V}_r$ \COMMENT{Retain top-$k$ right singular vectors}
		\FOR{each patch feature $\mathbf{z}_n$ in $\mathbf{Z}$}
		\STATE \quad $\boldsymbol{\alpha}_n = \sigma(\mathbf{W}_\alpha \mathbf{z}_n + \mathbf{b}_\alpha) \in (0,1)^k$
		\STATE \quad $\boldsymbol{\beta}_n = 2 \cdot \sigma(\mathbf{W}_\beta \mathbf{z}_n + \mathbf{b}_\beta) \in (0,2)^k$
		\FOR{each principal direction $i = 1$ to $k$}
		\STATE $\tilde{\mathbf{z}}_{n}^{(i)} = (\mathbf{I} - \beta_i \mathbf{v}_i \mathbf{v}_i^\top) \mathbf{z}_{n} + \alpha_i \beta_i \mathbf{v}_i$ \COMMENT{Normality Transformation, \cref{eq:transform}}
		\ENDFOR
		\STATE $\tilde{\mathbf{z}}_n = \frac{1}{k} \sum_{i=1}^k \tilde{\mathbf{z}}_n^{(i)}$ \COMMENT{Aggregate views}
		\ENDFOR
		\FOR{each direction $i = 1$ to $k$}
		\STATE $\mathbf{c}_i = \frac{1}{N} \sum_{n=1}^{N} \tilde{\mathbf{z}}_n^{(i)}$ 
		\ENDFOR
		\STATE Initialize $\mathcal{L}_{\mathrm{ConNC}} = 0$
		\FOR{each patch $n = 1$ to $N$}
		\FOR{each direction $i = 1$ to $k$}
		\STATE $d_{\text{pos}} = \|\tilde{\mathbf{z}}_n^{(i)} - \mathbf{c}_i\|_2$ \COMMENT{Compute positive pair distance}
		\STATE $d_{\text{neg}} = \min_{j \neq i} \|\tilde{\mathbf{z}}_n^{(i)} - \mathbf{c}_j\|_2$ \COMMENT{Compute negative pair distances}
		\STATE $\mathcal{L}_{\mathrm{ConNC}} \gets \mathcal{L}_{\mathrm{ConNC}} + \max(0, d_{\text{pos}} + \gamma - d_{\text{neg}})$ \COMMENT{Accumulate triplet loss}
		\ENDFOR
		\ENDFOR
		\STATE $\mathcal{L}_{\mathrm{ConNC}} = \mathcal{L}_{\mathrm{ConNC}} / (N \cdot k)$ \COMMENT{\cref{eq:loss_nc}}
		\STATE $\tilde{\mathbf{Y}}_{\mathcal{B}}^{\mathrm{ncl}} = \text{Decoder}_{\mathrm{ncl}}([\tilde{\mathbf{z}}_1, \ldots, \tilde{\mathbf{z}}_{N}])$ \COMMENT{Decode calibrated features}
		\STATE $\tilde{\mathbf{Y}}_{\mathcal{B}} = \boldsymbol{\lambda} \odot \tilde{\mathbf{Y}}_{\mathcal{B}}^{\mathrm{ncl}} + (1 - \boldsymbol{\lambda}) \odot \hat{\mathbf{Y}}_{\mathcal{B}}$ \COMMENT{Fuse with TSFM output, \cref{eq:fusion}}
		\STATE $\mathcal{L}_{\mathrm{Pred}} = \frac{1}{|\mathcal{B}|} \sum_{t \in \mathcal{B}} \|\tilde{\mathbf{Y}}_t - \mathbf{Y}_t\|_2^2$ \COMMENT{Compute prediction loss}
		\STATE $\mathcal{L} = \mathcal{L}_{\mathrm{Pred}} + \mathcal{L}_{\mathrm{ConNC}}$ \COMMENT{Total loss, \cref{eq:total_loss}}
		\STATE $\theta_{\mathrm{NCL}} \gets \theta_{\mathrm{NCL}} - \eta \nabla_{\theta_{\mathrm{NCL}}} \mathcal{L}$ \COMMENT{Update NCL parameters via gradient descent}
		\ENDFOR
		\RETURN Trained NCL parameters, $\{\mathbf{v}_i\}_{i=1}^k$, $\{\mathbf{c}_i\}_{i=1}^k$
	\end{algorithmic}
\end{algorithm}

\newpage

\section{Computational Complexity Analysis}

We further analyze the computational complexity of NCL. Given $N$ patches from few-shot normal training samples, feature dimension $d$, and $k$ principal directions, the normal subspace construction via SVD requires $\mathcal{O}(N^2 \cdot d)$ one-time cost. Then for per-iteration complexity, TSFM feature extraction requires $\mathcal{O}(N \cdot d \cdot C_{\mathcal{F}})$ where $C_{\mathcal{F}}$ is TSFM complexity (frozen, no gradients); normality transformation is $\mathcal{O}(N \cdot k \cdot d)$ for all $k$ views; $\mathcal{O}(N \cdot k^2 \cdot d)$ is needed for distance computations; and lightweight projection requires $\mathcal{O}(k \cdot d)$. Since $k \ll d$ and the TSFM is frozen, we can effectively conduct NCL training in approximate $\mathcal{O}(N \cdot k^2 \cdot d)$ time, which is linear with dimensionality $d$. The comparisons in \cref{eff_ana} and \cref{app:efficiency_comparison} empirically validate the computational efficiency of NCL.

\section{Detailed Efficiency Comparison with Other TSFM Adaptation Methods}\label{app:efficiency_comparison}

\cref{fig:plugin_pra_time_compare_all} provides a detailed efficiency comparison of NCL against other TSFM adaptation methods across different TSFMs on multivariate time series. The results consistently demonstrate NCL's superior efficiency. By freezing the TSFM backbone and only optimizing lightweight projection heads, NCL requires minimal parameter update overhead during training. In contrast, LoRA must store gradients for updating key TSFM parameters, and CoRA needs more parameters to model the cross-channel association. It is worth noting that NCL has slightly more trainable parameters compared to LoRA on reconstruction-based TSFMs (MOMENT and UniTS). This is because the reconstruction is performed on the entire series, and the final linear head of NCL thus introduces more parameters. 
Beyond parameter overhead, NCL also exhibits the fastest training times across nearly all TSFMs. For MOMENT, NCL completes training in approximately 2.59 seconds, while LoRA requires 2.71 seconds and CoRA needs 3.32 seconds. Similar patterns are observed for other TSFMs including Chronos, ROSE, and SEMPO. The efficiency advantage makes NCL particularly attractive for practical deployment scenarios where frequent model updates are required.

\begin{figure*}[ht]
	\centering
	\includegraphics[width=1.0\linewidth]{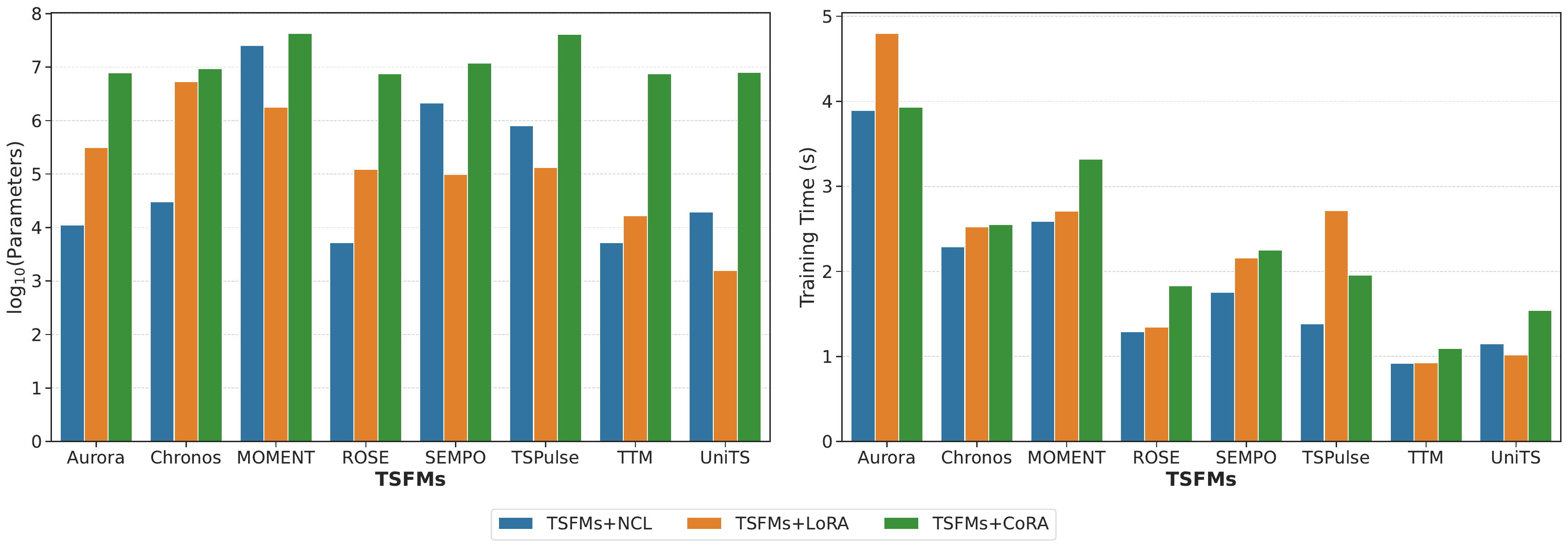}
	\caption{Efficiency comparison of NCL and other TSFM adaptation methods w.r.t. different TSFMs.}
	\label{fig:plugin_pra_time_compare_all}
\end{figure*}

\section{Comparison with Deep TSAD and One-Liner Methods}\label{compare_with_classic}
\input{tab/all_PR_compare.tex}
To further demonstrate the effectiveness of NCL, we compare it against advanced deep TSAD methods (trained from scratch on each dataset) and simple one-liner baselines (requiring no training). \cref{tab:PR} presents comprehensive precision-recall-based metrics including the Area Under the Precision-Recall Curve (A-P), the Range Area Under the Precision-Recall Curve (R-A-P), and the Volume Under the Surface of Precision-Recall (VUS-PR, abbreviated as V-PR) across all ten datasets. \cref{tab:ROC} reports comprehensive receiver-operating-characteristic-based metrics including the Area under the Receiver Operating Characteristics Curve (A-R),  the Range Area under the Receiver Operating Characteristics Curve (R-A-R), and the Volume Under the Surface of Receiver Operating Characteristic (VUS-ROC, abbreviated as V-ROC) across all ten datasets.

\textbf{Comparison with Deep TSAD Methods.} NCL consistently outperforms sophisticated deep TSAD methods such as CroAD, ConAD, TsNet, and Dlinear. Specifically, UniTS+NCL achieves the best average rank of \textbf{2.13} across all PR metrics and datasets, significantly better than CroAD (5.13) and TsNet (6.27). The key advantage of NCL over deep TSAD methods is its generalization capability. Conventional deep TSAD methods must be trained from scratch for each new dataset, while NCL requires only a few normal samples to adapt pre-trained TSFMs. This makes NCL significantly more practical for real-world scenarios where training data is scarce or expensive to obtain.

\textbf{Comparison with One-Liner Baselines.} Interestingly, simple moving-window variance baselines (Var-100) achieve competitive performance with an average rank of 4.00, even outperforming sophisticated deep learning methods. For instance, on UCR and MSL datasets, Var-100 achieves the best A-P scores. This phenomenon has been noted in recent studies~\citep{liu2024elephant,zhu2026foundation}, suggesting that many existing methods fail to substantially improve over simple statistical baselines. However, NCL-enhanced TSFMs (TMM+NCL and UniTS+NCL with average ranks of 3.77 and 2.13 in PR metrics, and average ranks of 3.47 and 2.73 in ROC metrics) consistently surpass these baselines, demonstrating that normality constraint learning successfully addresses the fundamental limitation of TSFMs in anomaly detection.

\textbf{Significant improvement for TSFMs.} As analyzed above, NCL-enhanced TSFMs achieve better performance compared to deep TSAD and one-liner methods. More intuitively, in terms of the PR metrics, the average rank of the original TMM was only 7.47, significantly lower than 5.13 for CroAD and 4.00 for Var-100. After adding NCL, the rank improves to 3.77. It is the gains brought by NCL that enable TSFMs to achieve the best ranks compared to the baselines. \eg, 4.40 $\rightarrow$ 2.13 (UniTS on PR metrics), 7.23 $\rightarrow$ 3.47 (TMM on ROC metrics), 4.93 $\rightarrow$ 2.73 (UniTS on ROC metrics). 
The consistent improvement of NCL across diverse metrics and datasets validates that the proposed normality constraint effectively steers frozen TSFMs toward enhanced anomaly discrimination.

\input{tab/all_ROC_compare.tex}

\section{Comparison with TSAD-Specific Foundation Models}\label{compare_with_tsad}

\input{tab/ad_TSFM_compare.tex}
We further compare NCL against state-of-the-art TSAD-specific foundation models DADA~\citep{shentu2025towards}, TimeRCD~\citep{lan2026towards}, and TimeRadar~\citep{he2026timeradar} in both zero-shot and few-shot settings. We apply NCL to four representative general-purpose TSFMs: Time-MoE, TimesFM, TMM, and UniTS, and the results are presented in \cref{tab:ad_tsfm_compare}.
NCL demonstrates substantial improvements when integrated with general-purpose TSFMs, enabling them to achieve competitive or superior performance compared to TSAD-specific foundation models. Specifically, UniTS+NCL achieves the best performance on 4 out of 10 datasets, while TMM+NCL excels on 2 datasets. Both Time-MoE+NCL and TimesFM+NCL are competitive on datasets such as PSM and SMD. This indicates that NCL effectively transforms them into powerful anomaly detectors.

While DADA, TimeRCD, and TimeRadar are specifically engineered for anomaly detection, they show inconsistent performance across datasets. DADA$_\text{five}$ performs well on PUMP (0.2541) and NYC (0.3744), but struggles on other datasets like IOPS (0.1453) and SWAT (0.4262). TimeRCD$_\text{zero}$ achieves notable performance on SWAT (0.5721) but falls short on most other benchmarks. TimeRadar$_\text{zero}$ achieves competitive performance on certain datasets such as SWAT (0.5802), yet underperforms on other datasets. In contrast, NCL-augmented TSFMs demonstrate more robust performance across diverse domains, achieving the higher average performance across all datasets. This suggests that general-purpose TSFMs, when properly adapted through NCL, can leverage their broad pre-trained knowledge more effectively than models trained specifically for anomaly detection from scratch.

\section{One-to-One Comparison with One-Liner Baselines}\label{app:one_to_one}

To provide a more granular comparison against simple one-liner baselines, we conduct one-to-one comparisons across all 361 time series from the ten datasets. \cref{chronos_two} and \cref{moment_two} visualize the win-loss records of Chronos and MOMENT against three representative baselines, respectively.
\begin{figure}[!b]
	\centering
	\begin{subfigure}{1\linewidth}
		\includegraphics[width=1.0\textwidth]{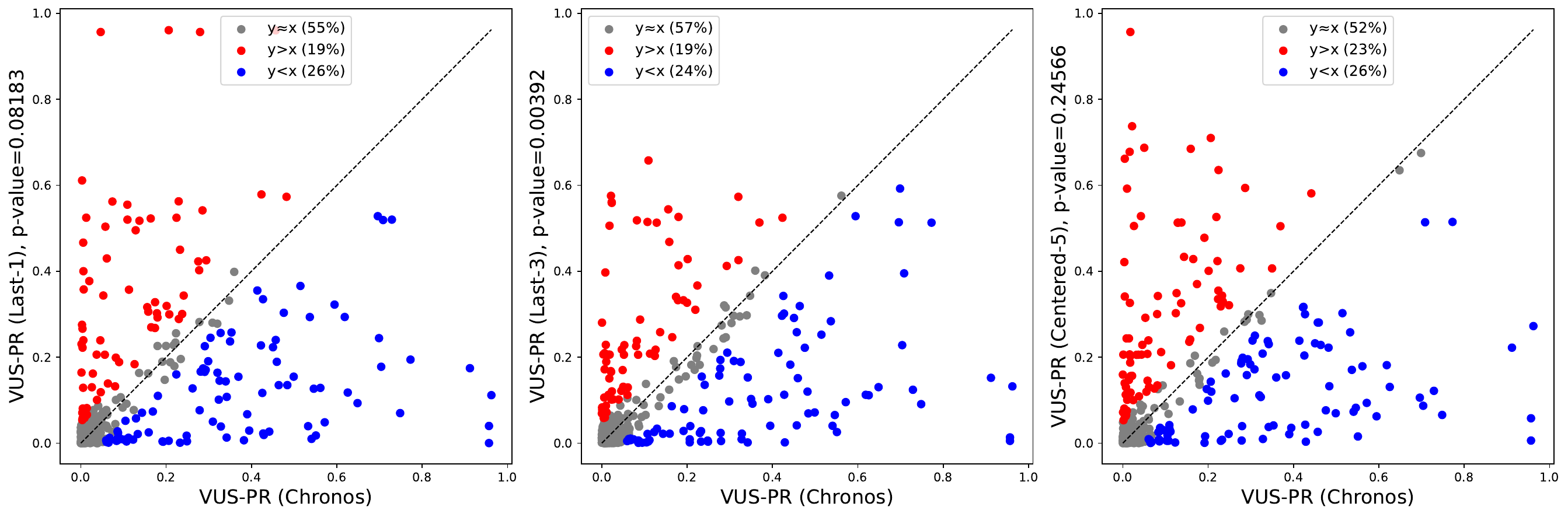}
		\caption{Comparison between Chronos and the squared-difference baselines.}
		\label{Chronos_compare}
	\end{subfigure}
	\vspace{0.01\textwidth}
	
	\begin{subfigure}{1\linewidth}
		\includegraphics[width=1.0\textwidth]{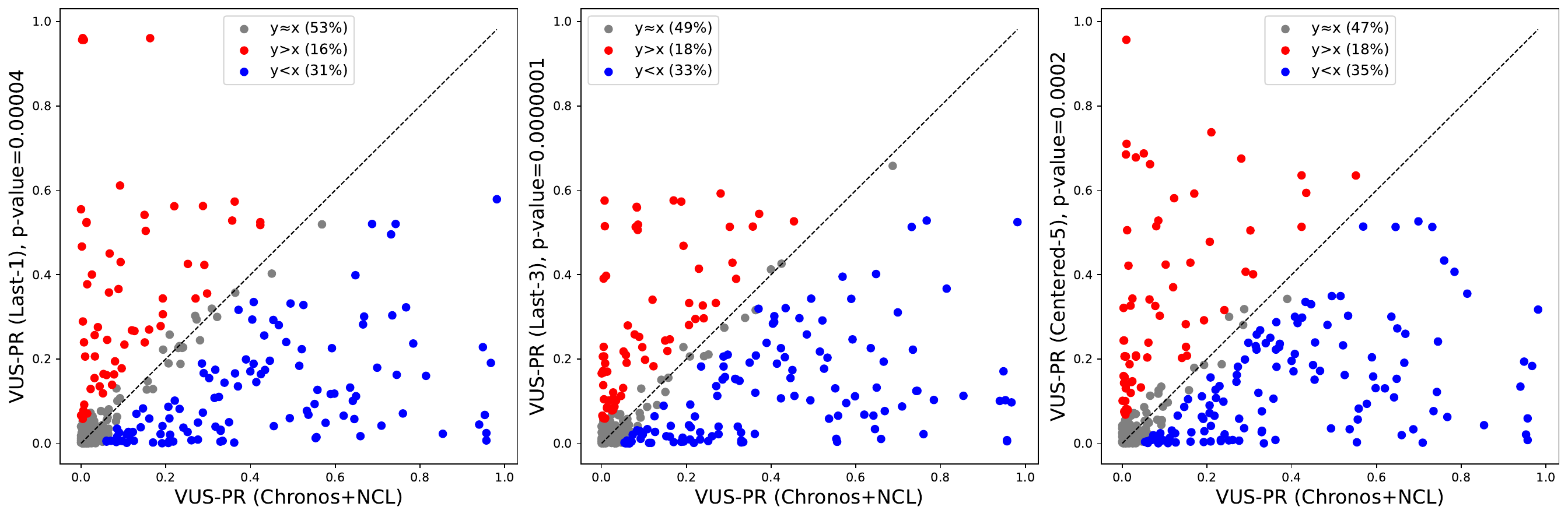}
		\caption{Comparison between Chronos+NCL and the squared-difference baselines.}
		\label{Chronos+ncl_compare}
	\end{subfigure}
	\caption{One-to-one comparison between Chronos (without and with NCL) and three squared-difference baselines across 361 time series from ten TSAD datasets.}
	\label{chronos_two}
\end{figure}

\textbf{Chronos vs. Squared-Difference Baselines.} \cref{Chronos_compare} shows that the vanilla Chronos struggles against simple squared-difference baselines (Last-1, Last-3, Centered-5), winning only 26\% against Last-1, 24\% against Last-3, and 26\% against Centered-5. More importantly, the $p$-value between Chronos and Centered-5 is 0.246, indicating that the statistics are not significant. This confirms the observation in recent studies~\citep{zhu2026foundation} that foundation models without proper normality constraints often fail to outperform trivial baselines. By contrast, as shown in \cref{Chronos+ncl_compare}, Chronos+NCL dramatically improves the win rate, achieving \textbf{31\%} against Last-1, \textbf{33\%} against Last-3, and \textbf{35\%} against Centered-5. In the meantime, the $p$-value between Chronos+NCL and Centered-5 is 0.0002, indicating a significant difference. This substantial improvement demonstrates NCL's effectiveness in steering the foundation model toward normality-biased predictions that amplify anomalies.

\textbf{MOMENT vs. Moving-Window Variances.} \cref{MOMENT_compare} reveals a similar pattern for reconstruction-based TSFMs. Vanilla MOMENT wins only 34\% against Var-96, 38\% against Var-128, and 39\% against Var-512, and the $p$-value between MOMENT and Var-96 is 0.4306. After applying NCL, as shown in \cref{MOMENT+ncl_compare}, MOMENT+NCL significantly improves to \textbf{40\%} against Var-96, \textbf{40\%} against Var-256, and \textbf{43\%} against Var-512; and reduces the $p$-value between MOMENT and Var-96 to 0.0303. The improvements are particularly pronounced against the stronger Var-96 baseline, indicating that NCL effectively captures fine-grained normal patterns that simple variance measures cannot.

The consistent improvements across hundreds of individual time series provide strong statistical evidence for NCL's effectiveness. While NCL substantially improves overall performance, there are time series where simple baselines outperform NCL-enhanced TSFMs. These cases stem from two main aspects: (i) time series with extremely simple periodic patterns where one-liner methods naturally excel; and (ii) time series with non-stationary training samples where the normal subspace construction becomes unstable. Future work could explore stationary processing strategies to address the latter limitation.

\begin{figure}[h]
	\centering
	\begin{subfigure}{1\linewidth}
		\includegraphics[width=1.0\textwidth]{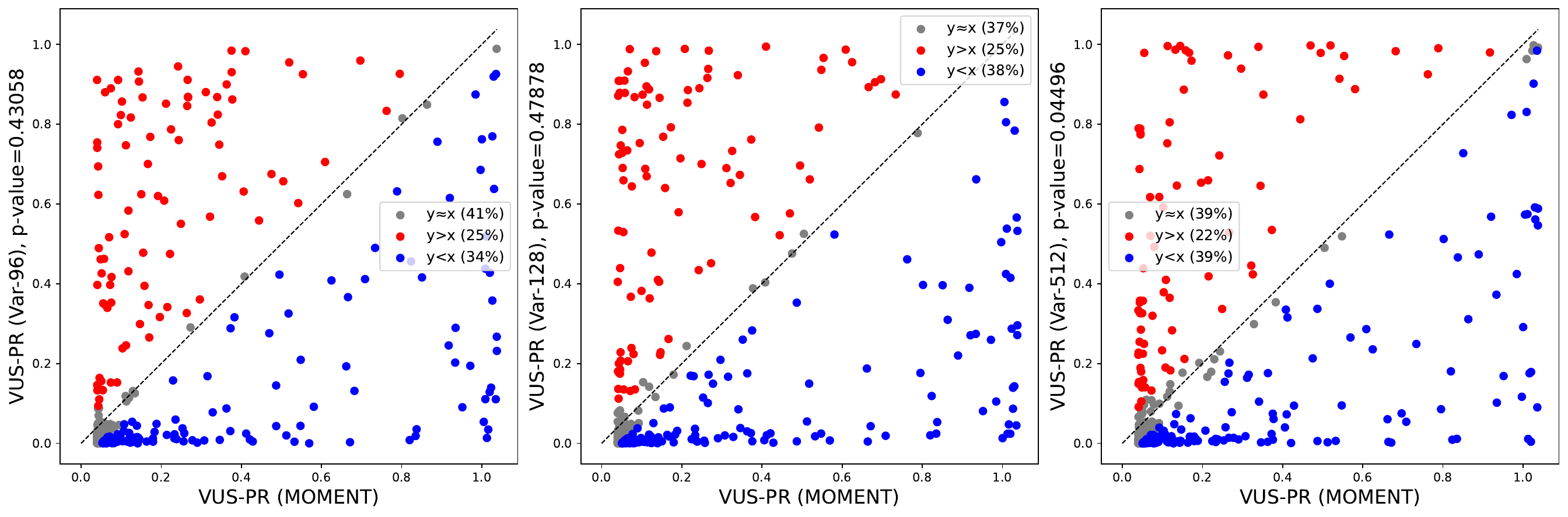}
		\caption{Comparison between MOMENT and the moving-window variances.}
		\label{MOMENT_compare}
	\end{subfigure}
	\vspace{0.01\textwidth}
	
	\begin{subfigure}{1\linewidth}
		\includegraphics[width=1.0\textwidth]{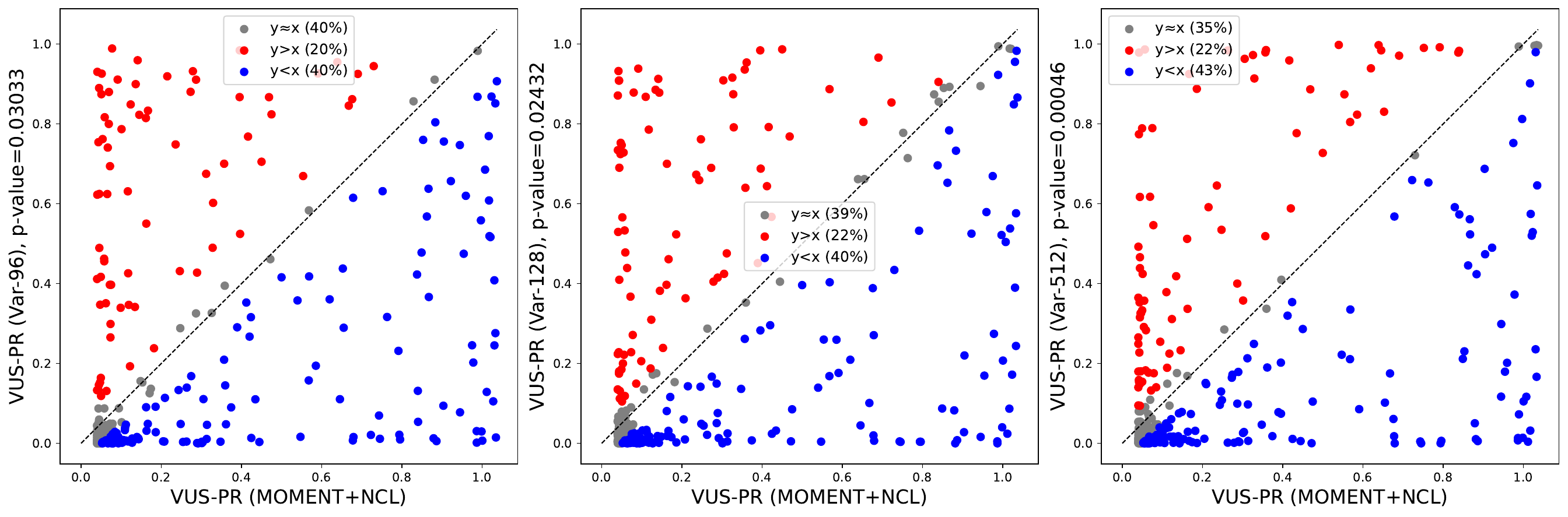}
		\caption{Comparison between MOMENT+NCL and the moving-window variances.}
		\label{MOMENT+ncl_compare}
	\end{subfigure}
	\caption{One-to-one comparison between MOMENT (without and with NCL) and three moving-window variances across 361 time series from ten TSAD datasets.}
	\label{moment_two}
\end{figure}

\section{Distinguishable Distributions of Anomaly Scores}\label{app:anomly_dis}

\cref{Two_dis} provides additional evidence for NCL's effectiveness by visualizing the anomaly score distributions on different TSFMs and datasets. The results consistently show that NCL successfully widens the gap between normal and abnormal score distributions.
As shown in \cref{SMD_Aurora}, vanilla Aurora produces highly overlapping distributions between normal and abnormal observations, with substantial overlap in the 0-0.2 score range. This overlap makes it extremely difficult to set an appropriate threshold for anomaly detection. After applying NCL, the distributions become well-separated: normal observations concentrate around 0-0.1, while anomalous observations shift toward 0.1-0.4 with a longer tail extending to higher scores, creating a clear separation margin. The reduced overlap translates directly to improved detection performance with fewer false positives and false negatives. \cref{PUMP_UniTS} reveals a similar pattern for reconstruction-based TSFMs. Vanilla UniTS exhibits significant distribution overlap with both normal and abnormal scores concentrated in the 0-0.2 range, making discrimination challenging. With NCL, the normal distribution becomes more compact and concentrated near zero, while the abnormal distribution shifts rightward.

The consistent distributional improvements across different TSFMs (forecasting-based Aurora and reconstruction-based UniTS) and datasets (multivariate SMD and PUMP) demonstrate that NCL's normality constraint is universally effective. Regardless of the TSFM architecture or data characteristics, NCL successfully steers the model toward normality-biased predictions that amplify the normal-abnormal gap.

\begin{figure}[h]
	\centering
	\begin{subfigure}{1\linewidth}
		\includegraphics[width=1.0\textwidth]{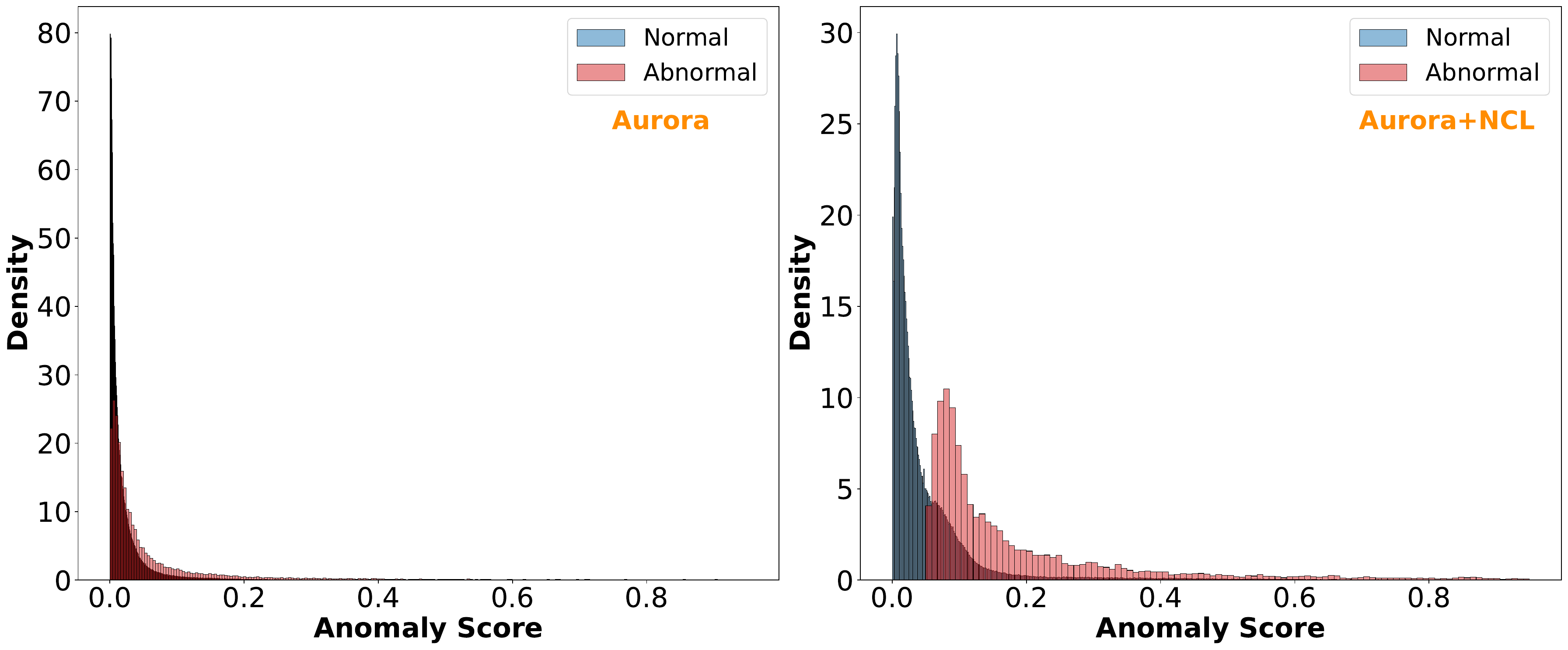}
		\caption{Anomaly scores for Aurora on the SMD dataset.}
		\label{SMD_Aurora}
	\end{subfigure}
	\vspace{0.02\textwidth}

	\begin{subfigure}{1\linewidth}
		\includegraphics[width=1.0\textwidth]{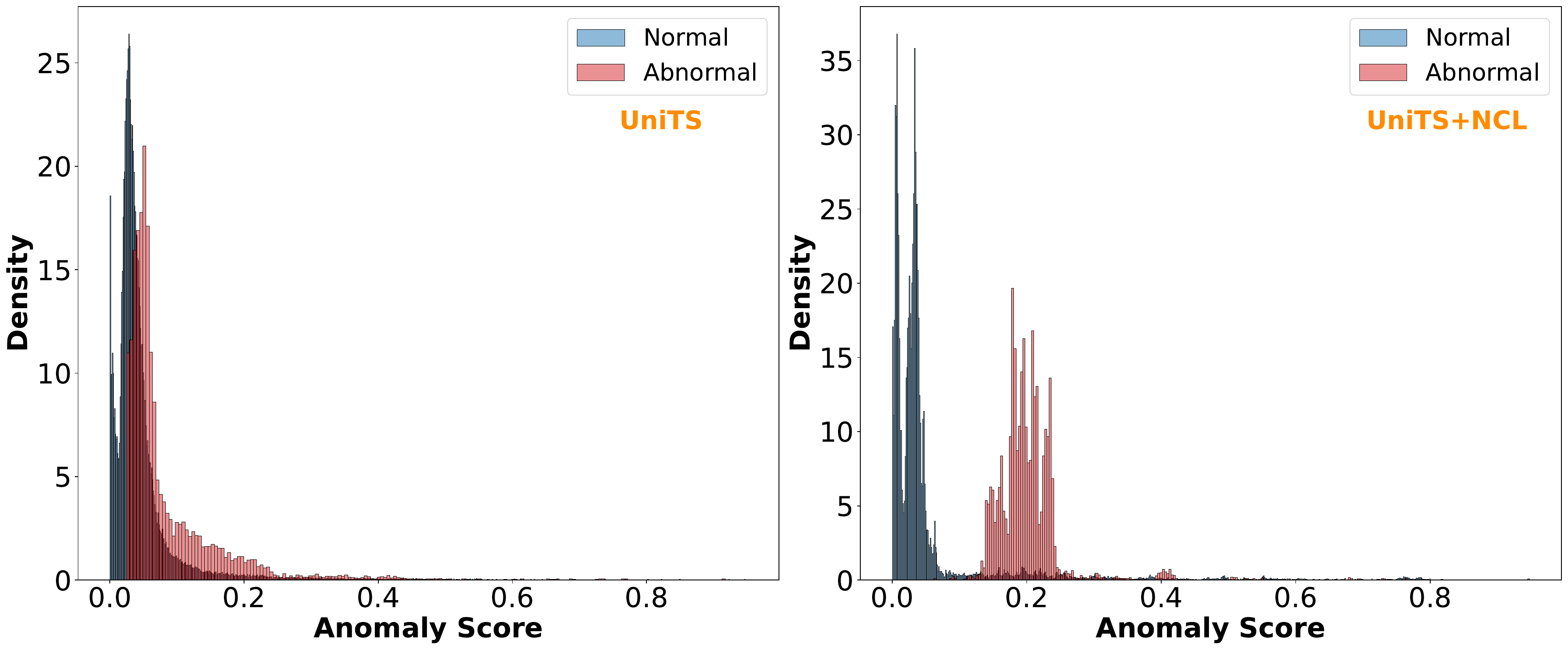}
		\caption{Anomaly scores for UniTS on the PUMP dataset.}
		\label{PUMP_UniTS}
	\end{subfigure}
	\caption{Anomaly score distributions for different TSFMs and datasets.}
	\label{Two_dis}
\end{figure}
\section{Visualization of Feature Space}\label{app:visual_feature}

To understand how NCL sharpens the feature normality within the feature space, we utilize t-SNE dimensionality reduction to visualize patch features from vanilla and NCL-augmented TSFMs. \cref{latent} presents the feature space visualization on two representative TSFM-dataset combinations.
As shown in \cref{TMM_PUMP}, vanilla TMM produces a feature space where normal and abnormal patches are heavily intermingled with no clear boundary. The normal features are scattered across a large region, and many abnormal features fall within the normal region, making discrimination extremely difficult. After applying NCL (as shown in \cref{TMM_PUMP_NCL}), the feature space undergoes a dramatic transformation. The abnormal features are clearly separated from the normal clusters.
\cref{UniTS_PSM} and \ref{UniTS_PSM_NCL} show a similar pattern for reconstruction-based TSFMs.
The feature visualizations provide intuitive geometric understanding of how NCL improves anomaly detection. By explicitly constructing an informative normality subspace and constraining features to align with the normality manifolds, NCL transforms the originally broad pattern space into one where normal and abnormal patterns occupy geometrically distinct regions. This spatial separation makes subsequent reconstruction/forecasting error-based anomaly scoring much more discriminative.

\begin{figure}[!htbp]
	\centering
	\begin{subfigure}[t]{0.48\textwidth}
		\centering 
		\includegraphics[width=\linewidth]{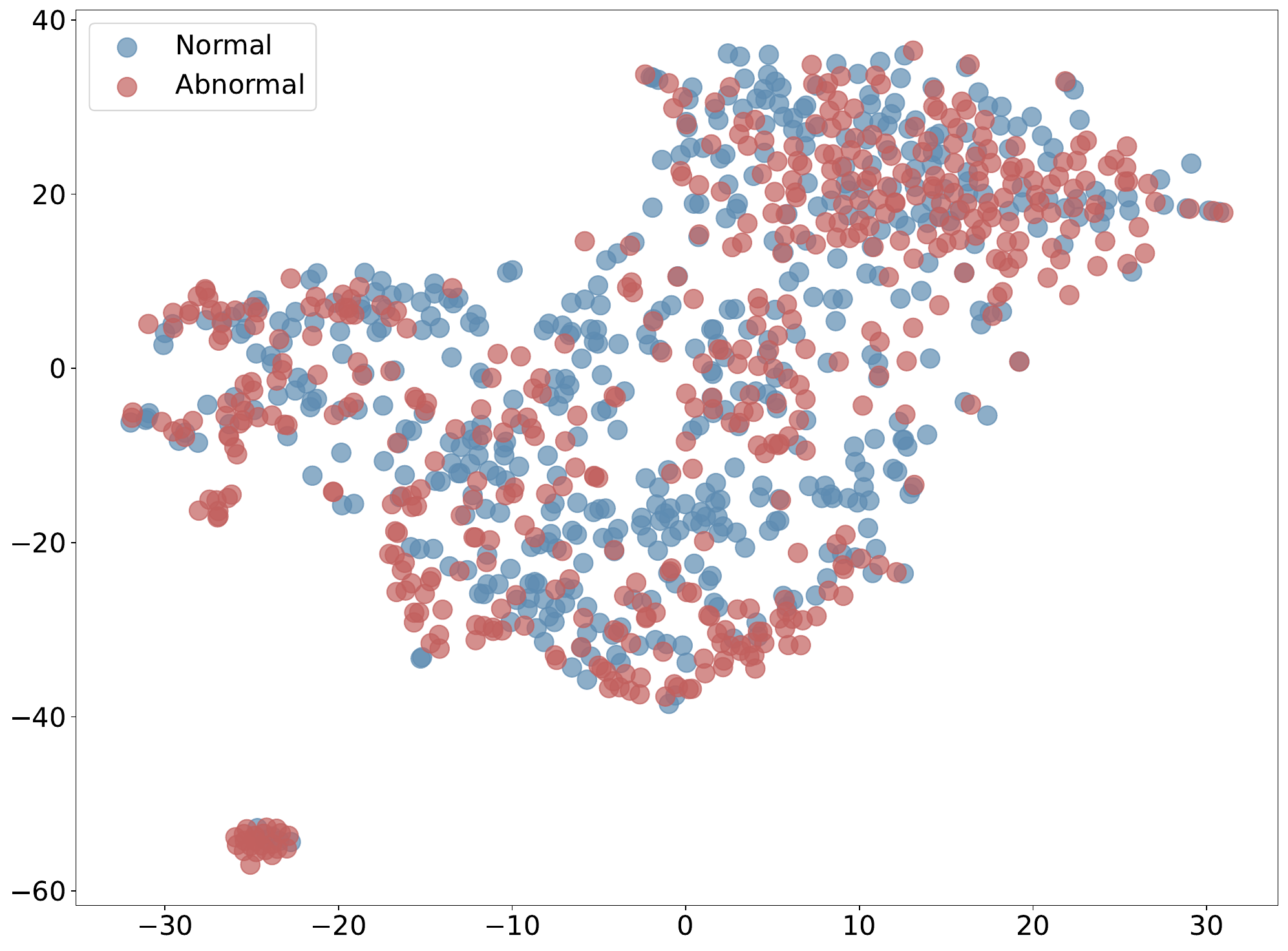} 
		
		\caption{Vanilla TMM on the PUMP dataset.}
		\label{TMM_PUMP}
	\end{subfigure}
	\hspace{0.02\textwidth}
	\begin{subfigure}[t]{0.48\textwidth}
		\centering
		\includegraphics[width=\linewidth]{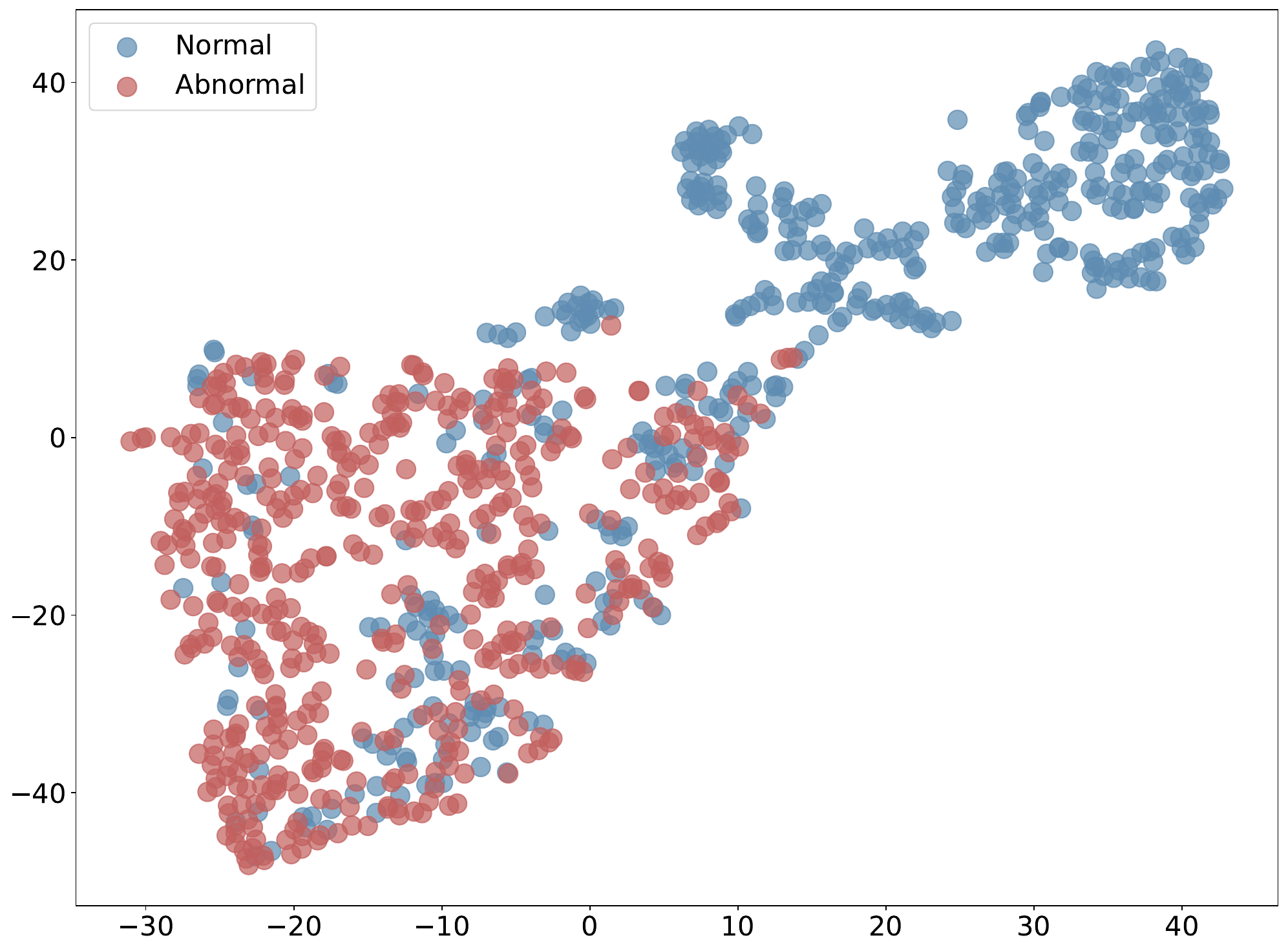}
		\caption{NCL-augmented TMM on the PUMP dataset.}
		\label{TMM_PUMP_NCL}
	\end{subfigure}  \\
	\vspace{0.02\textwidth}
	
	\begin{subfigure}[t]{0.48\textwidth}
		\centering 
		\includegraphics[width=\linewidth]{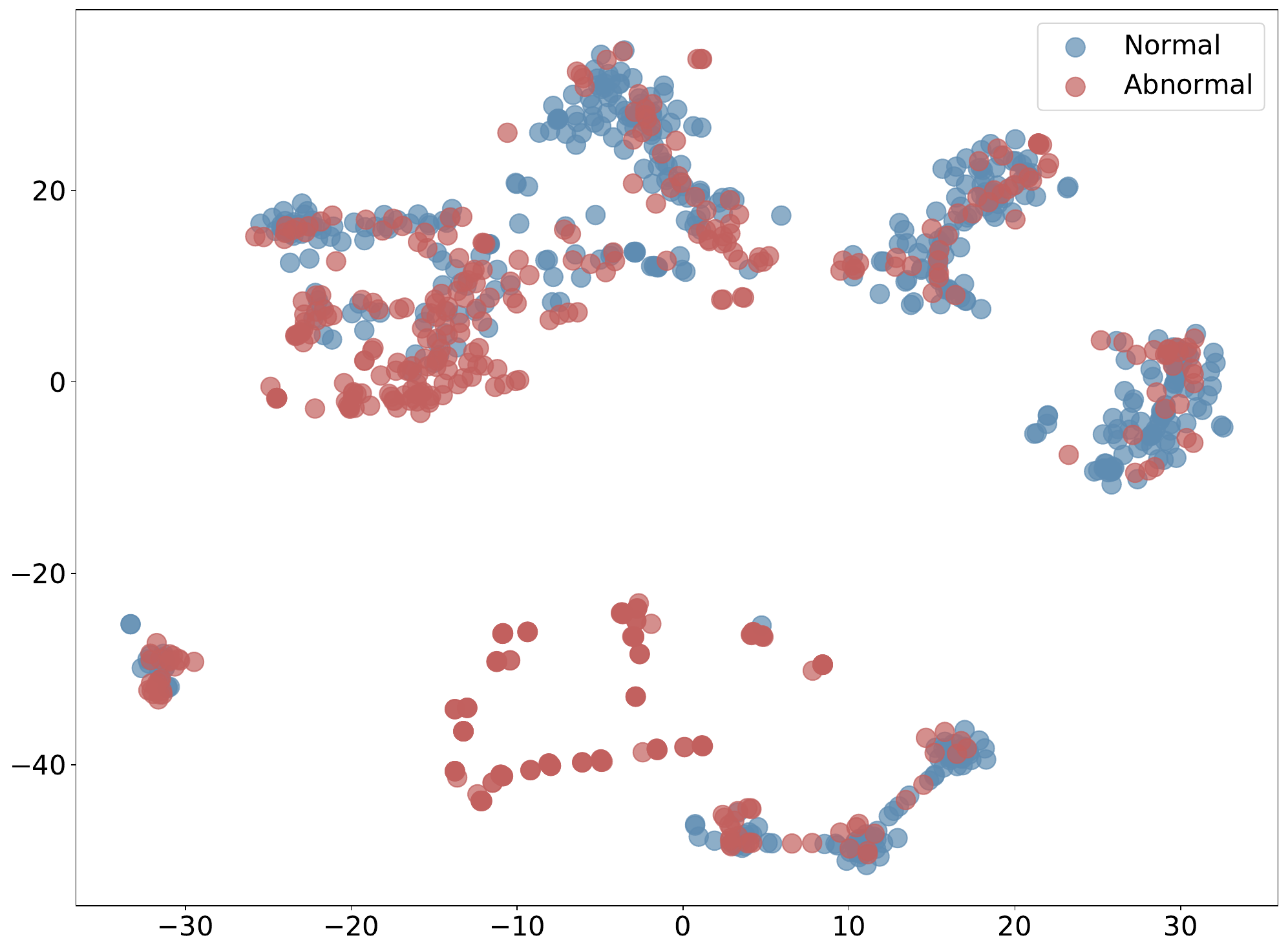} 
		
		\caption{Vanilla UniTS on the PSM dataset.}
		\label{UniTS_PSM}
	\end{subfigure}
	\hspace{0.02\textwidth}
	\begin{subfigure}[t]{0.48\textwidth}
		\centering
		\includegraphics[width=\linewidth]{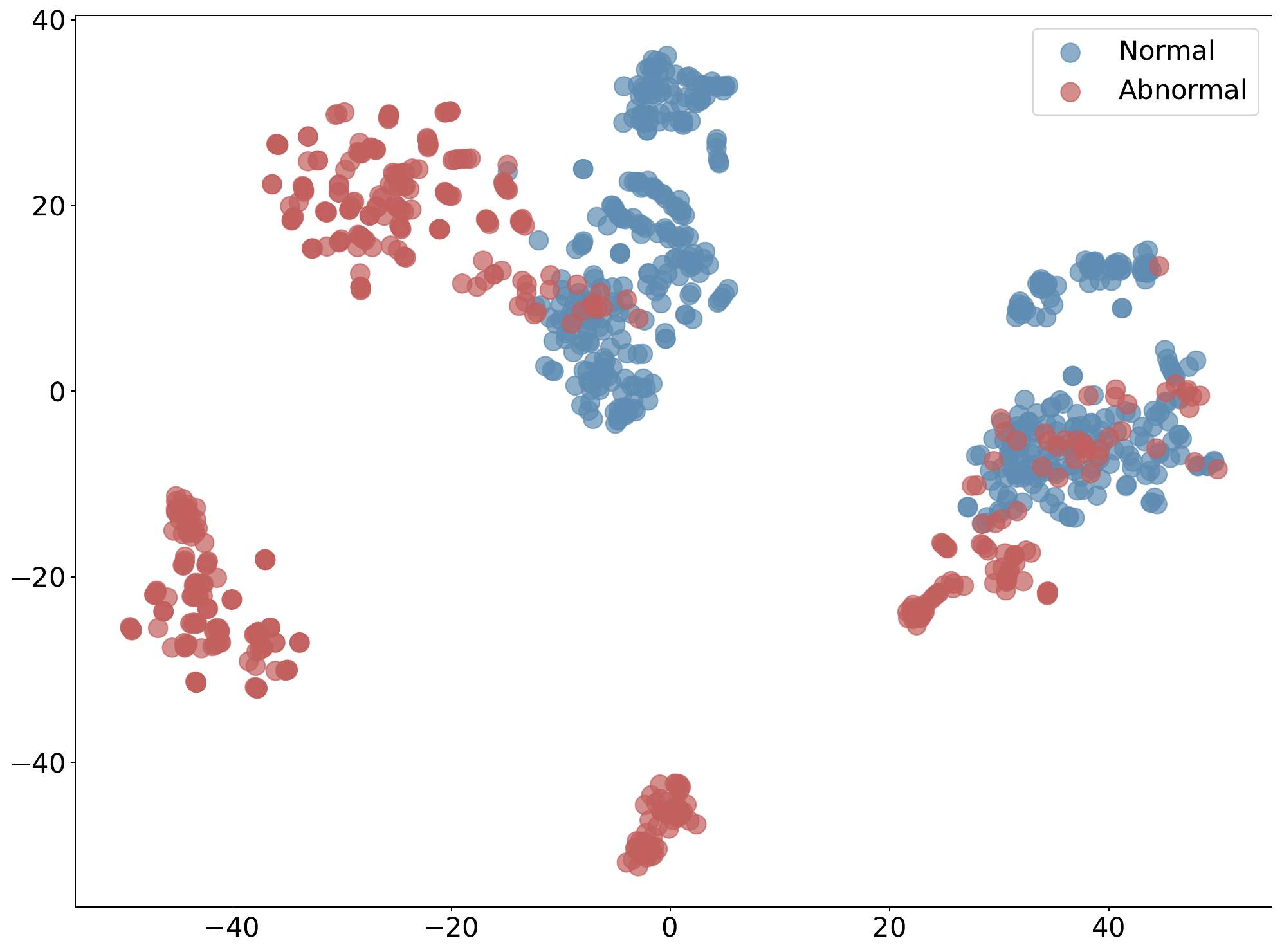}
		\caption{NCL-augmented UniTS on the PSM dataset.}
		\label{UniTS_PSM_NCL}
	\end{subfigure}%
	\caption{The t-SNE visualization of internal features from vanilla and NCL-augmented TSFMs.}
	\label{latent}
\end{figure}

\section{Horizon and Window Size Analysis for TSFMs}\label{seq_analysis}

To investigate the impact of forecasting horizons and reconstruction window sizes on anomaly detection performance, we conduct comprehensive experiments on both forecasting-based and reconstruction-based TSFMs. \cref{horizon} presents the VUS-PR scores across different horizon/window configurations on four representative datasets.

\textbf{Forecasting Horizon Analysis.} For forecasting-based TMM, we vary the forecasting horizon $F \in \{1, 2, 4, 8, 16, 32, 64, 128, 256\}$ while fixing the input window length. As shown in \cref{TMM_horizon} and \ref{TMM_horizon_NCL}, compared to vanilla TMM, NCL-enhanced TMM exhibits relatively stable performance across different horizons, and achieves excellent detection results even with small horizons.
This robustness stems from NCL's explicit construction of a normal subspace. Even when the forecasting task becomes harder, the contrastive normality constraint ensures that features are consistently aligned with normality manifolds, preventing the model from fitting anomalous patterns equally well.

\textbf{Reconstruction Window Size Analysis.} For reconstruction-based UniTS, we evaluate different window sizes $L \in \{32, 48, 64, 80, 96, 128, 192, 256, 512\}$. \cref{UniTS_win} and \ref{UniTS_win_NCL} reveal that NCL can stably enhance the detection performance of vanilla UniTS under different window sizes. For instance, on PSM, NCL continuously increases the VUS-PR from around 40 to nearly 50.
This validates that NCL's normality transformation operates effectively across different temporal scales. By constructing the normal subspace from patch features rather than raw observations, NCL captures scale-invariant normal patterns that remain discriminative across various window configurations.

The robustness to hyperparameter choices is crucial for practical deployment. Performing extensive grid search to identify optimal horizons or window sizes for each dataset is computationally expensive and may not generalize to new domains. NCL alleviates this burden by providing stable performance across a wide range of configurations, making it easier to deploy TSFMs for anomaly detection in real-world scenarios.

\begin{figure}[!tbp]
	\centering
	\begin{subfigure}[t]{0.48\textwidth}
		\centering 
		\includegraphics[width=\linewidth]{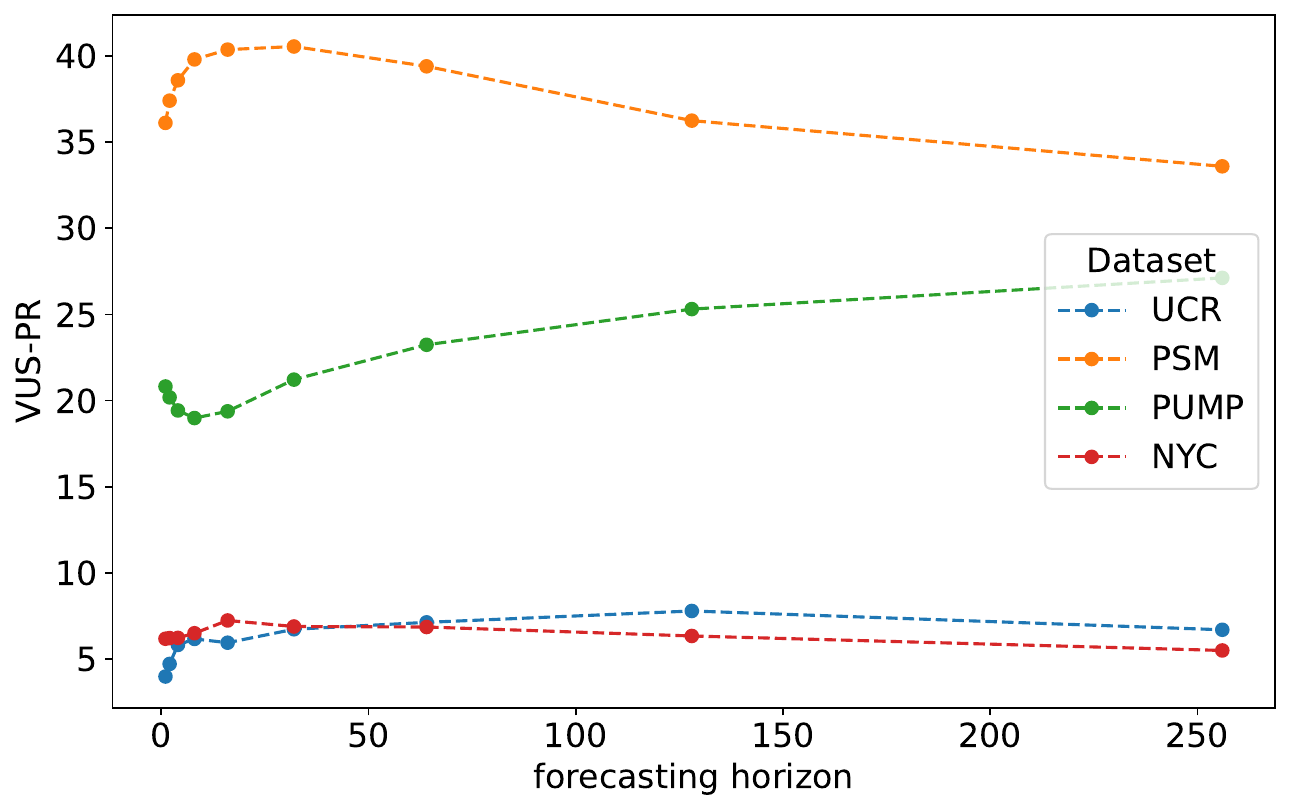} 
		
		\caption{Forecasting-based TMM}
		\label{TMM_horizon}
	\end{subfigure}
	\hspace{0.02\textwidth}
	\begin{subfigure}[t]{0.48\textwidth}
		\centering
		\includegraphics[width=\linewidth]{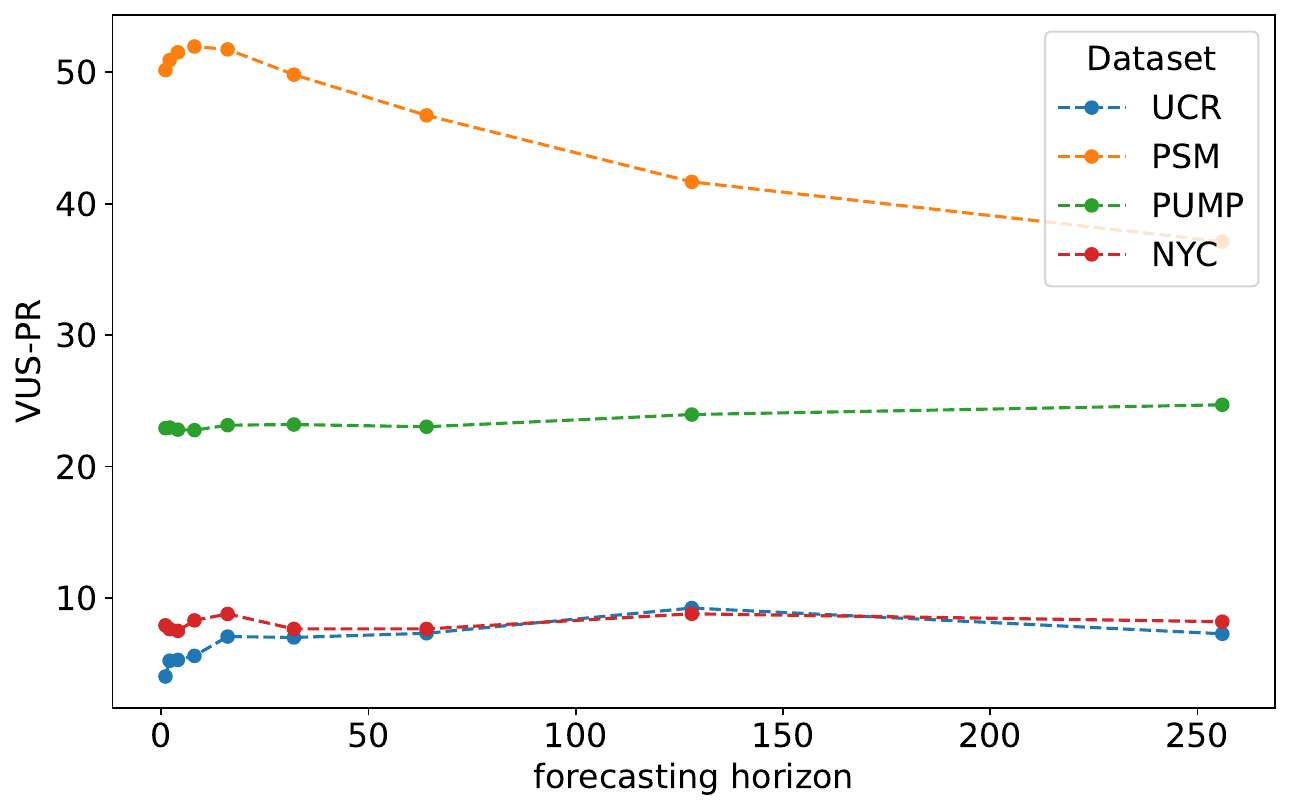}
		\caption{Forecasting-based TMM with NCL}
		\label{TMM_horizon_NCL}
	\end{subfigure}  \\
	\vspace{0.02\textwidth}
	
	\begin{subfigure}[t]{0.48\textwidth}
		\centering 
		\includegraphics[width=\linewidth]{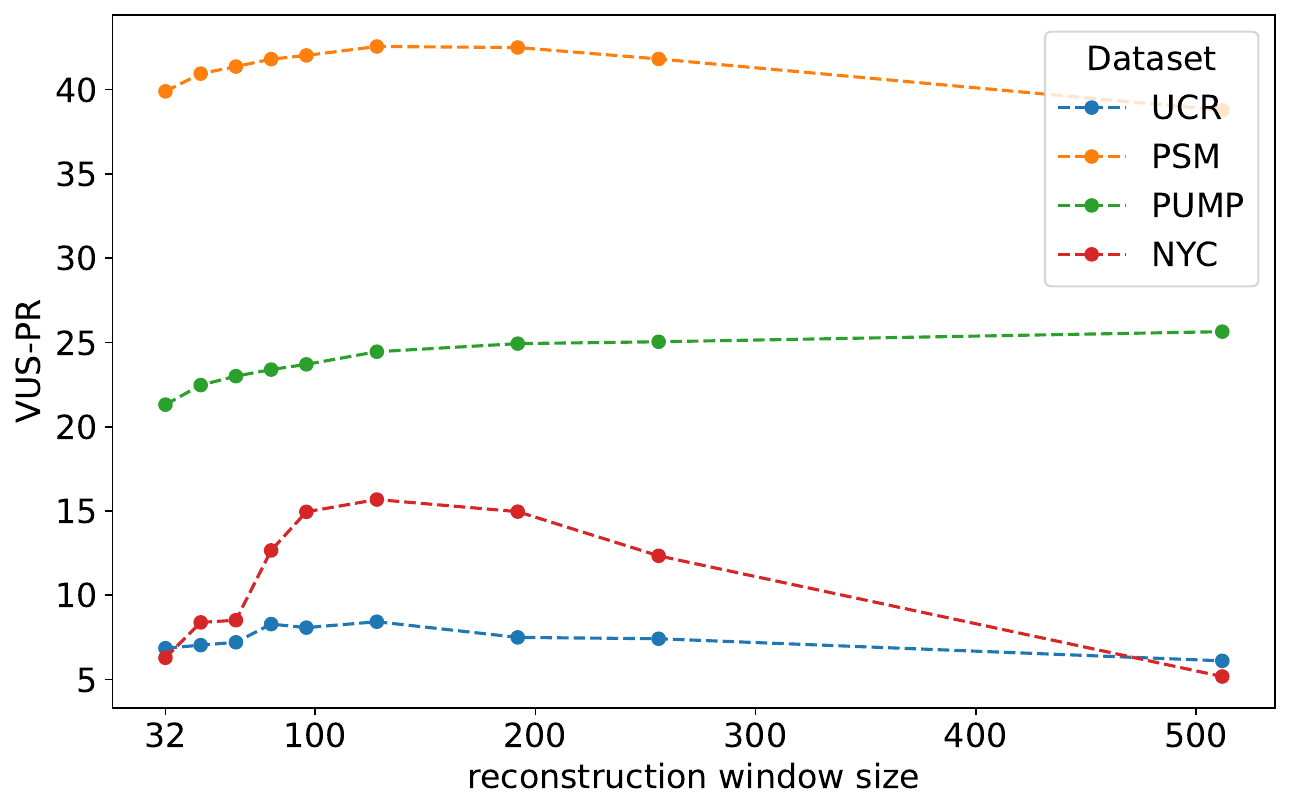} 
		
		\caption{Reconstruction-based UniTS}
		\label{UniTS_win}
	\end{subfigure}
	\hspace{0.02\textwidth}
	\begin{subfigure}[t]{0.48\textwidth}
		\centering
		\includegraphics[width=\linewidth]{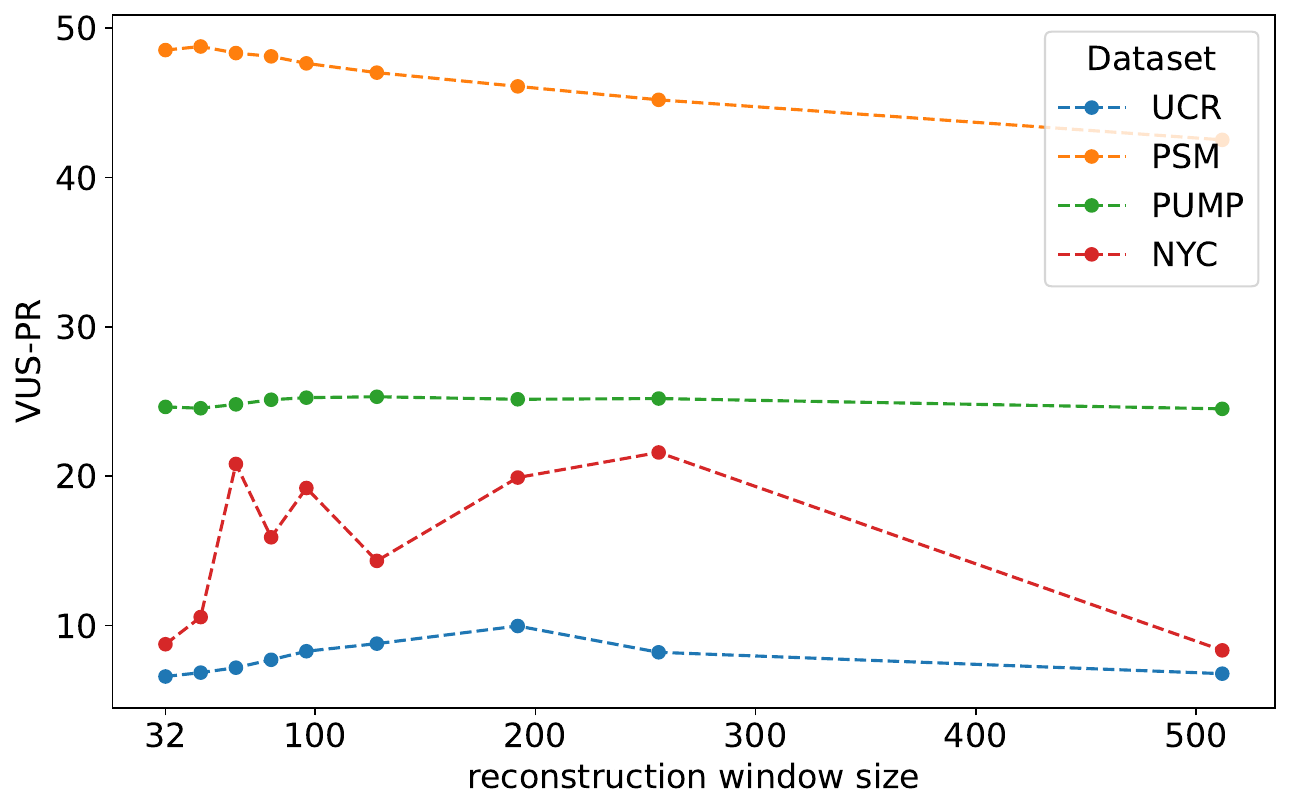}
		\caption{Reconstruction-based UniTS with NCL}
		\label{UniTS_win_NCL}
	\end{subfigure}%
	\caption{VUS-PR scores with various forecasting horizons and reconstruction window sizes, evaluated on four datasets for different TSFMs without and with using NCL.}
	\label{horizon}
\end{figure}

\section{Visualization of Anomaly Scores}\label{app:visual_anom}
\begin{figure}[!htbp]
	\centering
	\begin{subfigure}[t]{0.48\textwidth}
		\centering 
		\includegraphics[width=0.9\linewidth]{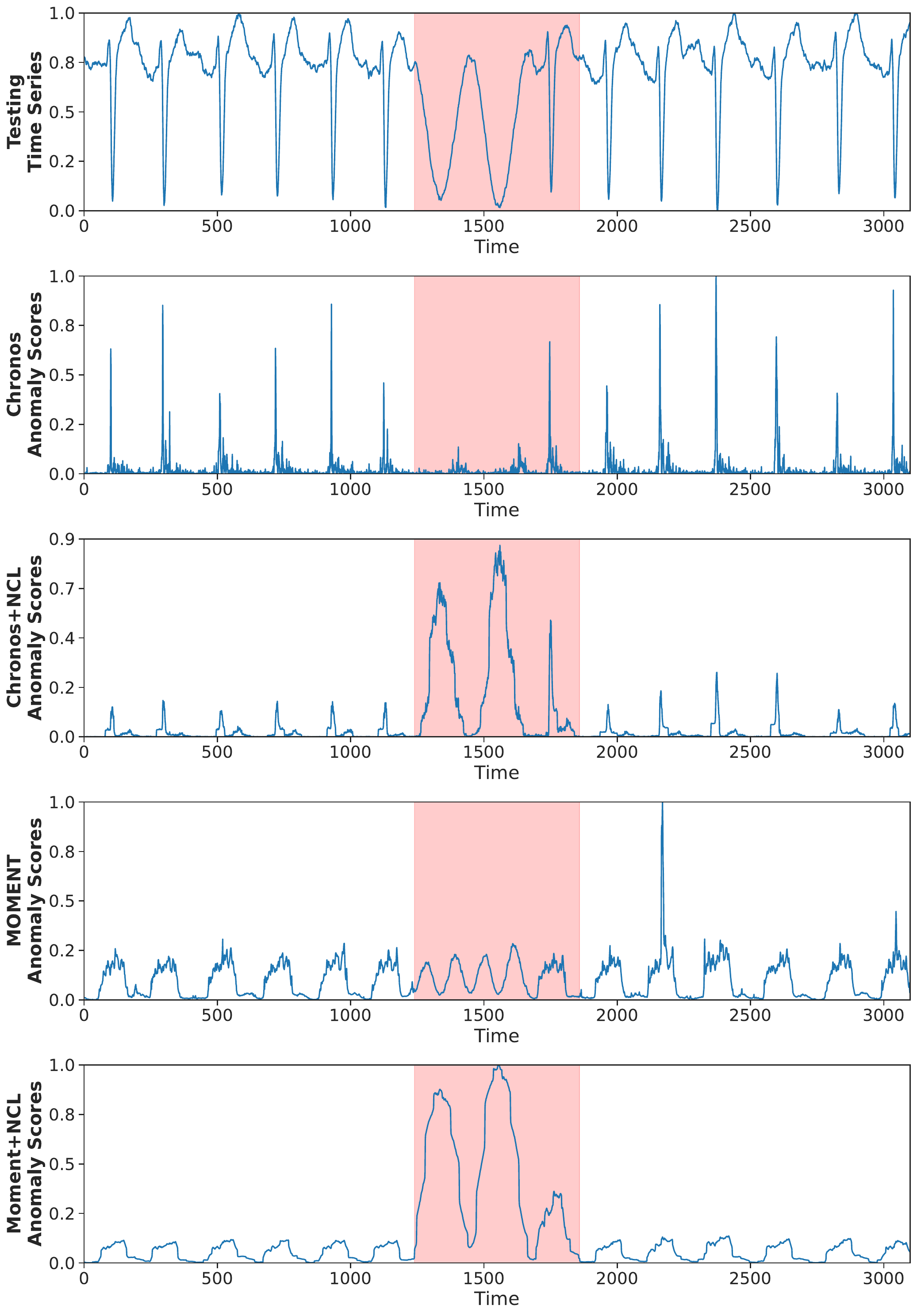} 
		
		\caption{DISTORTED1sddb40 from UCR dataset}
		\label{UCR_seg_one}
	\end{subfigure}
	\hspace{0.01\textwidth}
	\begin{subfigure}[t]{0.48\textwidth}
		\centering
		\includegraphics[width=0.9\linewidth]{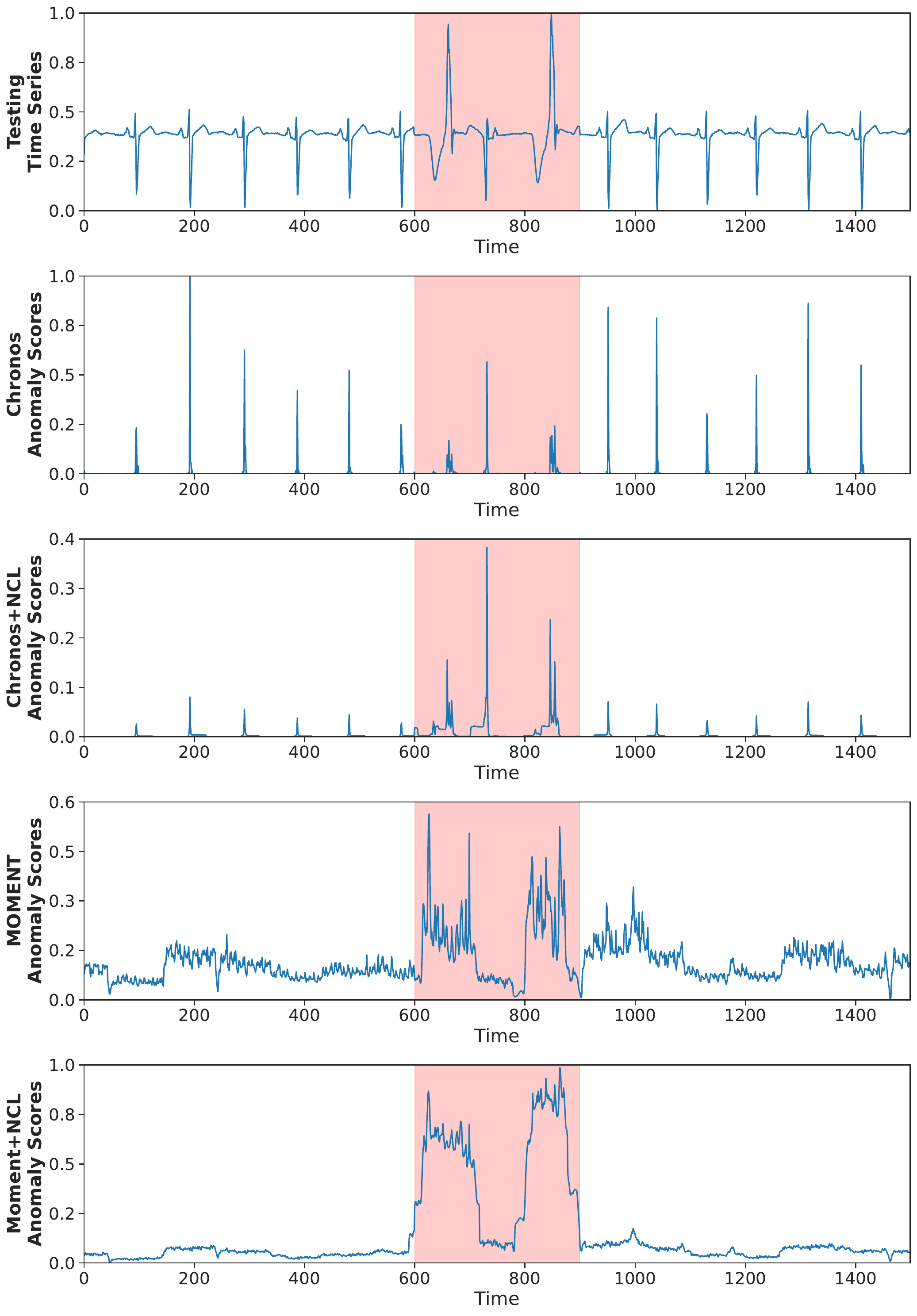}
		\caption{mit14046longtermecg from UCR dataset}
		\label{UCR_seg_two}
	\end{subfigure}  \\
	\vspace{0.02\textwidth}
	
	\begin{subfigure}[t]{0.48\textwidth}
		\centering 
		\includegraphics[width=0.9\linewidth]{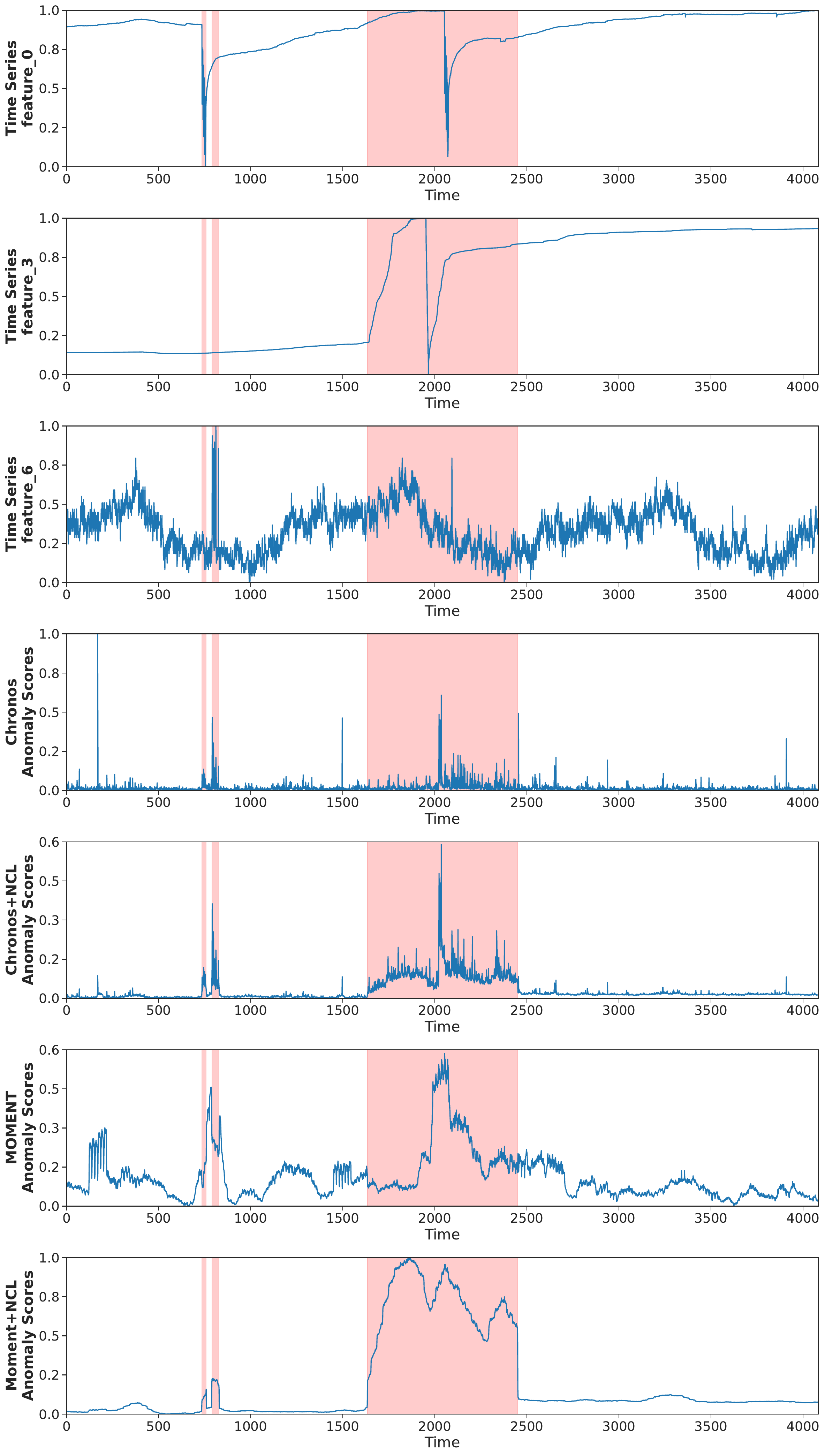} 
		
		\caption{Multivariate segment from PSM dataset}
		\label{PSM_seg}
	\end{subfigure}
	\hspace{0.01\textwidth}
	\begin{subfigure}[t]{0.48\textwidth}
		\centering
		\includegraphics[width=0.9\linewidth]{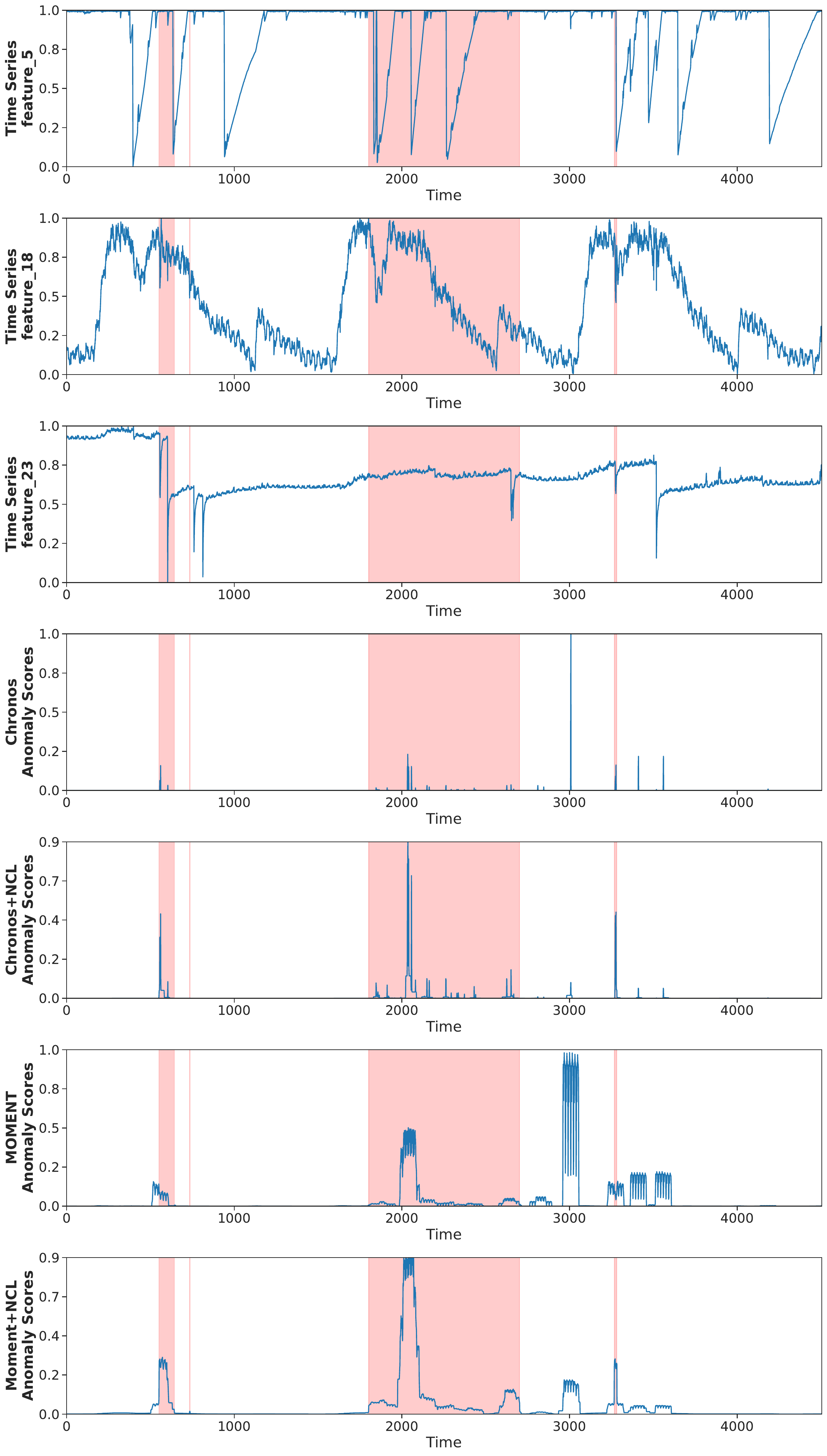}
		\caption{Multivariate segment from SMD dataset}
		\label{SMD_seg}
	\end{subfigure}%
	\caption{Visualization of anomaly scores in comparison with vanilla and NCL-augmented TSFMs on the univariate and multivariate datasets. Red regions highlight the ground truth anomaly intervals.}
	\label{visual}
\end{figure}

We visualize the anomaly scores and ground truth labels on representative time series segments from both univariate and multivariate datasets. \cref{visual} presents original time series, ground truth anomaly intervals, and anomaly scores for different TSFMs.
\cref{UCR_seg_one} and \ref{UCR_seg_two} show the detection results on the UCR time series. Notably, after applying NCL, the anomaly scores exhibit significantly improved alignment with the ground truth. The scores for normal regions are consistently suppressed close to zero, while true anomalies produce sharp, prominent peaks that clearly stand out. For instance, in \cref{UCR_seg_one}, the anomalous segment around timesteps 1250-1750 that was previously missed now generates distinctly elevated scores. This improvement validates that NCL's normality constraint effectively amplifies the normal-abnormal discrimination gap by steering features toward the normal subspace and penalizing deviations through contrastive misalignment. NCL-enhanced TSFMs also produce distinct, well-separated peaks responding to true anomalies in multivariate time series (as shown in \cref{PSM_seg} and \ref{SMD_seg}). The reduced false positive rate and improved peak sharpness on actual anomalies demonstrate NCL's effectiveness in multivariate scenarios where cross-channel dependencies add complexity. These visual improvements further confirm that NCL successfully amplifies the normal-abnormal discrimination signal that vanilla TSFMs lack.

\section{Utility of Few-Shot Normal Samples}\label{app:utility_few_shot}

A key practical question for NCL is how many normal samples are required to construct an informative normality subspace and learn accurate normality transformations. To investigate this, we evaluate NCL's performance with varying numbers of few-shot normal samples on three representative TSFMs: ROSE (forecasting-based), TMM (forecasting-based), and UniTS (reconstruction-based). \cref{Three_few_shot} presents the VUS-PR performance across ten datasets with sample sizes ranging from 1 to 20.
\begin{figure}[h]
	\centering
	\begin{subfigure}{1\linewidth}
		\includegraphics[width=1.0\textwidth]{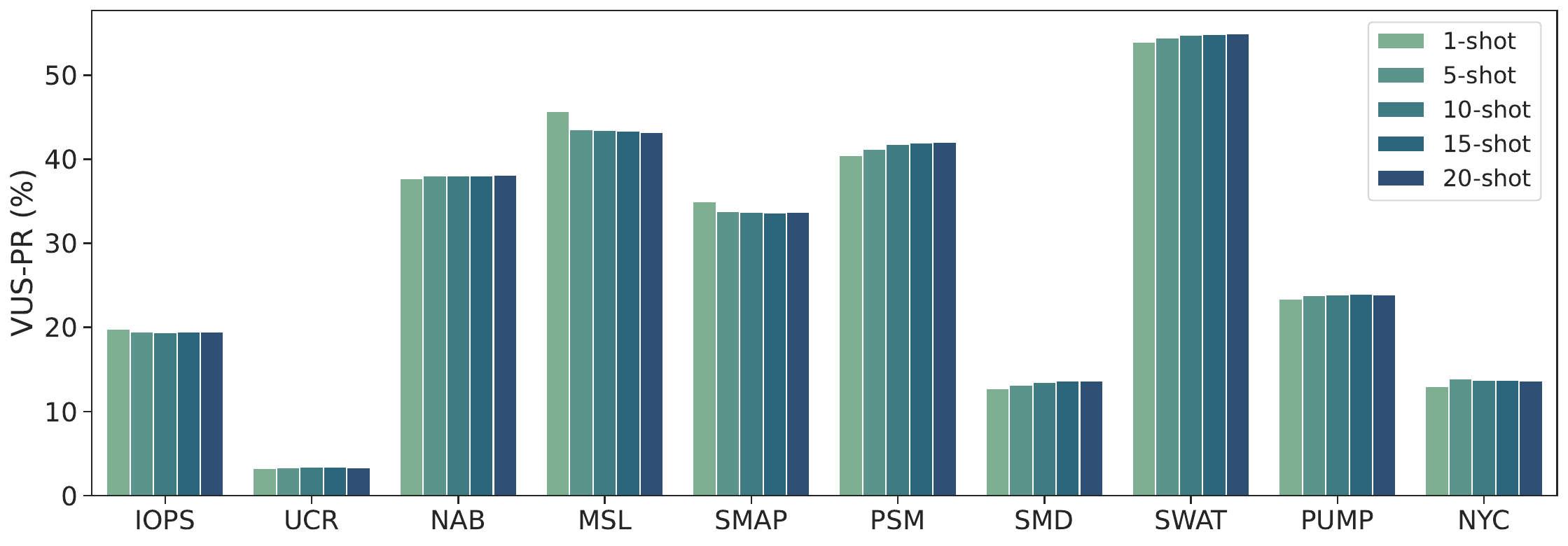}
		\caption{ROSE with NCL on ten TSAD datasets.}
		\label{ROSE_few_shot}
	\end{subfigure}
	\vspace{0.01\textwidth}
	
	\begin{subfigure}{1\linewidth}
		\includegraphics[width=1.0\textwidth]{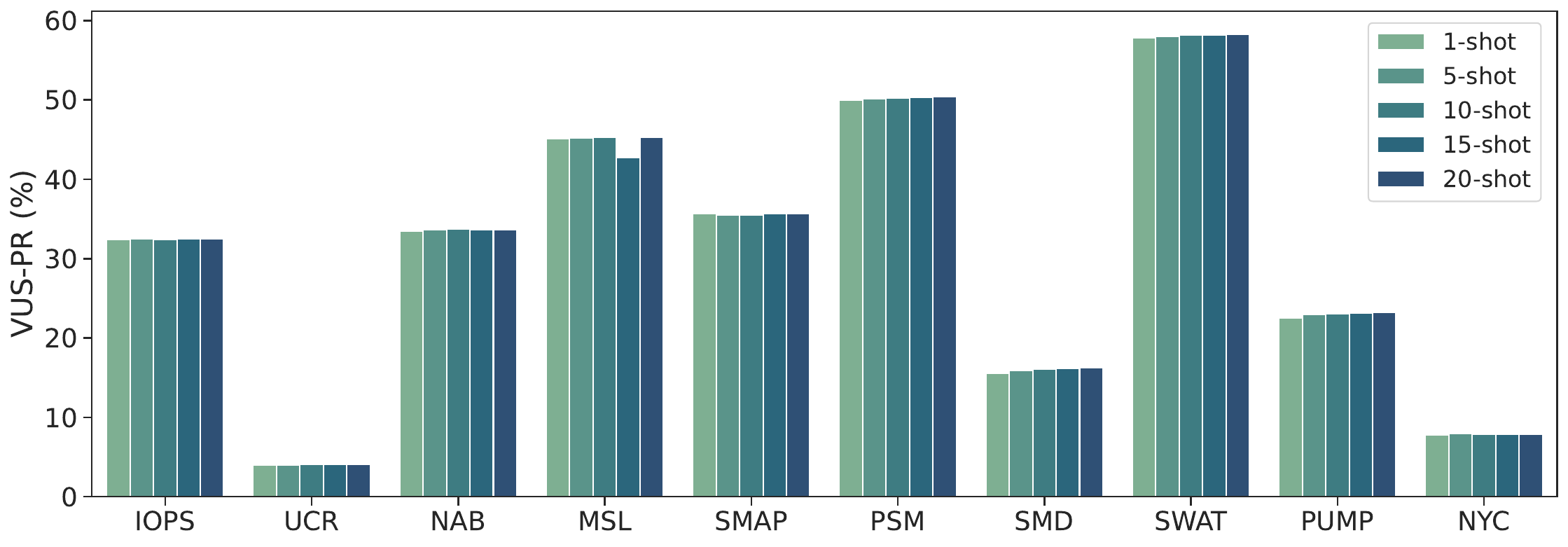}
		\caption{TMM with NCL on ten TSAD datasets.}
		\label{TMM_few_shot}
	\end{subfigure}
	\vspace{0.01\textwidth}
	
	\begin{subfigure}{1\linewidth}
		\includegraphics[width=1.0\textwidth]{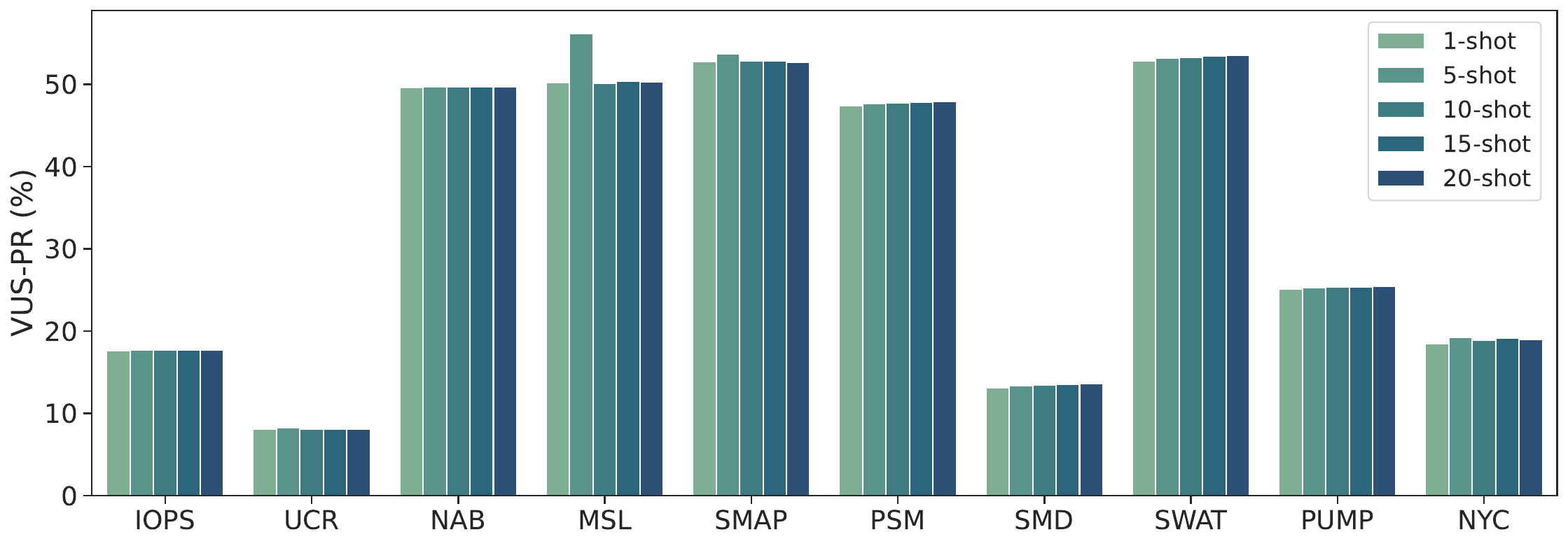}
		\caption{UniTS with NCL on ten TSAD datasets.}
		\label{UniTS_few_shot}
	\end{subfigure}
	\caption{VUS-PR performance of three NCL-augmented TSFMs w.r.t. different number of normal samples.}
	\label{Three_few_shot}
\end{figure}

Remarkably, NCL achieves competitive performance even with extremely limited normal samples. As shown in \cref{ROSE_few_shot}, ROSE+NCL with only one normal sample already achieves substantial improvements across all datasets. This demonstrates that even one representative normality sample can provide sufficient information to construct a meaningful normal subspace via SVD, which also confirms \cref{thm:few_shot_subspace} in \cref{few-shot-svd}. This extreme sample efficiency is crucial for scenarios where normal data is scarce or expensive to obtain, such as in industrial anomaly detection where collecting extensive normal operation data may require prolonged monitoring periods.

While performance generally improves with more normal samples, the gains exhibit diminishing returns beyond 5-10 samples. This pattern appears consistently across all three TSFMs, suggesting that the top singular vectors capturing the principal normal directions stabilize relatively quickly. Once the dominant axes of normal variation are identified, additional samples provide only marginal refinement.
It is worth noting that different datasets exhibit varying sensitivity to sample size. On relatively simple univariate datasets like IOPS and NAB where normal patterns are more homogeneous, 1-3 samples often suffice to capture the essential normal structure. On more complex multivariate datasets like SMD and SWAT with diverse normal modes, 5-10 samples provide better coverage of the normality manifolds. However, even on complex datasets, NCL never requires extensive data, and five samples consistently approach asymptotic performance across all tested scenarios.
\begin{figure}[!t]
	\centering
	\begin{subfigure}{1\linewidth}
		\includegraphics[width=1.0\textwidth]{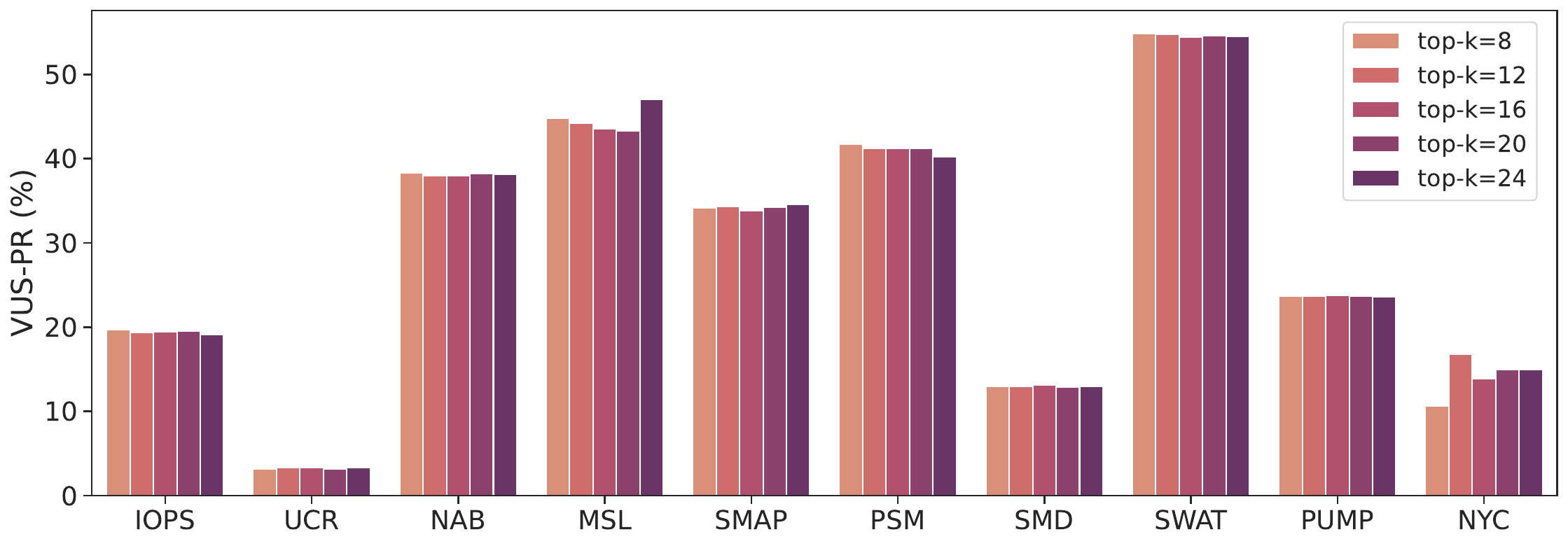}
		\caption{ROSE with NCL on ten TSAD datasets.}
		\label{ROSE_top_k}
	\end{subfigure}
	\vspace{0.01\textwidth}
	
	\begin{subfigure}{1\linewidth}
		\includegraphics[width=1.0\textwidth]{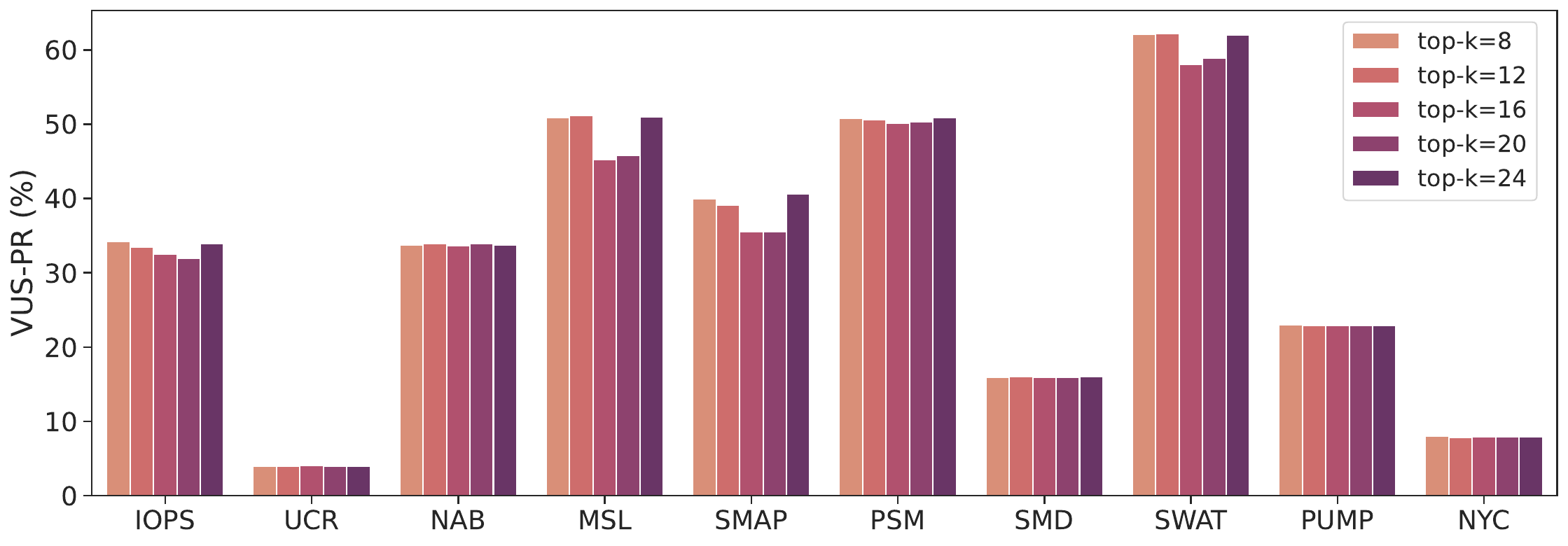}
		\caption{TMM with NCL on ten TSAD datasets.}
		\label{TMM_top_k}
	\end{subfigure}
	\vspace{0.01\textwidth}
	
	\begin{subfigure}{1\linewidth}
		\includegraphics[width=1.0\textwidth]{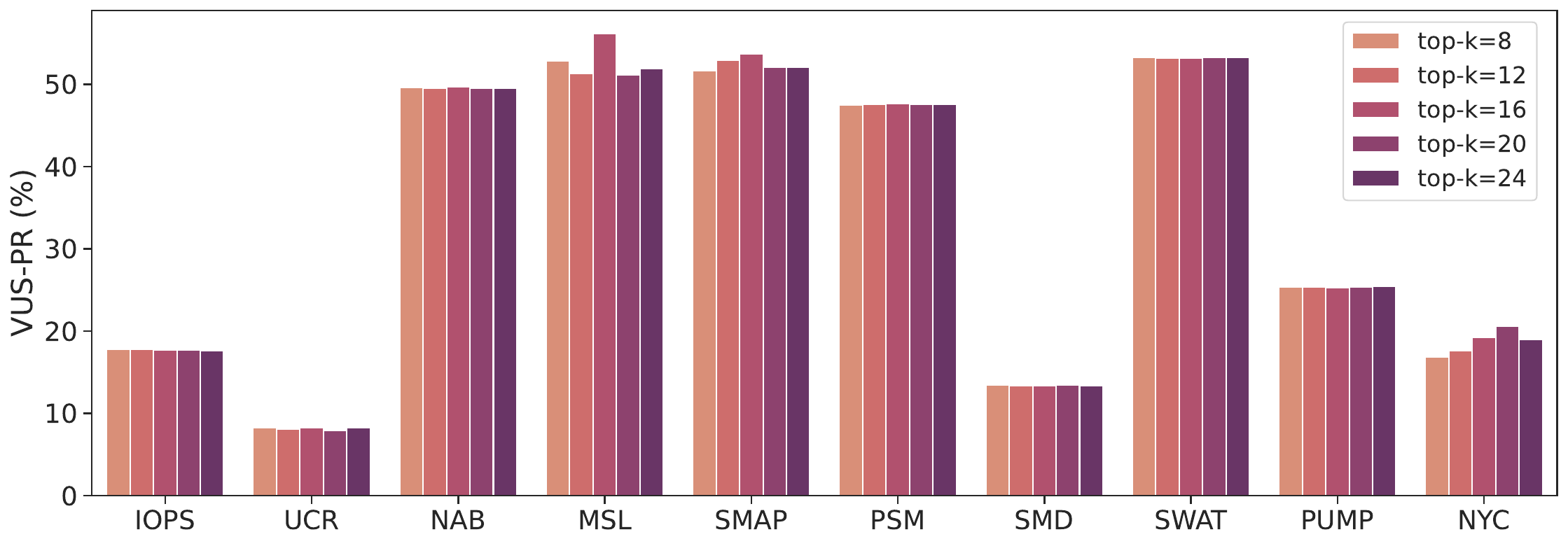}
		\caption{UniTS with NCL on ten TSAD datasets.}
		\label{UniTS_top_k}
	\end{subfigure}
	\caption{VUS-PR performance of three NCL-augmented TSFMs w.r.t. different number of singular vectors.}
	\label{Three_top_k}
\end{figure}

Based on these empirical observations, we recommend using five normal samples as the default configuration for NCL, which provides an excellent balance between sample efficiency and detection performance. This configuration achieves near-optimal performance while minimizing data collection requirements. For deployment scenarios with extremely limited access to normal data, even 1-3 samples can yield substantial improvements over vanilla TSFMs. For critical applications where maximal performance is required and data collection is feasible, using 10-20 samples provides marginal additional gains with minimal additional cost.

\section{Sensitivity Analysis for Singular Vectors}\label{app:top_k}
The number of top singular vectors $k$ retained in NormSpace directly determines the dimensionality of the normality subspace and the number of normality transformation views produced by PatchNT. To understand how this critical hyperparameter affects performance, we conduct comprehensive sensitivity analysis by varying $k \in \{8, 12, 16, 20, 24\}$ across three representative TSFMs.

\cref{Three_top_k} presents the VUS-PR performance changes. This shows that too few singular vectors result in suboptimal performance because the normality subspace becomes overly simplistic and fails to capture the full spectrum of normal variation. For instance, on the NYC dataset, UniTS only achieves VUS-PR of 16.86 using $k=8$, which is substantially lower than the peak performance. Conversely, too many singular vectors also degrade performance because minor singular vectors corresponding to noise or dataset-specific idiosyncrasies are included, diluting the discriminative power of the principal normal directions.
Importantly, different TSFMs on various datasets exhibit slightly different sensitivity patterns, which depend on the complexity of the architecture and the data itself. Simpler univariate datasets like NAB with more homogeneous normal patterns show relatively flat performance curves with less pronounced peaks, suggesting that even moderate values of $k$ capture the essential normal structure. More complex multivariate datasets like SWAT exhibit sharper peaks, indicating that these datasets require a more carefully balanced subspace dimensionality to effectively separate normal and abnormal patterns.

Beyond detection performance, the choice of $k$ also affects computational efficiency. Smaller $k$ values reduce both the number of transformation views computed in PatchNT and the number of cluster centers maintained in ConNC, leading to faster training and inference. The fact that $k=16$ delivers good performance while remaining computationally modest ($16$ parallel transformations per patch) represents an ideal trade-off. Higher $k$ would increase the transformation cost while may not improve performance.

\section{Full Experimental Results of Vanilla and NCL-Augmented TSFMs}\label{all_results}

We report the full experimental results of ten vanilla TSFMs and their NCL-augmented variants on ten TSAD benchmarks in \cref{tab:full_pr} and \ref{tab:full_roc}, evaluated by both PR-based and ROC-based metrics. Overall, NCL demonstrates strong and consistent effectiveness across a wide range of TSFMs and datasets. In the PR evaluation, NCL improves the performance of nearly all backbones in most cases. Similar trends are also observed in ROC metrics, where NCL substantially boosts the anomaly discrimination capability of the underlying TSFMs. Moreover, the paired t-tests show statistically significant improvements for almost all model groups, with very small $p$-values in both PR and ROC results, confirming that the observed improvements are not due to random variation.

To further understand the relationships among TSFMs before and after NCL augmentation, we visualize the pairwise correlation matrices in \cref{fig:PR_correlation} and \ref{fig:ROC_correlation}. In PR metrics, the vanilla TSFMs already exhibit several strong correlations, especially among models with similar forecasting/reconstruction behaviors. However, after applying NCL, the correlations among the augmented TSFMs become even more uniformly high, with many pairwise coefficients approaching 1.0. This suggests that NCL drives different TSFMs toward a more aligned notion of normality, despite their heterogeneous architectures and pretrained representations. In other words, while vanilla TSFMs may capture broad temporal patterns, NCL encourages them to converge to a shared normality-centered representation space. Meanwhile, the correlations between vanilla TSFMs and NCL-augmented TSFMs are noticeably lower than the correlations within the vanilla group, indicating that NCL-augmented TSFMs are recalibrated through explicit normality learning to enhance the anomaly detection performance.
A similar but more revealing trend is observed in ROC correlations.

Together, the full experimental results and the pairwise correlation analyses provide strong evidence for the effectiveness and universality of NCL. Across diverse TSFMs, datasets, and evaluation metrics, NCL consistently improves anomaly detection performance. This demonstrates that NCL is not merely a model-specific enhancement, but a general normality-learning framework that can reliably constrain pre-trained TSFM features toward compact and discriminative normality manifolds.
\newpage
\input{tab/full_PR.tex}
\input{tab/full_ROC.tex}

\begin{figure*}[ht]
	\centering
	\includegraphics[width=0.85\linewidth]{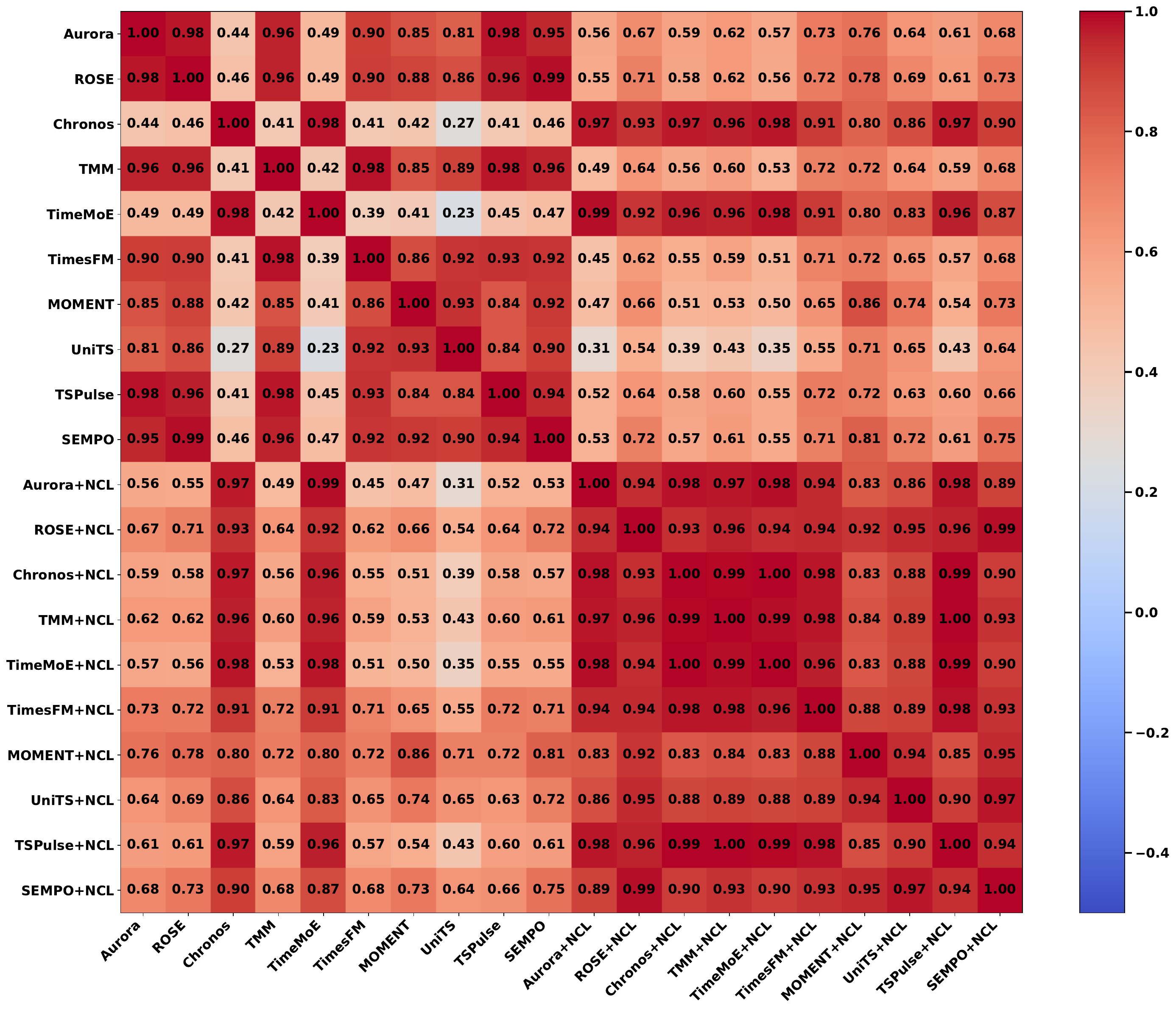}
	\caption{Pairwise correlation in PR metrics.}
	\label{fig:PR_correlation}
\end{figure*}
\begin{figure*}[ht]
	\centering
	\includegraphics[width=0.85\linewidth]{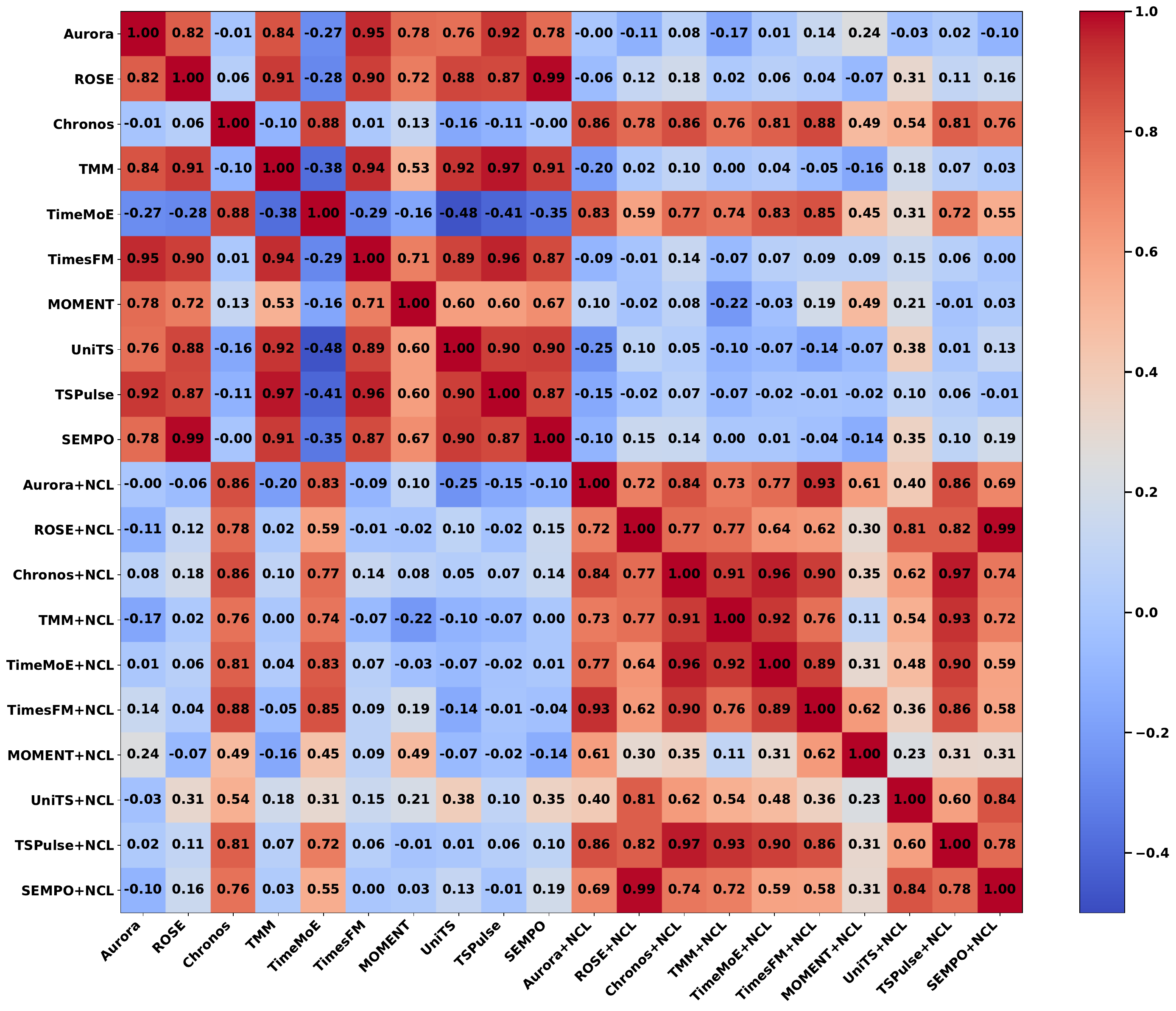}
	\caption{Pairwise correlation in ROC metrics.}
	\label{fig:ROC_correlation}
\end{figure*}

\end{document}

%% file: math_commands.tex
\usepackage{amsmath,amsfonts,bm}

\def\eqref#1{equation~\ref{#1}}

\def\1{\bm{1}}

\DeclareMathAlphabet{\mathsfit}{\encodingdefault}{\sfdefault}{m}{sl}
\SetMathAlphabet{\mathsfit}{bold}{\encodingdefault}{\sfdefault}{bx}{n}



%% file: tab/main_results.tex
\begin{table}[!t]
	\centering
	\caption{VUS-PR results of vanilla and NCL-augmented TSFMs that use forecasting/reconstruction errors to detect anomalies on ten TSAD datasets. $\bm{\triangle}_\textbf{imp}$ indicates improvement ratios gained by NCL.}
	\setlength{\tabcolsep}{3pt}
	\scalebox{0.70}{
	\begin{tabular}{c|ccl|ccl|ccl|ccl}
		\toprule
		\multirow{2.5}{*}{\textbf{Dataset}} & \multicolumn{3}{c|}{\textbf{Aurora}} & \multicolumn{3}{c|}{\textbf{ROSE}} & \multicolumn{3}{c|}{\textbf{Chronos}} & \multicolumn{3}{c}{\textbf{TMM}} \\
		& \textbf{w/o NCL} & \textbf{w/ NCL} & $\bm{\triangle}_\textbf{imp}(\textbf{\%})$ & \textbf{w/o NCL} & \textbf{w/ NCL} & $\bm{\triangle}_\textbf{imp}(\textbf{\%})$ & \textbf{w/o NCL} & \textbf{w/ NCL} & $\bm{\triangle}_\textbf{imp}(\textbf{\%})$ & \textbf{w/o NCL} & \textbf{w/ NCL} & $\bm{\triangle}_\textbf{imp}(\textbf{\%})$ \\
		\midrule
		IOPS  & 0.1333  & 0.2528  & +89.65 & 0.1307  & 0.1936  & +48.13 & 0.2387  & 0.3614  & +51.40 & 0.1877  & 0.3338  & +77.84 \\
        UCR   & 0.0216  & 0.0247  & +14.35 & 0.0242  & 0.0329  & +35.95 & 0.0566  & 0.0691  & +22.08 & 0.0400  & 0.0410  & +2.50 \\
        NAB   & 0.3004  & 0.3401  & +13.22 & 0.3365  & 0.3754  & +11.56 & 0.2766  & 0.3756  & +35.79 & 0.2894  & 0.3368  & +16.38 \\
        MSL   & 0.2982  & 0.3681  & +23.44 & 0.3604  & 0.4488  & +24.53 & 0.3943  & 0.4361  & +10.60 & 0.3653  & 0.4849  & +32.74 \\
        SMAP  & 0.1889  & 0.2823  & +49.44 & 0.2417  & 0.3484  & +44.15 & 0.3106  & 0.3741  & +20.44 & 0.2736  & 0.3686  & +34.72 \\
        PSM   & 0.3592  & 0.4743  & +32.04 & 0.3672  & 0.4107  & +11.85 & 0.3474  & 0.4926  & +41.80 & 0.3611  & 0.5039  & +39.55 \\
        SMD   & 0.1070  & 0.1483  & +38.60 & 0.1025  & 0.1331  & +29.85 & 0.0980  & 0.1462  & +49.18 & 0.1109  & 0.1591  & +43.46 \\
        SWAT  & 0.1265  & 0.6484  & +412.57 & 0.1638  & 0.5437  & +231.93 & 0.5979  & 0.6281  & +5.05 & 0.1185  & 0.6099  & +414.68 \\
        PUMP  & 0.2027  & 0.2247  & +10.85 & 0.2360  & 0.2392  & +1.36 & 0.2097  & 0.2347  & +11.92 & 0.2082  & 0.2297  & +10.33 \\
        NYC   & 0.0341  & 0.0408  & +19.65 & 0.0969  & 0.1431  & +47.68 & 0.0482  & 0.0604  & +25.31 & 0.0619  & 0.0806  & +30.21 \\
		\midrule
		\textbf{Mean}   & 0.1772  & 0.2805  & +58.28 & 0.2060  & 0.2869  & +39.27 & 0.2578  & 0.3178  & +23.29 & 0.2017  & 0.3148  & +56.12 \\
		\midrule
		\multirow{2.5}{*}{\textbf{Dataset}} & \multicolumn{3}{c|}{\textbf{MOMENT}} & \multicolumn{3}{c|}{\textbf{UniTS}} & \multicolumn{3}{c|}{\textbf{TSPulse}} & \multicolumn{3}{c}{\textbf{SEMPO}} \\
		& w/o NCL & w/ NCL & $\bm{\triangle}_\textbf{imp}(\textbf{\%})$ & w/o NCL & w/ NCL & $\bm{\triangle}_\textbf{imp}(\textbf{\%})$ & w/o NCL & w/ NCL & $\bm{\triangle}_\textbf{imp}(\textbf{\%})$ & w/o NCL & w/ NCL & $\bm{\triangle}_\textbf{imp}(\textbf{\%})$ \\
		\midrule
		IOPS  & 0.0657  & 0.1053  & +60.27 & 0.1530  & 0.1861  & +21.63 & 0.1919  & 0.3354  & +74.78 & 0.1335  & 0.1822  & +36.48 \\
        UCR   & 0.0796  & 0.0929  & +16.71 & 0.0808  & 0.0832  & +2.97 & 0.0274  & 0.0320  & +16.79 & 0.0282  & 0.0363  & +28.72 \\
        NAB   & 0.5210  & 0.5565  & +6.81 & 0.4533  & 0.4962  & +9.46 & 0.3201  & 0.3750  & +17.15 & 0.3753  & 0.4084  & +8.82 \\
        MSL   & 0.4726  & 0.5703  & +20.67 & 0.5033  & 0.5657  & +12.40 & 0.3060  & 0.4694  & +53.40 & 0.4223  & 0.5081  & +20.32 \\
        SMAP  & 0.4288  & 0.4716  & +9.98 & 0.5263  & 0.5345  & +1.56 & 0.2251  & 0.3687  & +63.79 & 0.2830  & 0.4227  & +49.36 \\
        PSM   & 0.3682  & 0.5279  & +43.37 & 0.4203  & 0.4735  & +12.66 & 0.3618  & 0.4840  & +33.78 & 0.3624  & 0.4084  & +12.69 \\
        SMD   & 0.0942  & 0.1312  & +39.28 & 0.1296  & 0.1343  & +3.63 & 0.1041  & 0.1246  & +19.69 & 0.1080  & 0.1249  & +15.65 \\
        SWAT  & 0.1895  & 0.5281  & +178.68 & 0.1215  & 0.5346  & +340.00 & 0.1190  & 0.6168  & +418.32 & 0.1702  & 0.5287  & +210.63 \\
        PUMP  & 0.2346  & 0.2764  & +17.82 & 0.2370  & 0.2534  & +6.92 & 0.1963  & 0.2324  & +18.39 & 0.2389  & 0.2512  & +5.15 \\
        NYC   & 0.0318  & 0.0327  & +2.83 & 0.1495  & 0.1850  & +23.75 & 0.0624  & 0.0670  & +7.37 & 0.0862  & 0.1332  & +54.52 \\
		\midrule
		\textbf{Mean}   & 0.2486  & 0.3293  & +32.46 & 0.2775  & 0.3447  & +24.22 & 0.1914  & 0.3105  & +62.23 & 0.2208  & 0.3004  & +36.06\\
		\bottomrule
	\end{tabular}%
	}
	\label{tab:main}%
\end{table}%

%% file: tab/ablation_results.tex
	
	\caption{Ablation study results of NCL and its variants. }
	\label{tab:ablation_studies}
	\begin{threeparttable}
		{
			\color{black}
			\arrayrulecolor{black}
			
			\setlength{\tabcolsep}{5pt}
			\renewcommand{\arraystretch}{1.1}
			
			\resizebox{0.6\columnwidth}{!}{
				\begin{tabular}{l c c c | c c c | c c c}
					\specialrule{0.8pt}{0pt}{2pt}
					\multicolumn{4}{c|}{\textbf{Dataset}} & \multicolumn{3}{c}{\textbf{IOPS}} & \multicolumn{3}{|c}{\textbf{PSM}}\\ \cmidrule{1-10}
					\textbf{Variant} & \textbf{NormSpace} &  \textbf{PatchNT} & \textbf{ConNC} & \textbf{ROSE} & \textbf{TMM} & \textbf{UniTS} & \textbf{ROSE} & \textbf{TMM} & \textbf{UniTS} \\ \midrule
					
					Vanilla & \without  & \without & \without &0.1307  & 0.1877  & 0.1530  & 0.3672  & 0.3611  & 0.4203\\   \midrule
					$\text{NCL}_{\text{N}}$ & \with  & \without & \without &0.1728  &0.2821   &0.1545   &0.3829   &0.4549   &0.4479 \\   \midrule
                    $\text{NCL}_{\text{P}}$ & \without  & \with & \without &0.1429  & 0.2356  & 0.1589  & 0.3761  & 0.4409  & 0.4375\\   \midrule
					$\text{NCL}_{\text{C}}$ & \without  & \without & \with & 0.1796  & 0.2874  & 0.1584  & 0.3789  & 0.4413  & 0.4358 \\   \midrule
					$\text{NCL}_{\text{NP}}$ & \with  & \with & \without &0.1811  &0.3092   &0.1623   &0.3912   &0.4655   &0.4507 \\   \midrule
					$\text{NCL}_{\text{PC}}$ & \without  & \with & \with & 0.1843  & 0.3059  & 0.1668  & 0.3835  & 0.4660  & 0.4512  \\   \midrule
					$\text{NCL}_{\text{NPC}}$ & \with  & \with & \with &\textbf{0.1936} & \textbf{0.3338}& \textbf{0.1861} & \textbf{0.4107} & \textbf{0.5039} & \textbf{0.4735}\\ \specialrule{0.8pt}{2pt}{2pt}
				\end{tabular}
			}
		}
	\end{threeparttable}

%% file: tab/data_statistic.tex
\begin{table}[!t]
	\centering
	\caption{Statistics of ten datasets (Dim: dimensionality).}
	\scalebox{1.0}{
		\begin{tabular}[\linewidth]{llccccc}
			\toprule
			\textbf{Dataset} & \textbf{Domain} & \textbf{Dim} & \makecell{\textbf{Avg Train}\\ \textbf{Length}} & \makecell{\textbf{Avg Test}\\ \textbf{Length}} & \makecell{\textbf{Avg Anomaly}\\ \textbf{Ratio (\%)}} & \makecell{\textbf{Series}\\ \textbf{Count}}\\
			\midrule
			IOPS  & Web & 1  & 10,142.4 & 91,288.3 & 2.15 & 11\\
			UCR   & Multiple & 1 &  18,608.6  & 47,294 & 0.58 & 243\\
			NAB   & Multiple & 1 & 3,557.8  & 3,557 & 9.84 & 45\\
			MSL  & Spacecraft & 1 & 2,269.8  & 2,896.8 & 3.85 & 22\\
			SMAP   & Spacecraft & 1 & 2,608.8  & 7,929.7 & 2.20 & 35\\
			PSM   & Server Machine & 25 & 132,481  & 87,841 & 11.07 & 1\\
			SMD   & Server Machine & 38 & 708,405  & 708,420 & 2.08 & 1\\
			SWAT & Water Treatment & 51 & 495,000  & 449,919 & 5.78 & 1\\
			PUMP & Water Treatment & 44 & 76,901  & 143,401 & 6.54 & 1\\
			NYC   & Transport & 3 & 13,104  & 4,416 & 0.57 & 1\\
			\bottomrule
		\end{tabular}%
	}
	\label{tab:datasets}%
\end{table}%

%% file: tab/all_PR_compare.tex
\begin{table}[htbp]
	\centering
	\caption{PR comparison with deep TSAD methods and one-liner baselines. The best ones are $\textbf{boldfaced}$ and the second best are $\underline{\text{underlined}}$.}
	\scalebox{0.68}{
	\begin{tabular}{c|c|cccccccc|c>{\columncolor{gray!20}}c|c>{\columncolor{gray!20}}c}
		\toprule
		Dataset & Metric & CroAD & ConAD & TsNet & Dlinear & LOF   & Last-1 & Last-3 & Var-100 & TMM   & +NCL   & UniTS & +NCL \\
		\midrule
		\multirow{3}{*}{IOPS} & A-P   & 0.3019 & 0.0226 & 0.1486 & 0.2188 & 0.0574 & 0.2147 & 0.2320 & 0.2331 & 0.2093 & \textbf{0.3393} & 0.1810 & \underline{0.3133} \\
		& R-A-P & \underline{0.2855} & 0.0285 & 0.1447 & 0.2261 & 0.0617 & 0.1764 & 0.2334 & 0.1715 & 0.1886 & \textbf{0.3349} & 0.1524 & 0.1874 \\
		& V-PR  & \underline{0.2827} & 0.0285 & 0.1437 & 0.2231 & 0.0634 & 0.1763 & 0.2282 & 0.1722 & 0.1877 & \textbf{0.3338} & 0.1530 & 0.1861 \\
		\midrule
		\multirow{3}{*}{UCR} & A-P   & 0.0269 & 0.0089 & 0.0404 & 0.0452 & 0.0586 & 0.0727 & 0.0530 & \textbf{0.0777} & 0.0446 & 0.0443 & 0.0735 & \underline{0.0766} \\
		& R-A-P & 0.0268 & 0.0198 & 0.0433 & 0.0430 & 0.0587 & 0.0635 & 0.0412 & \textbf{0.0846} & 0.0348 & 0.0356 & 0.0827 & \underline{0.0843} \\
		& V-PR  & 0.0272 & 0.0198 & 0.0481 & 0.0468 & 0.0586 & 0.0654 & 0.0457 & \underline{0.0817} & 0.0400  & 0.0410 & 0.0808 & \textbf{0.0832} \\
		\midrule
		\multirow{3}{*}{NAB} & A-P   & 0.2775 & 0.1330 & 0.2486 & 0.2402 & 0.1552 & 0.1655 & 0.1730 & \underline{0.4235} & 0.2054 & 0.2684 & 0.3890 & \textbf{0.4398} \\
		& R-A-P & 0.3623 & 0.2376 & 0.3417 & 0.3246 & 0.2795 & 0.2570 & 0.2629 & \underline{0.4929} & 0.2956 & 0.3402 & 0.4528 & \textbf{0.4933} \\
		& V-PR  & 0.3626 & 0.2356 & 0.3392 & 0.3250 & 0.2801 & 0.2565 & 0.2624 & \underline{0.4960} & 0.2894 & 0.3368 & 0.4533 & \textbf{0.4962} \\
		\midrule
		\multirow{3}{*}{MSL} & A-P   & 0.3856 & 0.0697 & 0.3542 & 0.2923 & 0.1310 & 0.1341 & 0.1707 & \textbf{0.4831} & 0.3243 & 0.4374 & 0.4317 & \underline{0.4511} \\
		& R-A-P & 0.4522 & 0.1267 & 0.4312 & 0.3468 & 0.2761 & 0.2603 & 0.2649 & \textbf{0.5769} & 0.3665 & 0.4884 & 0.5169 & \underline{0.5410} \\
		& V-PR  & 0.4464 & 0.1264 & 0.4224 & 0.3424 & 0.2758 & 0.2594 & 0.2646 & \underline{0.5591} & 0.3653 & 0.4849 & 0.5033 & \textbf{0.5657} \\
		\midrule
		\multirow{3}{*}{SMAP} & A-P   & 0.3452 & 0.0299 & 0.4350 & 0.2801 & 0.0950 & 0.0928 & 0.1332 & \textbf{0.4681} & 0.2411 & 0.3411 & \underline{0.4669} & 0.4617 \\
		& R-A-P & 0.3373 & 0.0582 & 0.4853 & 0.2698 & 0.2887 & 0.1992 & 0.2102 & \textbf{0.5457} & 0.2733 & 0.3722 & \underline{0.5385} & 0.5376 \\
		& V-PR  & 0.3366 & 0.0581 & 0.4747 & 0.2693 & 0.2905 & 0.2006 & 0.2122 & \underline{0.5326} & 0.2736 & 0.3686 & 0.5263 & \textbf{0.5345} \\
		\midrule
		\multirow{3}{*}{PSM} & A-P   & 0.4044 & 0.3202 & 0.3551 & 0.3481 & 0.3132 & 0.3320 & 0.3437 & 0.4387 & 0.3618 & \underline{0.5116} & 0.4183 & \textbf{0.5244} \\
		& R-A-P & 0.4019 & 0.3166 & 0.3591 & 0.3500  & 0.3136 & 0.3316 & 0.3426 & 0.4140 & 0.3611 & \textbf{0.5014} & 0.4191 & \underline{0.4747} \\
		& V-PR  & 0.4029 & 0.3166 & 0.3600  & 0.3497 & 0.3139 & 0.3316 & 0.3430 & 0.4159 & 0.3611 & \textbf{0.5039} & 0.4203 & \underline{0.4735} \\
		\midrule
		\multirow{3}{*}{SMD} & A-P   & 0.1612 & 0.0520 & 0.1712 & 0.1307 & 0.0717 & 0.0946 & 0.0938 & \textbf{0.2122} & 0.1134 & 0.1601 & 0.1804 & \underline{0.1898} \\
		& R-A-P & \underline{0.1602} & 0.0558 & \textbf{0.1689} & 0.1351 & 0.0737 & 0.0827 & 0.0892 & 0.1237 & 0.1111 & 0.1593 & 0.1291 & 0.1338 \\
		& V-PR  & 0.1494 & 0.0560 & \textbf{0.1687} & 0.1349 & 0.0737 & 0.0828 & 0.0891 & 0.1242 & 0.1109 & \underline{0.1591} & 0.1296 & 0.1343 \\
		\midrule
		\multirow{3}{*}{SWAT} & A-P   & 0.1614 & 0.1265 & 0.1104 & 0.1967 & 0.1303 & 0.0895 & 0.0828 & 0.1064 & 0.0837 & \underline{0.6787} & 0.0862 & \textbf{0.7237} \\
		& R-A-P & 0.2075 & 0.1534 & 0.1861 & 0.2215 & 0.1587 & 0.1199 & 0.1140 & 0.1690 & 0.1211 & \textbf{0.6363} & 0.1258 & \underline{0.5750} \\
		& V-PR  & 0.2011 & 0.1522 & 0.1758 & 0.2185 & 0.1572 & 0.1178 & 0.1115 & 0.1591 & 0.1185 & \textbf{0.6099} & 0.1215 & \underline{0.5346} \\
		\midrule
		\multirow{3}{*}{PUMP} & A-P   & 0.1514 & 0.1223 & 0.1191 & 0.1223 & 0.1108 & 0.1372 & 0.1309 & 0.1068 & 0.1228 & \underline{0.1971} & 0.1389 & \textbf{0.2046} \\
		& R-A-P & 0.1981 & 0.1645 & 0.2217 & 0.1776 & 0.1555 & 0.1879 & 0.1823 & 0.1652 & 0.2107 & 0.2322 & \underline{0.2415} & \textbf{0.2525} \\
		& V-PR  & 0.1952 & 0.1636 & 0.2156 & 0.1771 & 0.1564 & 0.1882 & 0.1821 & 0.1681 & 0.2082 & 0.2297 & \underline{0.2370} & \textbf{0.2534} \\
		\midrule
		\multirow{3}{*}{NYC} & A-P   & 0.0420 & 0.0179 & 0.0244 & 0.0325 & 0.0228 & 0.0216 & 0.0225 & \underline{0.1513} & 0.0359 & 0.0547 & 0.1447 & \textbf{0.2093} \\
		& R-A-P & 0.0650 & 0.0293 & 0.0489 & 0.0482 & 0.0857 & 0.0595 & 0.0647 & \textbf{0.2614} & 0.0652 & 0.0860 & 0.1496 & \underline{0.1846} \\
		& V-PR  & 0.0622 & 0.0289 & 0.0483 & 0.0478 & 0.0755 & 0.0548 & 0.0577 & \textbf{0.2423} & 0.0619 & 0.0806 & 0.1495 & \underline{0.1850} \\
		\midrule
		\multicolumn{2}{c|}{\textbf{Average Rank}} & 5.13 & 11.23 & 6.27 & 6.93 & 9.07 & 8.93 & 8.63 & 4.00 & 7.47 & \underline{3.77} & 4.40 & \textbf{2.13} \\
		\bottomrule
	\end{tabular}%
	}
	\label{tab:PR}%
\end{table}%

%% file: tab/all_ROC_compare.tex
\begin{table}[htbp]
	\centering
	\caption{ROC comparison with deep TSAD methods and one-liner baselines. The best ones are $\textbf{boldfaced}$ and the second best are $\underline{\text{underlined}}$.}
	\scalebox{0.68}{
	\begin{tabular}{c|c|cccccccc|c>{\columncolor{gray!20}}c|c>{\columncolor{gray!20}}c}
		\toprule
		Dataset & Metric & CroAD & ConAD & TsNet & Dlinear & LOF   & Last-1 & Last-3 & Var-100 & TMM   & +NCL   & UniTS & +NCL \\
		\midrule
		\multirow{3}{*}{IOPS} & A-R   & 0.7448 & 0.4979 & 0.6825 & 0.6935 & 0.5474 & 0.6084 & 0.6393 & 0.7727 & 0.6868 & \textbf{0.8292} & 0.7446 & \underline{0.8155} \\
		& R-A-R & \underline{0.7589} & 0.5396 & 0.7291 & 0.7435 & 0.5585 & 0.6431 & 0.6891 & 0.7417 & 0.7158 & \textbf{0.8437} & 0.7388 & 0.7453 \\
		& V-ROC & \underline{0.7582} & 0.5335 & 0.7237 & 0.7390 & 0.5534 & 0.6400  & 0.6835 & 0.7426 & 0.7117 & \textbf{0.8426} & 0.7375 & 0.7439 \\
		\midrule
		\multirow{3}{*}{UCR} & A-R   & 0.5251 & 0.5008 & 0.5262 & 0.5753 & \underline{0.5827} & 0.5291 & 0.5314 & 0.5387 & 0.5484 & \textbf{0.5851} & 0.5506 & 0.5566 \\
		& R-A-R & 0.6025 & 0.6078 & 0.6030 & 0.6633 & \underline{0.6424} & 0.6177 & 0.6245 & 0.5941 & 0.6379 & \textbf{0.6442} & 0.6055 & 0.6116 \\
		& V-ROC & 0.5984 & 0.5968 & 0.5960 & 0.6356 & \underline{0.6371} & 0.6112 & 0.6179 & 0.5915 & 0.6295 & \textbf{0.6403} & 0.6021 & 0.6083 \\
		\midrule
		\multirow{3}{*}{NAB} & A-R   & 0.6623 & 0.5057 & 0.6235 & 0.6070 & 0.5200  & 0.5251 & 0.5297 & \underline{0.7034} & 0.5530 & 0.5828 & 0.6841 & \textbf{0.7097} \\
		& R-A-R & 0.7372 & 0.6372 & 0.7240 & 0.7051 & 0.5998 & 0.6332 & 0.6393 & \underline{0.7556} & 0.6496 & 0.6598 & 0.7458 & \textbf{0.7625} \\
		& V-ROC & 0.7338 & 0.6260 & 0.7148 & 0.6983 & 0.5958 & 0.6259 & 0.6317 & \underline{0.7553} & 0.6417 & 0.6550 & 0.7434 & \textbf{0.7613} \\
		\midrule
		\multirow{3}{*}{MSL} & A-R   & 0.7532 & 0.5017 & 0.7559 & 0.6975 & 0.5451 & 0.5829 & 0.5964 & \textbf{0.8432} & 0.6386 & 0.7326 & 0.8067 & \underline{0.8216} \\
		& R-A-R & 0.8143 & 0.6212 & 0.8325 & 0.7786 & 0.6150 & 0.6396 & 0.6515 & \textbf{0.8860} & 0.7195 & 0.7737 & 0.8394 & \underline{0.8560} \\
		& V-ROC & 0.8077 & 0.6093 & 0.8236 & 0.7699 & 0.6076 & 0.6334 & 0.6451 & \textbf{0.8677} & 0.7091 & 0.7677 & 0.8333 & \underline{0.8487} \\
		\midrule
		\multirow{3}{*}{SMAP} & A-R   & 0.6618 & 0.5004 & \textbf{0.7914} & 0.6671 & 0.5232 & 0.5193 & 0.5305 & 0.7205 & 0.6070 & 0.6333 & \underline{0.7654} & 0.7603 \\
		& R-A-R & 0.7314 & 0.6236 & \textbf{0.8450} & 0.7344 & 0.5486 & 0.5622 & 0.5812 & 0.7839 & 0.6938 & 0.6962 & \underline{0.8042} & 0.8036 \\
		& V-ROC & 0.7255 & 0.6125 & \textbf{0.8379} & 0.7286 & 0.5475 & 0.5586 & 0.5771 & 0.7796 & 0.6853 & 0.6911 & \underline{0.8000} & 0.7985 \\
		\midrule
		\multirow{3}{*}{PSM} & A-R   & 0.6053 & 0.5752 & 0.5585 & 0.5557 & 0.5290 & 0.5414 & 0.5546 & 0.6317 & 0.5697 & \underline{0.6904} & 0.6387 & \textbf{0.7100} \\
		& R-A-R & 0.5979 & 0.4726 & 0.5610 & 0.5562 & 0.5258 & 0.5412 & 0.5514 & 0.6012 & 0.5691 & \textbf{0.6648} & 0.6174 & \underline{0.6402} \\
		& V-ROC & 0.5983 & 0.4695 & 0.5614 & 0.5525 & 0.5221 & 0.5390 & 0.5501 & 0.6025 & 0.5665 & \textbf{0.6677} & 0.6160 & \underline{0.6381} \\
		\midrule
		\multirow{3}{*}{SMD} & A-R   & \textbf{0.7708} & 0.5773 & 0.7668 & 0.7028 & 0.5696 & 0.6541 & 0.6712 & 0.7573 & 0.6831 & 0.7293 & 0.7590 & \underline{0.7697} \\
		& R-A-R & \textbf{0.7927} & 0.5780 & \underline{0.7855} & 0.7256 & 0.5957 & 0.6684 & 0.6881 & 0.6295 & 0.6987 & 0.7508 & 0.6875 & 0.6696 \\
		& V-ROC & \textbf{0.7905} & 0.5779 & \underline{0.7837} & 0.7242 & 0.5953 & 0.6678 & 0.6867 & 0.6294 & 0.6972 & 0.7489 & 0.6854 & 0.6681 \\
		\midrule
		\multirow{3}{*}{SWAT} & A-R   & 0.6280 & 0.5380 & 0.2933 & 0.6142 & 0.5198 & 0.3530 & 0.2785 & 0.2358 & 0.2506 & \underline{0.8245} & 0.2339 & \textbf{0.8274} \\
		& R-A-R & 0.6818 & 0.5833 & 0.4064 & 0.6563 & 0.5688 & 0.4345 & 0.3733 & 0.3456 & 0.3625 & \textbf{0.8127} & 0.3523 & \underline{0.7533} \\
		& V-ROC & 0.6728 & 0.5797 & 0.3967 & 0.6518 & 0.5649 & 0.4277 & 0.3654 & 0.3325 & 0.3537 & \textbf{0.7933} & 0.3417 & \underline{0.7148} \\
		\midrule
		\multirow{3}{*}{PUMP} & A-R   & 0.6669 & 0.5968 & 0.4481 & 0.5702 & 0.5267 & 0.6082 & 0.6045 & 0.5325 & 0.5245 & \underline{0.7904} & 0.5598 & \textbf{0.7914} \\
		& R-A-R & 0.7335 & 0.6554 & 0.6103 & 0.6574 & 0.6076 & 0.6790 & 0.6739 & 0.6386 & 0.6660 & \textbf{0.7929} & 0.6904 & \underline{0.7904} \\
		& V-ROC & 0.7273 & 0.6483 & 0.5999 & 0.6517 & 0.6028 & 0.6748 & 0.6692 & 0.6356 & 0.6546 & \textbf{0.7912} & 0.6828 & \underline{0.7873} \\
		\midrule
		\multirow{3}{*}{NYC} & A-R   & 0.6663 & 0.4055 & 0.5379 & 0.6383 & 0.5157 & 0.4749 & 0.4924 & 0.7374 & 0.6249 & 0.6824 & \underline{0.8186} & \textbf{0.8620} \\
		& R-A-R & 0.7293 & 0.5039 & 0.6679 & 0.6835 & 0.6834 & 0.6193 & 0.6333 & \textbf{0.8904} & 0.7546 & 0.8142 & 0.8354 & \underline{0.8513} \\
		& V-ROC & 0.7145 & 0.4888 & 0.6559 & 0.6767 & 0.6500  & 0.6021 & 0.6178 & \textbf{0.8634} & 0.7332 & 0.7846 & 0.8264 & \underline{0.8434} \\
		\midrule
		\multicolumn{2}{c|}{\textbf{Average Rank}} & 4.57 & 10.07 & 6.53 & 5.73 & 9.60 & 9.10 & 8.37 & 5.67 & 7.23 & \underline{3.47} & 4.93 & \textbf{2.73} \\
		\bottomrule
	\end{tabular}%
	}
	\label{tab:ROC}%
\end{table}%

%% file: tab/ad_TSFM_compare.tex
\begin{table}[!t]
	\centering
	\caption{VUS-PR comparison with TSAD-specific foundation models. The best ones are $\textbf{boldfaced}$ and the second best are $\underline{\text{underlined}}$.}
	\renewcommand{\arraystretch}{1.2}
	\scalebox{0.85}{
		\begin{tabular}{c|cccccccccc}
			\toprule
			\multirow{2.5}{*}{\textbf{Model}} & \multicolumn{10}{c}{\textbf{Dataset}} \\
			\cmidrule(lr{0pt}){2-11}
			& IOPS & UCR  & NAB & MSL & SMAP & PSM & SMD & SWAT & PUMP & NYC \\
			\midrule
			DADA$_\text{zero}$  & 0.1837 & 0.0511 & 0.4574 & 0.4395 & 0.4165 & 0.4732 & 0.1239 & 0.5988 & 0.2097 & 0.0553 \\
            DADA$_\text{five}$  & 0.1453 & 0.0783 & \underline{0.4692} & 0.4671 & 0.3656 & 0.4549 & 0.1197 & 0.4262 & \textbf{0.2541} & \textbf{0.3744} \\
            \midrule
			TimeRCD$_\text{zero}$  & 0.0691 & \underline{0.0819} & 0.3410 & 0.3328 & 0.3324 & 0.3126 & 0.0691 & 0.5721 & 0.1960 & 0.0690 \\
            TimeRCD$_\text{five}$  & 0.0584 & 0.0786 & 0.3033 & 0.2996 & 0.3056 & 0.3117 & 0.0670 & 0.5356 & 0.1916 & 0.0650 \\
            \midrule
            TimeRadar$_\text{zero}$  & 0.1111 & 0.0467 & 0.3221 & 0.4617 & 0.3207 & 0.4231 & 0.0968 & 0.5802 & 0.2263 & 0.0985 \\
            TimeRadar$_\text{five}$  & 0.1048 & 0.0483 & 0.4424 & 0.4697 & 0.3869 & 0.4259 & 0.1001 & 0.5769 & 0.2211 & 0.1012 \\
			\midrule
            Time-MoE$_\text{zero}$  & 0.2163 & 0.0568  & 0.2875 & 0.3618  & 0.2182  & 0.4017 & 0.0930  & \underline{0.6238}  & 0.1986 & 0.0392 \\
			\quad +NCL  & \textbf{0.3358} & 0.0780 & 0.3712 & 0.4342 & 0.3324 & 0.4575 & 0.1452 & \textbf{0.6384} & 0.2329 & 0.0677 \\
            \midrule
            TimesFM$_\text{zero}$  & 0.1902  & 0.0543  & 0.2804 & 0.3928  & 0.3366  & 0.3497 & 0.1123  & 0.1132  & 0.1792 & 0.0444 \\
			\quad +NCL  & 0.3244 & 0.0600  & 0.3728 & 0.4360 & 0.3862 & \underline{0.4794} & \underline{0.1498} & 0.5117 & 0.2380 & 0.0483 \\
            \midrule
			TMM$_\text{zero}$  & 0.1877  & 0.0400  & 0.2894 & 0.3653  & 0.2736  & 0.3611 & 0.1109  & 0.1185  & 0.2082 & 0.0619 \\
			\quad +NCL  & \underline{0.3338} & 0.0410 & 0.3368 & 0.4849 & 0.3686 & \textbf{0.5039} & \textbf{0.1591} & 0.6099 & 0.2297 & 0.0806 \\
			\midrule
			UniTS$_\text{zero}$  & 0.1530  & 0.0808  & 0.4533 & \underline{0.5033}  & \underline{0.5263}  & 0.4203 & 0.1296  & 0.1215  & 0.2370 & 0.1495 \\
			\quad +NCL  & 0.1861 & \textbf{0.0832} & \textbf{0.4962} & \textbf{0.5657} & \textbf{0.5345} & 0.4735 & 0.1343 & 0.5346 & \underline{0.2534} & \underline{0.1850} \\
			\bottomrule
		\end{tabular}%
	}
	\label{tab:ad_tsfm_compare}%
\end{table}%

%% file: tab/full_PR.tex
\begin{table}[htbp]
	\centering
	\caption{PR results of ten vanilla and NCL-augmented TSFMs that use forecasting/reconstruction errors to detect anomalies on ten TSAD datasets. Meanwhile, we perform the paired t-tests, and $p$-\textbf{values} are reported.}
	\setlength{\tabcolsep}{3pt}
	\scalebox{0.66}{
	\begin{tabular}{c|c|c>{\columncolor{blue!20}}c|c>{\columncolor{orange!20}}c|c>{\columncolor{green!20}}c|c>{\columncolor{pink!40}}c|c>{\columncolor{violet!20}}c}
		\toprule
		Dataset & Metric & Aurora & +NCL   & ROSE  & +NCL   & Chronos & +NCL   & TMM   & +NCL   & TimeMoE & +NCL \\
		\midrule
		\multirow{3}{*}{IOPS} & A-P   & $\text{0.1433}$ & $\text{0.2646}_{\pm\text{0.0148}}$ & $\text{0.1712}$ & $\text{0.2396}_{\pm\text{0.0118}}$ & $\text{0.2649}$ & $\text{0.3654}_{\pm\text{0.0019}}$ & $\text{0.2093}$ & $\text{0.3393}_{\pm\text{0.0144}}$ & $\text{0.2535}$ & $\text{0.3367}_{\pm\text{0.0094}}$ \\
		& R-A-P & $\text{0.1331}$ & $\text{0.2540}_{\pm\text{0.0247}}$ & $\text{0.1300}$ & $\text{0.1935}_{\pm\text{0.0137}}$ & $\text{0.2390}$ & $\text{0.3619}_{\pm\text{0.0017}}$ & $\text{0.1886}$ & $\text{0.3349}_{\pm\text{0.0130}}$ & $\text{0.2171}$ & $\text{0.3360}_{\pm\text{0.0074}}$ \\
		& V-PR  & $\text{0.1333}$ & $\text{0.2528}_{\pm\text{0.0244}}$ & $\text{0.1307}$ & $\text{0.1936}_{\pm\text{0.0138}}$ & $\text{0.2387}$ & $\text{0.3614}_{\pm\text{0.0017}}$ & $\text{0.1877}$ & $\text{0.3338}_{\pm\text{0.0130}}$ & $\text{0.2163}$ & $\text{0.3358}_{\pm\text{0.0077}}$ \\
		\midrule
		\multirow{3}{*}{UCR} & A-P   & $\text{0.0162}$ & $\text{0.0214}_{\pm\text{0.0012}}$ & $\text{0.0200}$ & $\text{0.0332}_{\pm\text{0.0013}}$ & $\text{0.0636}$ & $\text{0.0754}_{\pm\text{0.0041}}$ & $\text{0.0446}$ & $\text{0.0443}_{\pm\text{0.0008}}$ & $\text{0.0661}$ & $\text{0.0830}_{\pm\text{0.0082}}$ \\
		& R-A-P & $\text{0.0215}$ & $\text{0.0246}_{\pm\text{0.0025}}$ & $\text{0.0238}$ & $\text{0.0323}_{\pm\text{0.0008}}$ & $\text{0.0531}$ & $\text{0.0665}_{\pm\text{0.0032}}$ & $\text{0.0348}$ & $\text{0.0356}_{\pm\text{0.0007}}$ & $\text{0.0512}$ & $\text{0.0740}_{\pm\text{0.0074}}$ \\
		& V-PR  & $\text{0.0216}$ & $\text{0.0247}_{\pm\text{0.0025}}$ & $\text{0.0242}$ & $\text{0.0329}_{\pm\text{0.0007}}$ & $\text{0.0566}$ & $\text{0.0691}_{\pm\text{0.0032}}$ & $\text{0.0400}$ & $\text{0.0410}_{\pm\text{0.0007}}$ & $\text{0.0568}$ & $\text{0.0780}_{\pm\text{0.0069}}$ \\
		\midrule
		\multirow{3}{*}{NAB} & A-P   & $\text{0.2070}$ & $\text{0.2631}_{\pm\text{0.0018}}$ & $\text{0.2356}$ & $\text{0.2845}_{\pm\text{0.0054}}$ & $\text{0.1896}$ & $\text{0.2992}_{\pm\text{0.0014}}$ & $\text{0.2054}$ & $\text{0.2684}_{\pm\text{0.0006}}$ & $\text{0.1998}$ & $\text{0.2980}_{\pm\text{0.0061}}$ \\
		& R-A-P & $\text{0.3066}$ & $\text{0.3428}_{\pm\text{0.0051}}$ & $\text{0.3451}$ & $\text{0.3840}_{\pm\text{0.0086}}$ & $\text{0.2775}$ & $\text{0.3764}_{\pm\text{0.0013}}$ & $\text{0.2956}$ & $\text{0.3402}_{\pm\text{0.0015}}$ & $\text{0.2875}$ & $\text{0.3714}_{\pm\text{0.0067}}$ \\
		& V-PR  & $\text{0.3004}$ & $\text{0.3401}_{\pm\text{0.0040}}$ & $\text{0.3365}$ & $\text{0.3754}_{\pm\text{0.0062}}$ & $\text{0.2766}$ & $\text{0.3756}_{\pm\text{0.0012}}$ & $\text{0.2894}$ & $\text{0.3368}_{\pm\text{0.0012}}$ & $\text{0.2875}$ & $\text{0.3712}_{\pm\text{0.0068}}$ \\
		\midrule
		\multirow{3}{*}{MSL} & A-P   & $\text{0.2551}$ & $\text{0.3275}_{\pm\text{0.0267}}$ & $\text{0.3056}$ & $\text{0.3968}_{\pm\text{0.0248}}$ & $\text{0.3161}$ & $\text{0.3684}_{\pm\text{0.0046}}$ & $\text{0.3243}$ & $\text{0.4374}_{\pm\text{0.0254}}$ & $\text{0.2881}$ & $\text{0.3577}_{\pm\text{0.0042}}$ \\
		& R-A-P & $\text{0.2973}$ & $\text{0.3693}_{\pm\text{0.0192}}$ & $\text{0.3616}$ & $\text{0.4506}_{\pm\text{0.0242}}$ & $\text{0.4021}$ & $\text{0.4401}_{\pm\text{0.0038}}$ & $\text{0.3665}$ & $\text{0.4884}_{\pm\text{0.0311}}$ & $\text{0.3690}$ & $\text{0.4420}_{\pm\text{0.0025}}$ \\
		& V-PR  & $\text{0.2982}$ & $\text{0.3681}_{\pm\text{0.0192}}$ & $\text{0.3604}$ & $\text{0.4488}_{\pm\text{0.0214}}$ & $\text{0.3943}$ & $\text{0.4361}_{\pm\text{0.0031}}$ & $\text{0.3653}$ & $\text{0.4849}_{\pm\text{0.0304}}$ & $\text{0.3618}$ & $\text{0.4342}_{\pm\text{0.0012}}$ \\
		\midrule
		\multirow{3}{*}{SMAP} & A-P   & $\text{0.1537}$ & $\text{0.2644}_{\pm\text{0.0113}}$ & $\text{0.1896}$ & $\text{0.2980}_{\pm\text{0.0150}}$ & $\text{0.2363}$ & $\text{0.3480}_{\pm\text{0.0075}}$ & $\text{0.2411}$ & $\text{0.3411}_{\pm\text{0.0306}}$ & $\text{0.1907}$ & $\text{0.3248}_{\pm\text{0.0147}}$ \\
		& R-A-P & $\text{0.1898}$ & $\text{0.2833}_{\pm\text{0.0119}}$ & $\text{0.2430}$ & $\text{0.3512}_{\pm\text{0.0165}}$ & $\text{0.3189}$ & $\text{0.3779}_{\pm\text{0.0185}}$ & $\text{0.2733}$ & $\text{0.3722}_{\pm\text{0.0169}}$ & $\text{0.2182}$ & $\text{0.3318}_{\pm\text{0.0126}}$ \\
		& V-PR  & $\text{0.1889}$ & $\text{0.2823}_{\pm\text{0.0123}}$ & $\text{0.2417}$ & $\text{0.3484}_{\pm\text{0.0162}}$ & $\text{0.3106}$ & $\text{0.3741}_{\pm\text{0.0184}}$ & $\text{0.2736}$ & $\text{0.3686}_{\pm\text{0.0176}}$ & $\text{0.2182}$ & $\text{0.3324}_{\pm\text{0.0118}}$ \\
		\midrule
		\multirow{3}{*}{PSM} & A-P   & $\text{0.3590}$ & $\text{0.4772}_{\pm\text{0.0039}}$ & $\text{0.3703}$ & $\text{0.4181}_{\pm\text{0.0136}}$ & $\text{0.3483}$ & $\text{0.4998}_{\pm\text{0.0022}}$ & $\text{0.3618}$ & $\text{0.5116}_{\pm\text{0.0045}}$ & $\text{0.4097}$ & $\text{0.4810}_{\pm\text{0.0051}}$ \\
		& R-A-P & $\text{0.3594}$ & $\text{0.4737}_{\pm\text{0.0051}}$ & $\text{0.3668}$ & $\text{0.4096}_{\pm\text{0.0143}}$ & $\text{0.3473}$ & $\text{0.4900}_{\pm\text{0.0020}}$ & $\text{0.3611}$ & $\text{0.5014}_{\pm\text{0.0050}}$ & $\text{0.4014}$ & $\text{0.4565}_{\pm\text{0.0031}}$ \\
		& V-PR  & $\text{0.3592}$ & $\text{0.4743}_{\pm\text{0.0050}}$ & $\text{0.3672}$ & $\text{0.4107}_{\pm\text{0.0145}}$ & $\text{0.3474}$ & $\text{0.4926}_{\pm\text{0.0021}}$ & $\text{0.3611}$ & $\text{0.5039}_{\pm\text{0.0051}}$ & $\text{0.4017}$ & $\text{0.4575}_{\pm\text{0.0030}}$ \\
		\midrule
		\multirow{3}{*}{SMD} & A-P   & $\text{0.1074}$ & $\text{0.1506}_{\pm\text{0.0000}}$ & $\text{0.1012}$ & $\text{0.1395}_{\pm\text{0.0073}}$ & $\text{0.1003}$ & $\text{0.1442}_{\pm\text{0.0001}}$ & $\text{0.1134}$ & $\text{0.1601}_{\pm\text{0.0011}}$ & $\text{0.0959}$ & $\text{0.1496}_{\pm\text{0.0004}}$ \\
		& R-A-P & $\text{0.1071}$ & $\text{0.1483}_{\pm\text{0.0004}}$ & $\text{0.1024}$ & $\text{0.1330}_{\pm\text{0.0077}}$ & $\text{0.0980}$ & $\text{0.1463}_{\pm\text{0.0001}}$ & $\text{0.1111}$ & $\text{0.1593}_{\pm\text{0.0012}}$ & $\text{0.0930}$ & $\text{0.1452}_{\pm\text{0.0004}}$ \\
		& V-PR  & $\text{0.1070}$ & $\text{0.1483}_{\pm\text{0.0004}}$ & $\text{0.1025}$ & $\text{0.1331}_{\pm\text{0.0077}}$ & $\text{0.0980}$ & $\text{0.1462}_{\pm\text{0.0001}}$ & $\text{0.1109}$ & $\text{0.1591}_{\pm\text{0.0012}}$ & $\text{0.0930}$ & $\text{0.1452}_{\pm\text{0.0004}}$ \\
		\midrule
		\multirow{3}{*}{SWAT} & A-P   & $\text{0.0879}$ & $\text{0.6872}_{\pm\text{0.0002}}$ & $\text{0.1079}$ & $\text{0.5041}_{\pm\text{0.0039}}$ & $\text{0.7117}$ & $\text{0.7117}_{\pm\text{0.0000}}$ & $\text{0.0837}$ & $\text{0.6787}_{\pm\text{0.0316}}$ & $\text{0.6445}$ & $\text{0.7185}_{\pm\text{0.0003}}$ \\
		& R-A-P & $\text{0.1290}$ & $\text{0.6641}_{\pm\text{0.0079}}$ & $\text{0.1708}$ & $\text{0.5673}_{\pm\text{0.0015}}$ & $\text{0.6209}$ & $\text{0.6210}_{\pm\text{0.0001}}$ & $\text{0.1211}$ & $\text{0.6363}_{\pm\text{0.0291}}$ & $\text{0.6425}$ & $\text{0.6314}_{\pm\text{0.0015}}$ \\
		& V-PR  & $\text{0.1265}$ & $\text{0.6484}_{\pm\text{0.0102}}$ & $\text{0.1638}$ & $\text{0.5437}_{\pm\text{0.0012}}$ & $\text{0.5979}$ & $\text{0.6281}_{\pm\text{0.0002}}$ & $\text{0.1185}$ & $\text{0.6099}_{\pm\text{0.0276}}$ & $\text{0.6238}$ & $\text{0.6384}_{\pm\text{0.0015}}$ \\
		\midrule
		\multirow{3}{*}{PUMP} & A-P   & $\text{0.1203}$ & $\text{0.1881}_{\pm\text{0.0006}}$ & $\text{0.1796}$ & $\text{0.1795}_{\pm\text{0.0005}}$ & $\text{0.1870}$ & $\text{0.2094}_{\pm\text{0.0005}}$ & $\text{0.1228}$ & $\text{0.1971}_{\pm\text{0.0024}}$ & $\text{0.1700}$ & $\text{0.2077}_{\pm\text{0.0008}}$ \\
		& R-A-P & $\text{0.2065}$ & $\text{0.2276}_{\pm\text{0.0025}}$ & $\text{0.2412}$ & $\text{0.2447}_{\pm\text{0.0037}}$ & $\text{0.2092}$ & $\text{0.2360}_{\pm\text{0.0005}}$ & $\text{0.2107}$ & $\text{0.2322}_{\pm\text{0.0006}}$ & $\text{0.1996}$ & $\text{0.2351}_{\pm\text{0.0006}}$ \\
		& V-PR  & $\text{0.2027}$ & $\text{0.2247}_{\pm\text{0.0022}}$ & $\text{0.2360}$ & $\text{0.2392}_{\pm\text{0.0022}}$ & $\text{0.2097}$ & $\text{0.2347}_{\pm\text{0.0005}}$ & $\text{0.2082}$ & $\text{0.2297}_{\pm\text{0.0007}}$ & $\text{0.1986}$ & $\text{0.2329}_{\pm\text{0.0006}}$ \\
		\midrule
		\multirow{3}{*}{NYC} & A-P   & $\text{0.0200}$ & $\text{0.0298}_{\pm\text{0.0008}}$ & $\text{0.0710}$ & $\text{0.0901}_{\pm\text{0.0217}}$ & $\text{0.0285}$ & $\text{0.0422}_{\pm\text{0.0010}}$ & $\text{0.0359}$ & $\text{0.0547}_{\pm\text{0.0009}}$ & $\text{0.0277}$ & $\text{0.0478}_{\pm\text{0.0064}}$ \\
		& R-A-P & $\text{0.0315}$ & $\text{0.0407}_{\pm\text{0.0023}}$ & $\text{0.1034}$ & $\text{0.1479}_{\pm\text{0.0383}}$ & $\text{0.0498}$ & $\text{0.0635}_{\pm\text{0.0002}}$ & $\text{0.0652}$ & $\text{0.0860}_{\pm\text{0.0015}}$ & $\text{0.0399}$ & $\text{0.0685}_{\pm\text{0.0066}}$ \\
		& V-PR  & $\text{0.0341}$ & $\text{0.0408}_{\pm\text{0.0021}}$ & $\text{0.0969}$ & $\text{0.1431}_{\pm\text{0.0148}}$ & $\text{0.0482}$ & $\text{0.0604}_{\pm\text{0.0003}}$ & $\text{0.0619}$ & $\text{0.0806}_{\pm\text{0.0012}}$ & $\text{0.0392}$ & $\text{0.0677}_{\pm\text{0.0071}}$ \\
		\midrule
		\multicolumn{2}{c|}{$p$-\textbf{values}} & \textbf{---} & \textbf{0.0006} & \textbf{---} & \textbf{0.0002} & \textbf{---} & \textbf{1.92e-07} & \textbf{---} & \textbf{0.0001} & \textbf{---} & \textbf{3.74e-09} \\
        \midrule
        Dataset & Metric & TimesFM & +NCL   & MOMENT & +NCL   & UniTS & +NCL   & TSPulse & +NCL   & SEMPO & +NCL \\
		\midrule
		\multirow{3}{*}{IOPS} & A-P   & $\text{0.2180}$ & $\text{0.3242}_{\pm\text{0.0065}}$ & $\text{0.1036}$ & $\text{0.1732}_{\pm\text{0.0009}}$ & $\text{0.1810}$ & $\text{0.3133}_{\pm\text{0.0041}}$ & $\text{0.2038}$ & $\text{0.3494}_{\pm\text{0.0037}}$ & $\text{0.1772}$ & $\text{0.2360}_{\pm\text{0.0077}}$ \\
		& R-A-P & $\text{0.1904}$ & $\text{0.3249}_{\pm\text{0.0066}}$ & $\text{0.0652}$ & $\text{0.1051}_{\pm\text{0.0005}}$ & $\text{0.1524}$ & $\text{0.1874}_{\pm\text{0.0093}}$ & $\text{0.1918}$ & $\text{0.3360}_{\pm\text{0.0025}}$ & $\text{0.1326}$ & $\text{0.1820}_{\pm\text{0.0072}}$ \\
		& V-PR  & $\text{0.1902}$ & $\text{0.3244}_{\pm\text{0.0066}}$ & $\text{0.0657}$ & $\text{0.1053}_{\pm\text{0.0005}}$ & $\text{0.1530}$ & $\text{0.1861}_{\pm\text{0.0094}}$ & $\text{0.1919}$ & $\text{0.3354}_{\pm\text{0.0026}}$ & $\text{0.1335}$ & $\text{0.1822}_{\pm\text{0.0070}}$ \\
		\midrule
		\multirow{3}{*}{UCR} & A-P   & $\text{0.0636}$ & $\text{0.0612}_{\pm\text{0.0027}}$ & $\text{0.0626}$ & $\text{0.0833}_{\pm\text{0.0003}}$ & $\text{0.0735}$ & $\text{0.0766}_{\pm\text{0.0029}}$ & $\text{0.0228}$ & $\text{0.0300}_{\pm\text{0.0012}}$ & $\text{0.0247}$ & $\text{0.0364}_{\pm\text{0.0017}}$ \\
		& R-A-P & $\text{0.0492}$ & $\text{0.0580}_{\pm\text{0.0025}}$ & $\text{0.0792}$ & $\text{0.0939}_{\pm\text{0.0000}}$ & $\text{0.0827}$ & $\text{0.0843}_{\pm\text{0.0029}}$ & $\text{0.0272}$ & $\text{0.0318}_{\pm\text{0.0020}}$ & $\text{0.0278}$ & $\text{0.0357}_{\pm\text{0.0014}}$ \\
		& V-PR  & $\text{0.0543}$ & $\text{0.0600}_{\pm\text{0.0025}}$ & $\text{0.0796}$ & $\text{0.0929}_{\pm\text{0.0000}}$ & $\text{0.0808}$ & $\text{0.0832}_{\pm\text{0.0012}}$ & $\text{0.0274}$ & $\text{0.0320}_{\pm\text{0.0018}}$ & $\text{0.0282}$ & $\text{0.0363}_{\pm\text{0.0014}}$ \\
		\midrule
		\multirow{3}{*}{NAB} & A-P   & $\text{0.1941}$ & $\text{0.2947}_{\pm\text{0.0059}}$ & $\text{0.4245}$ & $\text{0.4654}_{\pm\text{0.0001}}$ & $\text{0.3890}$ & $\text{0.4398}_{\pm\text{0.0070}}$ & $\text{0.2258}$ & $\text{0.2902}_{\pm\text{0.0031}}$ & $\text{0.2614}$ & $\text{0.3067}_{\pm\text{0.0040}}$ \\
		& R-A-P & $\text{0.2808}$ & $\text{0.3731}_{\pm\text{0.0055}}$ & $\text{0.5296}$ & $\text{0.5635}_{\pm\text{0.0004}}$ & $\text{0.4528}$ & $\text{0.4933}_{\pm\text{0.0033}}$ & $\text{0.3270}$ & $\text{0.3818}_{\pm\text{0.0048}}$ & $\text{0.3876}$ & $\text{0.4187}_{\pm\text{0.0039}}$ \\
		& V-PR  & $\text{0.2804}$ & $\text{0.3728}_{\pm\text{0.0051}}$ & $\text{0.5210}$ & $\text{0.5565}_{\pm\text{0.0003}}$ & $\text{0.4533}$ & $\text{0.4962}_{\pm\text{0.0036}}$ & $\text{0.3201}$ & $\text{0.3750}_{\pm\text{0.0043}}$ & $\text{0.3753}$ & $\text{0.4084}_{\pm\text{0.0037}}$ \\
		\midrule
		\multirow{3}{*}{MSL} & A-P   & $\text{0.3122}$ & $\text{0.3715}_{\pm\text{0.0141}}$ & $\text{0.3307}$ & $\text{0.4593}_{\pm\text{0.0037}}$ & $\text{0.4317}$ & $\text{0.4511}_{\pm\text{0.0257}}$ & $\text{0.2582}$ & $\text{0.4197}_{\pm\text{0.0070}}$ & $\text{0.3882}$ & $\text{0.4458}_{\pm\text{0.0149}}$ \\
		& R-A-P & $\text{0.3975}$ & $\text{0.4386}_{\pm\text{0.0209}}$ & $\text{0.4807}$ & $\text{0.5881}_{\pm\text{0.0022}}$ & $\text{0.5169}$ & $\text{0.5410}_{\pm\text{0.0321}}$ & $\text{0.3064}$ & $\text{0.4720}_{\pm\text{0.0089}}$ & $\text{0.4199}$ & $\text{0.5123}_{\pm\text{0.0132}}$ \\
		& V-PR  & $\text{0.3928}$ & $\text{0.4360}_{\pm\text{0.0202}}$ & $\text{0.4726}$ & $\text{0.5703}_{\pm\text{0.0028}}$ & $\text{0.5033}$ & $\text{0.5657}_{\pm\text{0.0287}}$ & $\text{0.3060}$ & $\text{0.4694}_{\pm\text{0.0072}}$ & $\text{0.4223}$ & $\text{0.5081}_{\pm\text{0.0135}}$ \\
		\midrule
		\multirow{3}{*}{SMAP} & A-P   & $\text{0.2824}$ & $\text{0.3348}_{\pm\text{0.0087}}$ & $\text{0.3485}$ & $\text{0.3827}_{\pm\text{0.0021}}$ & $\text{0.4669}$ & $\text{0.4617}_{\pm\text{0.0171}}$ & $\text{0.1933}$ & $\text{0.3646}_{\pm\text{0.0042}}$ & $\text{0.2225}$ & $\text{0.3885}_{\pm\text{0.0066}}$ \\
		& R-A-P & $\text{0.3441}$ & $\text{0.3935}_{\pm\text{0.0073}}$ & $\text{0.4326}$ & $\text{0.4756}_{\pm\text{0.0012}}$ & $\text{0.5385}$ & $\text{0.5376}_{\pm\text{0.0022}}$ & $\text{0.2253}$ & $\text{0.3699}_{\pm\text{0.0041}}$ & $\text{0.2867}$ & $\text{0.4276}_{\pm\text{0.0095}}$ \\
		& V-PR  & $\text{0.3366}$ & $\text{0.3862}_{\pm\text{0.0064}}$ & $\text{0.4288}$ & $\text{0.4716}_{\pm\text{0.0013}}$ & $\text{0.5263}$ & $\text{0.5345}_{\pm\text{0.0046}}$ & $\text{0.2251}$ & $\text{0.3687}_{\pm\text{0.0048}}$ & $\text{0.2830}$ & $\text{0.4227}_{\pm\text{0.0089}}$ \\
		\midrule
		\multirow{3}{*}{PSM} & A-P   & $\text{0.3500}$ & $\text{0.4840}_{\pm\text{0.0113}}$ & $\text{0.3735}$ & $\text{0.4751}_{\pm\text{0.0006}}$ & $\text{0.4183}$ & $\text{0.5244}_{\pm\text{0.0054}}$ & $\text{0.3608}$ & $\text{0.4875}_{\pm\text{0.0015}}$ & $\text{0.3670}$ & $\text{0.4162}_{\pm\text{0.0128}}$ \\
		& R-A-P & $\text{0.3496}$ & $\text{0.4779}_{\pm\text{0.0113}}$ & $\text{0.3671}$ & $\text{0.4622}_{\pm\text{0.0006}}$ & $\text{0.4191}$ & $\text{0.4747}_{\pm\text{0.0050}}$ & $\text{0.3612}$ & $\text{0.4829}_{\pm\text{0.0019}}$ & $\text{0.3618}$ & $\text{0.4076}_{\pm\text{0.0124}}$ \\
		& V-PR  & $\text{0.3497}$ & $\text{0.4794}_{\pm\text{0.0110}}$ & $\text{0.3682}$ & $\text{0.5279}_{\pm\text{0.0006}}$ & $\text{0.4203}$ & $\text{0.4735}_{\pm\text{0.0050}}$ & $\text{0.3618}$ & $\text{0.4840}_{\pm\text{0.0024}}$ & $\text{0.3624}$ & $\text{0.4084}_{\pm\text{0.0123}}$ \\
		\midrule
		\multirow{3}{*}{SMD} & A-P   & $\text{0.1122}$ & $\text{0.1391}_{\pm\text{0.0016}}$ & $\text{0.1402}$ & $\text{0.1878}_{\pm\text{0.0007}}$ & $\text{0.1804}$ & $\text{0.1898}_{\pm\text{0.0006}}$ & $\text{0.1016}$ & $\text{0.1259}_{\pm\text{0.0003}}$ & $\text{0.1060}$ & $\text{0.1295}_{\pm\text{0.0016}}$ \\
		& R-A-P & $\text{0.1124}$ & $\text{0.1496}_{\pm\text{0.0017}}$ & $\text{0.0938}$ & $\text{0.1302}_{\pm\text{0.0005}}$ & $\text{0.1291}$ & $\text{0.1338}_{\pm\text{0.0005}}$ & $\text{0.1040}$ & $\text{0.1244}_{\pm\text{0.0001}}$ & $\text{0.1077}$ & $\text{0.1247}_{\pm\text{0.0017}}$ \\
		& V-PR  & $\text{0.1123}$ & $\text{0.1498}_{\pm\text{0.0017}}$ & $\text{0.0942}$ & $\text{0.1312}_{\pm\text{0.0004}}$ & $\text{0.1296}$ & $\text{0.1343}_{\pm\text{0.0004}}$ & $\text{0.1041}$ & $\text{0.1246}_{\pm\text{0.0001}}$ & $\text{0.1080}$ & $\text{0.1249}_{\pm\text{0.0017}}$ \\
		\midrule
		\multirow{3}{*}{SWAT} & A-P   & $\text{0.0822}$ & $\text{0.5154}_{\pm\text{0.0660}}$ & $\text{0.1403}$ & $\text{0.5594}_{\pm\text{0.0006}}$ & $\text{0.0862}$ & $\text{0.7237}_{\pm\text{0.0020}}$ & $\text{0.0855}$ & $\text{0.7087}_{\pm\text{0.0006}}$ & $\text{0.1173}$ & $\text{0.5375}_{\pm\text{0.0042}}$ \\
		& R-A-P & $\text{0.1156}$ & $\text{0.5200}_{\pm\text{0.0610}}$ & $\text{0.2032}$ & $\text{0.5571}_{\pm\text{0.0006}}$ & $\text{0.1258}$ & $\text{0.5750}_{\pm\text{0.0092}}$ & $\text{0.1217}$ & $\text{0.6473}_{\pm\text{0.0017}}$ & $\text{0.1783}$ & $\text{0.5559}_{\pm\text{0.0065}}$ \\
		& V-PR  & $\text{0.1132}$ & $\text{0.5117}_{\pm\text{0.0592}}$ & $\text{0.1895}$ & $\text{0.5281}_{\pm\text{0.0006}}$ & $\text{0.1215}$ & $\text{0.5346}_{\pm\text{0.0078}}$ & $\text{0.1190}$ & $\text{0.6168}_{\pm\text{0.0009}}$ & $\text{0.1702}$ & $\text{0.5287}_{\pm\text{0.0054}}$ \\
		\midrule
		\multirow{3}{*}{PUMP} & A-P   & $\text{0.1201}$ & $\text{0.2111}_{\pm\text{0.0024}}$ & $\text{0.1803}$ & $\text{0.1399}_{\pm\text{0.0002}}$ & $\text{0.1389}$ & $\text{0.2046}_{\pm\text{0.0002}}$ & $\text{0.1140}$ & $\text{0.1898}_{\pm\text{0.0003}}$ & $\text{0.1753}$ & $\text{0.1768}_{\pm\text{0.0013}}$ \\
		& R-A-P & $\text{0.1800}$ & $\text{0.2400}_{\pm\text{0.0046}}$ & $\text{0.2402}$ & $\text{0.2872}_{\pm\text{0.0009}}$ & $\text{0.2415}$ & $\text{0.2525}_{\pm\text{0.0009}}$ & $\text{0.1995}$ & $\text{0.2352}_{\pm\text{0.0004}}$ & $\text{0.2446}$ & $\text{0.2457}_{\pm\text{0.0012}}$ \\
		& V-PR  & $\text{0.1792}$ & $\text{0.2380}_{\pm\text{0.0042}}$ & $\text{0.2346}$ & $\text{0.2764}_{\pm\text{0.0011}}$ & $\text{0.2370}$ & $\text{0.2534}_{\pm\text{0.0011}}$ & $\text{0.1963}$ & $\text{0.2324}_{\pm\text{0.0002}}$ & $\text{0.2389}$ & $\text{0.2512}_{\pm\text{0.0012}}$ \\
		\midrule
		\multirow{3}{*}{NYC} & A-P   & $\text{0.0277}$ & $\text{0.0286}_{\pm\text{0.0023}}$ & $\text{0.0158}$ & $\text{0.0158}_{\pm\text{0.0001}}$ & $\text{0.1447}$ & $\text{0.2093}_{\pm\text{0.0052}}$ & $\text{0.0334}$ & $\text{0.0422}_{\pm\text{0.0008}}$ & $\text{0.0803}$ & $\text{0.0943}_{\pm\text{0.0296}}$ \\
		& R-A-P & $\text{0.0454}$ & $\text{0.0494}_{\pm\text{0.0027}}$ & $\text{0.0285}$ & $\text{0.0285}_{\pm\text{0.0001}}$ & $\text{0.1496}$ & $\text{0.1846}_{\pm\text{0.0060}}$ & $\text{0.0645}$ & $\text{0.0694}_{\pm\text{0.0041}}$ & $\text{0.0887}$ & $\text{0.1431}_{\pm\text{0.0333}}$ \\
		& V-PR  & $\text{0.0444}$ & $\text{0.0483}_{\pm\text{0.0022}}$ & $\text{0.0318}$ & $\text{0.0327}_{\pm\text{0.0001}}$ & $\text{0.1495}$ & $\text{0.1850}_{\pm\text{0.0063}}$ & $\text{0.0624}$ & $\text{0.0670}_{\pm\text{0.0043}}$ & $\text{0.0862}$ & $\text{0.1332}_{\pm\text{0.0232}}$ \\
		\midrule
		\multicolumn{2}{c|}{$p$-\textbf{values}} & \textbf{---} & \textbf{7.43e-05} & \textbf{---} & \textbf{0.0003} & \textbf{---} & \textbf{0.0062} & \textbf{---} & \textbf{0.0001} & \textbf{---} & \textbf{0.0004} \\
		\bottomrule
	\end{tabular}%
	}
	\label{tab:full_pr}%
\end{table}%

%% file: tab/full_ROC.tex
\begin{table}[htbp]
	\centering
	\caption{ROC results of ten vanilla and NCL-augmented TSFMs that use forecasting/reconstruction errors to detect anomalies on ten TSAD datasets. Meanwhile, we perform the paired t-tests, and $p$-\textbf{values} are reported.}
	\setlength{\tabcolsep}{3pt}
	\scalebox{0.66}{
	\begin{tabular}{c|c|c>{\columncolor{blue!20}}c|c>{\columncolor{orange!20}}c|c>{\columncolor{green!20}}c|c>{\columncolor{pink!40}}c|c>{\columncolor{violet!20}}c}
		\toprule
		Dataset & Metric & Aurora & NCL   & ROSE  & NCL   & Chronos & NCL   & TMM   & NCL   & TimeMoE & NCL \\
		\midrule
		\multirow{3}{*}{IOPS} & A-R   & $\text{0.6676}$ & $\text{0.7848}_{\pm\text{0.0237}}$ & $\text{0.6770}$ & $\text{0.7845}_{\pm\text{0.0185}}$ & $\text{0.7006}$ & $\text{0.8398}_{\pm\text{0.0005}}$ & $\text{0.6868}$ & $\text{0.8292}_{\pm\text{0.0085}}$ & $\text{0.6908}$ & $\text{0.8301}_{\pm\text{0.0237}}$ \\
		& R-A-R & $\text{0.6943}$ & $\text{0.8016}_{\pm\text{0.0219}}$ & $\text{0.6866}$ & $\text{0.7770}_{\pm\text{0.0125}}$ & $\text{0.7359}$ & $\text{0.8636}_{\pm\text{0.0004}}$ & $\text{0.7158}$ & $\text{0.8437}_{\pm\text{0.0080}}$ & $\text{0.7262}$ & $\text{0.8628}_{\pm\text{0.0202}}$ \\
		& V-ROC & $\text{0.6922}$ & $\text{0.8006}_{\pm\text{0.0221}}$ & $\text{0.6850}$ & $\text{0.7740}_{\pm\text{0.0132}}$ & $\text{0.7318}$ & $\text{0.8621}_{\pm\text{0.0003}}$ & $\text{0.7117}$ & $\text{0.8426}_{\pm\text{0.0080}}$ & $\text{0.7224}$ & $\text{0.8603}_{\pm\text{0.0209}}$ \\
		\midrule
		\multirow{3}{*}{UCR} & A-R   & $\text{0.5738}$ & $\text{0.5849}_{\pm\text{0.0056}}$ & $\text{0.5394}$ & $\text{0.5485}_{\pm\text{0.0079}}$ & $\text{0.5567}$ & $\text{0.5916}_{\pm\text{0.0042}}$ & $\text{0.5484}$ & $\text{0.5851}_{\pm\text{0.0018}}$ & $\text{0.5453}$ & $\text{0.5799}_{\pm\text{0.0143}}$ \\
		& R-A-R & $\text{0.6610}$ & $\text{0.6486}_{\pm\text{0.0067}}$ & $\text{0.6317}$ & $\text{0.6244}_{\pm\text{0.0081}}$ & $\text{0.6471}$ & $\text{0.6569}_{\pm\text{0.0037}}$ & $\text{0.6379}$ & $\text{0.6442}_{\pm\text{0.0013}}$ & $\text{0.6405}$ & $\text{0.6568}_{\pm\text{0.0096}}$ \\
		& V-ROC & $\text{0.6530}$ & $\text{0.6445}_{\pm\text{0.0070}}$ & $\text{0.6236}$ & $\text{0.6186}_{\pm\text{0.0080}}$ & $\text{0.6389}$ & $\text{0.6530}_{\pm\text{0.0037}}$ & $\text{0.6295}$ & $\text{0.6403}_{\pm\text{0.0012}}$ & $\text{0.6316}$ & $\text{0.6499}_{\pm\text{0.0105}}$ \\
		\midrule
		\multirow{3}{*}{NAB} & A-R   & $\text{0.5689}$ & $\text{0.6139}_{\pm\text{0.0075}}$ & $\text{0.5741}$ & $\text{0.6090}_{\pm\text{0.0084}}$ & $\text{0.5558}$ & $\text{0.6328}_{\pm\text{0.0030}}$ & $\text{0.5530}$ & $\text{0.5828}_{\pm\text{0.0017}}$ & $\text{0.5674}$ & $\text{0.6264}_{\pm\text{0.0054}}$ \\
		& R-A-R & $\text{0.6678}$ & $\text{0.6926}_{\pm\text{0.0051}}$ & $\text{0.6821}$ & $\text{0.6929}_{\pm\text{0.0116}}$ & $\text{0.6662}$ & $\text{0.7164}_{\pm\text{0.0029}}$ & $\text{0.6496}$ & $\text{0.6598}_{\pm\text{0.0028}}$ & $\text{0.6801}$ & $\text{0.7153}_{\pm\text{0.0061}}$ \\
		& V-ROC & $\text{0.6590}$ & $\text{0.6865}_{\pm\text{0.0056}}$ & $\text{0.6729}$ & $\text{0.6859}_{\pm\text{0.0102}}$ & $\text{0.6579}$ & $\text{0.7119}_{\pm\text{0.0028}}$ & $\text{0.6417}$ & $\text{0.6550}_{\pm\text{0.0023}}$ & $\text{0.6724}$ & $\text{0.7096}_{\pm\text{0.0058}}$ \\
		\midrule
		\multirow{3}{*}{MSL} & A-R   & $\text{0.6722}$ & $\text{0.7324}_{\pm\text{0.0130}}$ & $\text{0.6953}$ & $\text{0.7823}_{\pm\text{0.0060}}$ & $\text{0.7214}$ & $\text{0.7745}_{\pm\text{0.0012}}$ & $\text{0.6386}$ & $\text{0.7326}_{\pm\text{0.0290}}$ & $\text{0.6511}$ & $\text{0.7578}_{\pm\text{0.0053}}$ \\
		& R-A-R & $\text{0.7565}$ & $\text{0.7821}_{\pm\text{0.0067}}$ & $\text{0.7641}$ & $\text{0.8244}_{\pm\text{0.0032}}$ & $\text{0.8108}$ & $\text{0.8256}_{\pm\text{0.0016}}$ & $\text{0.7195}$ & $\text{0.7737}_{\pm\text{0.0297}}$ & $\text{0.7469}$ & $\text{0.8150}_{\pm\text{0.0117}}$ \\
		& V-ROC & $\text{0.7486}$ & $\text{0.7789}_{\pm\text{0.0086}}$ & $\text{0.7588}$ & $\text{0.8212}_{\pm\text{0.0029}}$ & $\text{0.7990}$ & $\text{0.8209}_{\pm\text{0.0013}}$ & $\text{0.7091}$ & $\text{0.7677}_{\pm\text{0.0298}}$ & $\text{0.7369}$ & $\text{0.8078}_{\pm\text{0.0083}}$ \\
		\midrule
		\multirow{3}{*}{SMAP} & A-R   & $\text{0.6556}$ & $\text{0.7304}_{\pm\text{0.0027}}$ & $\text{0.7040}$ & $\text{0.7626}_{\pm\text{0.0042}}$ & $\text{0.6211}$ & $\text{0.6878}_{\pm\text{0.0141}}$ & $\text{0.6070}$ & $\text{0.6333}_{\pm\text{0.0327}}$ & $\text{0.5424}$ & $\text{0.6124}_{\pm\text{0.0138}}$ \\
		& R-A-R & $\text{0.7699}$ & $\text{0.7866}_{\pm\text{0.0020}}$ & $\text{0.7777}$ & $\text{0.8132}_{\pm\text{0.0018}}$ & $\text{0.7269}$ & $\text{0.7430}_{\pm\text{0.0138}}$ & $\text{0.6938}$ & $\text{0.6962}_{\pm\text{0.0212}}$ & $\text{0.6368}$ & $\text{0.6756}_{\pm\text{0.0120}}$ \\
		& V-ROC & $\text{0.7594}$ & $\text{0.7822}_{\pm\text{0.0014}}$ & $\text{0.7715}$ & $\text{0.8099}_{\pm\text{0.0028}}$ & $\text{0.7166}$ & $\text{0.7399}_{\pm\text{0.0141}}$ & $\text{0.6853}$ & $\text{0.6911}_{\pm\text{0.0216}}$ & $\text{0.6280}$ & $\text{0.6709}_{\pm\text{0.0128}}$ \\
		\midrule
		\multirow{3}{*}{PSM} & A-R   & $\text{0.5807}$ & $\text{0.6778}_{\pm\text{0.0042}}$ & $\text{0.5818}$ & $\text{0.6341}_{\pm\text{0.0086}}$ & $\text{0.5602}$ & $\text{0.6838}_{\pm\text{0.0021}}$ & $\text{0.5697}$ & $\text{0.6904}_{\pm\text{0.0069}}$ & $\text{0.6047}$ & $\text{0.6816}_{\pm\text{0.0030}}$ \\
		& R-A-R & $\text{0.5748}$ & $\text{0.6616}_{\pm\text{0.0055}}$ & $\text{0.5772}$ & $\text{0.6157}_{\pm\text{0.0103}}$ & $\text{0.5576}$ & $\text{0.6597}_{\pm\text{0.0016}}$ & $\text{0.5691}$ & $\text{0.6648}_{\pm\text{0.0081}}$ & $\text{0.5943}$ & $\text{0.6591}_{\pm\text{0.0024}}$ \\
		& V-ROC & $\text{0.5730}$ & $\text{0.6605}_{\pm\text{0.0050}}$ & $\text{0.5762}$ & $\text{0.6174}_{\pm\text{0.0107}}$ & $\text{0.5566}$ & $\text{0.6611}_{\pm\text{0.0019}}$ & $\text{0.5665}$ & $\text{0.6677}_{\pm\text{0.0085}}$ & $\text{0.5934}$ & $\text{0.6614}_{\pm\text{0.0024}}$ \\
		\midrule
		\multirow{3}{*}{SMD} & A-R   & $\text{0.7001}$ & $\text{0.7420}_{\pm\text{0.0004}}$ & $\text{0.6986}$ & $\text{0.7326}_{\pm\text{0.0076}}$ & $\text{0.6660}$ & $\text{0.7278}_{\pm\text{0.0005}}$ & $\text{0.6831}$ & $\text{0.7293}_{\pm\text{0.0073}}$ & $\text{0.6556}$ & $\text{0.7302}_{\pm\text{0.0008}}$ \\
		& R-A-R & $\text{0.7149}$ & $\text{0.7595}_{\pm\text{0.0006}}$ & $\text{0.7068}$ & $\text{0.7373}_{\pm\text{0.0084}}$ & $\text{0.6833}$ & $\text{0.7496}_{\pm\text{0.0002}}$ & $\text{0.6987}$ & $\text{0.7508}_{\pm\text{0.0059}}$ & $\text{0.6731}$ & $\text{0.7494}_{\pm\text{0.0004}}$ \\
		& V-ROC & $\text{0.7136}$ & $\text{0.7575}_{\pm\text{0.0005}}$ & $\text{0.7060}$ & $\text{0.7361}_{\pm\text{0.0083}}$ & $\text{0.6825}$ & $\text{0.7474}_{\pm\text{0.0003}}$ & $\text{0.6972}$ & $\text{0.7489}_{\pm\text{0.0058}}$ & $\text{0.6721}$ & $\text{0.7474}_{\pm\text{0.0004}}$ \\
		\midrule
		\multirow{3}{*}{SWAT} & A-R   & $\text{0.2493}$ & $\text{0.8315}_{\pm\text{0.0003}}$ & $\text{0.2709}$ & $\text{0.8286}_{\pm\text{0.0001}}$ & $\text{0.8117}$ & $\text{0.8117}_{\pm\text{0.0001}}$ & $\text{0.2506}$ & $\text{0.8245}_{\pm\text{0.0026}}$ & $\text{0.8194}$ & $\text{0.8192}_{\pm\text{0.0006}}$ \\
		& R-A-R & $\text{0.3638}$ & $\text{0.8374}_{\pm\text{0.0060}}$ & $\text{0.3866}$ & $\text{0.8513}_{\pm\text{0.0023}}$ & $\text{0.7873}$ & $\text{0.7874}_{\pm\text{0.0002}}$ & $\text{0.3625}$ & $\text{0.8127}_{\pm\text{0.0093}}$ & $\text{0.8252}$ & $\text{0.7860}_{\pm\text{0.0011}}$ \\
		& V-ROC & $\text{0.3547}$ & $\text{0.8265}_{\pm\text{0.0064}}$ & $\text{0.3752}$ & $\text{0.8342}_{\pm\text{0.0018}}$ & $\text{0.7670}$ & $\text{0.7670}_{\pm\text{0.0001}}$ & $\text{0.3537}$ & $\text{0.7933}_{\pm\text{0.0082}}$ & $\text{0.8124}$ & $\text{0.7612}_{\pm\text{0.0013}}$ \\
		\midrule
		\multirow{3}{*}{PUMP} & A-R   & $\text{0.5130}$ & $\text{0.7778}_{\pm\text{0.0001}}$ & $\text{0.7476}$ & $\text{0.7476}_{\pm\text{0.0010}}$ & $\text{0.7738}$ & $\text{0.8084}_{\pm\text{0.0007}}$ & $\text{0.5245}$ & $\text{0.7904}_{\pm\text{0.0036}}$ & $\text{0.7390}$ & $\text{0.7981}_{\pm\text{0.0011}}$ \\
		& R-A-R & $\text{0.6664}$ & $\text{0.7860}_{\pm\text{0.0009}}$ & $\text{0.7781}$ & $\text{0.7793}_{\pm\text{0.0016}}$ & $\text{0.7651}$ & $\text{0.8010}_{\pm\text{0.0005}}$ & $\text{0.6660}$ & $\text{0.7929}_{\pm\text{0.0020}}$ & $\text{0.7392}$ & $\text{0.7935}_{\pm\text{0.0008}}$ \\
		& V-ROC & $\text{0.6522}$ & $\text{0.7841}_{\pm\text{0.0002}}$ & $\text{0.7749}$ & $\text{0.7757}_{\pm\text{0.0010}}$ & $\text{0.7665}$ & $\text{0.7996}_{\pm\text{0.0005}}$ & $\text{0.6546}$ & $\text{0.7912}_{\pm\text{0.0020}}$ & $\text{0.7396}$ & $\text{0.7920}_{\pm\text{0.0009}}$ \\
		\midrule
		\multirow{3}{*}{NYC} & A-R   & $\text{0.4508}$ & $\text{0.6291}_{\pm\text{0.0146}}$ & $\text{0.6599}$ & $\text{0.7849}_{\pm\text{0.0459}}$ & $\text{0.5654}$ & $\text{0.6870}_{\pm\text{0.0169}}$ & $\text{0.6249}$ & $\text{0.6824}_{\pm\text{0.0110}}$ & $\text{0.5669}$ & $\text{0.6386}_{\pm\text{0.0167}}$ \\
		& R-A-R & $\text{0.5394}$ & $\text{0.6422}_{\pm\text{0.0175}}$ & $\text{0.7481}$ & $\text{0.8302}_{\pm\text{0.0428}}$ & $\text{0.6764}$ & $\text{0.7484}_{\pm\text{0.0039}}$ & $\text{0.7546}$ & $\text{0.8142}_{\pm\text{0.0025}}$ & $\text{0.6161}$ & $\text{0.7322}_{\pm\text{0.0072}}$ \\
		& V-ROC & $\text{0.5355}$ & $\text{0.6428}_{\pm\text{0.0152}}$ & $\text{0.7241}$ & $\text{0.8136}_{\pm\text{0.0395}}$ & $\text{0.6608}$ & $\text{0.7263}_{\pm\text{0.0069}}$ & $\text{0.7332}$ & $\text{0.7846}_{\pm\text{0.0038}}$ & $\text{0.6043}$ & $\text{0.7177}_{\pm\text{0.0048}}$ \\
		\midrule
		\multicolumn{2}{c|}{$p$-\textbf{values}} & \textbf{---} & \textbf{0.0002} & \textbf{---} & \textbf{0.0017} & \textbf{---} & \textbf{4.12e-08} & \textbf{---} & \textbf{0.0001} & \textbf{---} & \textbf{4.58e-08} \\
        \midrule
        Dataset & Metric & TimesFM & NCL   & MOMENT & NCL   & UniTS & NCL   & TSPulse & NCL   & SEMPO & NCL \\
		\midrule
		\multirow{3}{*}{IOPS} & A-R   & $\text{0.6625}$ & $\text{0.8219}_{\pm\text{0.0037}}$ & $\text{0.6757}$ & $\text{0.7454}_{\pm\text{0.0005}}$ & $\text{0.7446}$ & $\text{0.8155}_{\pm\text{0.0026}}$ & $\text{0.7234}$ & $\text{0.8583}_{\pm\text{0.0015}}$ & $\text{0.7046}$ & $\text{0.7699}_{\pm\text{0.0132}}$ \\
		& R-A-R & $\text{0.6966}$ & $\text{0.8461}_{\pm\text{0.0035}}$ & $\text{0.5501}$ & $\text{0.6539}_{\pm\text{0.0010}}$ & $\text{0.7388}$ & $\text{0.7453}_{\pm\text{0.0071}}$ & $\text{0.7549}$ & $\text{0.8689}_{\pm\text{0.0018}}$ & $\text{0.6968}$ & $\text{0.7563}_{\pm\text{0.0094}}$ \\
		& V-ROC & $\text{0.6929}$ & $\text{0.8447}_{\pm\text{0.0032}}$ & $\text{0.5478}$ & $\text{0.6542}_{\pm\text{0.0010}}$ & $\text{0.7375}$ & $\text{0.7439}_{\pm\text{0.0072}}$ & $\text{0.7521}$ & $\text{0.8679}_{\pm\text{0.0018}}$ & $\text{0.6958}$ & $\text{0.7540}_{\pm\text{0.0090}}$ \\
		\midrule
		\multirow{3}{*}{UCR} & A-R   & $\text{0.5617}$ & $\text{0.5970}_{\pm\text{0.0098}}$ & $\text{0.5411}$ & $\text{0.5203}_{\pm\text{0.0002}}$ & $\text{0.5506}$ & $\text{0.5566}_{\pm\text{0.0051}}$ & $\text{0.5803}$ & $\text{0.6005}_{\pm\text{0.0084}}$ & $\text{0.5513}$ & $\text{0.5584}_{\pm\text{0.0107}}$ \\
		& R-A-R & $\text{0.6490}$ & $\text{0.6639}_{\pm\text{0.0066}}$ & $\text{0.5790}$ & $\text{0.5833}_{\pm\text{0.0006}}$ & $\text{0.6055}$ & $\text{0.6116}_{\pm\text{0.0053}}$ & $\text{0.6680}$ & $\text{0.6655}_{\pm\text{0.0090}}$ & $\text{0.6262}$ & $\text{0.6241}_{\pm\text{0.0038}}$ \\
		& V-ROC & $\text{0.6415}$ & $\text{0.6591}_{\pm\text{0.0069}}$ & $\text{0.5781}$ & $\text{0.5776}_{\pm\text{0.0004}}$ & $\text{0.6021}$ & $\text{0.6083}_{\pm\text{0.0053}}$ & $\text{0.6591}$ & $\text{0.6601}_{\pm\text{0.0076}}$ & $\text{0.6210}$ & $\text{0.6198}_{\pm\text{0.0044}}$ \\
		\midrule
		\multirow{3}{*}{NAB} & A-R   & $\text{0.5648}$ & $\text{0.6328}_{\pm\text{0.0023}}$ & $\text{0.7029}$ & $\text{0.7476}_{\pm\text{0.0005}}$ & $\text{0.6841}$ & $\text{0.7097}_{\pm\text{0.0035}}$ & $\text{0.5758}$ & $\text{0.6117}_{\pm\text{0.0045}}$ & $\text{0.5862}$ & $\text{0.6148}_{\pm\text{0.0029}}$ \\
		& R-A-R & $\text{0.6725}$ & $\text{0.7192}_{\pm\text{0.0015}}$ & $\text{0.7461}$ & $\text{0.7973}_{\pm\text{0.0004}}$ & $\text{0.7458}$ & $\text{0.7625}_{\pm\text{0.0025}}$ & $\text{0.6665}$ & $\text{0.6890}_{\pm\text{0.0067}}$ & $\text{0.6932}$ & $\text{0.7026}_{\pm\text{0.0051}}$ \\
		& V-ROC & $\text{0.6653}$ & $\text{0.7141}_{\pm\text{0.0009}}$ & $\text{0.7435}$ & $\text{0.7944}_{\pm\text{0.0002}}$ & $\text{0.7434}$ & $\text{0.7613}_{\pm\text{0.0028}}$ & $\text{0.6586}$ & $\text{0.6812}_{\pm\text{0.0060}}$ & $\text{0.6807}$ & $\text{0.6927}_{\pm\text{0.0040}}$ \\
		\midrule
		\multirow{3}{*}{MSL} & A-R   & $\text{0.7101}$ & $\text{0.7798}_{\pm\text{0.0163}}$ & $\text{0.8043}$ & $\text{0.8204}_{\pm\text{0.0005}}$ & $\text{0.8067}$ & $\text{0.8216}_{\pm\text{0.0146}}$ & $\text{0.6664}$ & $\text{0.7575}_{\pm\text{0.0029}}$ & $\text{0.6869}$ & $\text{0.7762}_{\pm\text{0.0174}}$ \\
		& R-A-R & $\text{0.7980}$ & $\text{0.8395}_{\pm\text{0.0154}}$ & $\text{0.8326}$ & $\text{0.8740}_{\pm\text{0.0003}}$ & $\text{0.8394}$ & $\text{0.8560}_{\pm\text{0.0149}}$ & $\text{0.7431}$ & $\text{0.8064}_{\pm\text{0.0023}}$ & $\text{0.7525}$ & $\text{0.8134}_{\pm\text{0.0177}}$ \\
		& V-ROC & $\text{0.7873}$ & $\text{0.8321}_{\pm\text{0.0148}}$ & $\text{0.8322}$ & $\text{0.8653}_{\pm\text{0.0004}}$ & $\text{0.8333}$ & $\text{0.8487}_{\pm\text{0.0141}}$ & $\text{0.7356}$ & $\text{0.8019}_{\pm\text{0.0019}}$ & $\text{0.7473}$ & $\text{0.8114}_{\pm\text{0.0153}}$ \\
		\midrule
		\multirow{3}{*}{SMAP} & A-R   & $\text{0.6195}$ & $\text{0.6883}_{\pm\text{0.0132}}$ & $\text{0.8342}$ & $\text{0.7859}_{\pm\text{0.0011}}$ & $\text{0.7654}$ & $\text{0.7603}_{\pm\text{0.0047}}$ & $\text{0.6752}$ & $\text{0.7069}_{\pm\text{0.0068}}$ & $\text{0.7308}$ & $\text{0.7646}_{\pm\text{0.0041}}$ \\
		& R-A-R & $\text{0.7175}$ & $\text{0.7559}_{\pm\text{0.0070}}$ & $\text{0.8605}$ & $\text{0.8302}_{\pm\text{0.0010}}$ & $\text{0.8042}$ & $\text{0.8036}_{\pm\text{0.0004}}$ & $\text{0.7832}$ & $\text{0.7773}_{\pm\text{0.0059}}$ & $\text{0.8191}$ & $\text{0.8246}_{\pm\text{0.0035}}$ \\
		& V-ROC & $\text{0.7062}$ & $\text{0.7487}_{\pm\text{0.0068}}$ & $\text{0.8584}$ & $\text{0.8265}_{\pm\text{0.0010}}$ & $\text{0.8000}$ & $\text{0.7985}_{\pm\text{0.0012}}$ & $\text{0.7732}$ & $\text{0.7722}_{\pm\text{0.0063}}$ & $\text{0.8098}$ & $\text{0.8202}_{\pm\text{0.0030}}$ \\
		\midrule
		\multirow{3}{*}{PSM} & A-R   & $\text{0.5610}$ & $\text{0.6826}_{\pm\text{0.0131}}$ & $\text{0.6192}$ & $\text{0.7000}_{\pm\text{0.0014}}$ & $\text{0.6387}$ & $\text{0.7100}_{\pm\text{0.0062}}$ & $\text{0.5947}$ & $\text{0.6991}_{\pm\text{0.0014}}$ & $\text{0.5959}$ & $\text{0.6449}_{\pm\text{0.0095}}$ \\
		& R-A-R & $\text{0.5580}$ & $\text{0.6648}_{\pm\text{0.0126}}$ & $\text{0.5731}$ & $\text{0.6570}_{\pm\text{0.0014}}$ & $\text{0.6174}$ & $\text{0.6402}_{\pm\text{0.0060}}$ & $\text{0.5902}$ & $\text{0.6793}_{\pm\text{0.0031}}$ & $\text{0.5780}$ & $\text{0.6202}_{\pm\text{0.0125}}$ \\
		& V-ROC & $\text{0.5572}$ & $\text{0.6653}_{\pm\text{0.0125}}$ & $\text{0.5742}$ & $\text{0.6565}_{\pm\text{0.0014}}$ & $\text{0.6160}$ & $\text{0.6381}_{\pm\text{0.0060}}$ & $\text{0.5896}$ & $\text{0.6789}_{\pm\text{0.0036}}$ & $\text{0.5776}$ & $\text{0.6206}_{\pm\text{0.0124}}$ \\
		\midrule
		\multirow{3}{*}{SMD} & A-R   & $\text{0.6719}$ & $\text{0.7511}_{\pm\text{0.0016}}$ & $\text{0.7568}$ & $\text{0.7875}_{\pm\text{0.0009}}$ & $\text{0.7590}$ & $\text{0.7697}_{\pm\text{0.0009}}$ & $\text{0.7170}$ & $\text{0.7438}_{\pm\text{0.0004}}$ & $\text{0.7194}$ & $\text{0.7378}_{\pm\text{0.0024}}$ \\
		& R-A-R & $\text{0.6885}$ & $\text{0.7755}_{\pm\text{0.0011}}$ & $\text{0.5617}$ & $\text{0.6780}_{\pm\text{0.0037}}$ & $\text{0.6875}$ & $\text{0.6696}_{\pm\text{0.0039}}$ & $\text{0.7263}$ & $\text{0.7475}_{\pm\text{0.0003}}$ & $\text{0.7211}$ & $\text{0.7310}_{\pm\text{0.0020}}$ \\
		& V-ROC & $\text{0.6875}$ & $\text{0.7735}_{\pm\text{0.0012}}$ & $\text{0.5616}$ & $\text{0.6774}_{\pm\text{0.0035}}$ & $\text{0.6854}$ & $\text{0.6681}_{\pm\text{0.0036}}$ & $\text{0.7249}$ & $\text{0.7465}_{\pm\text{0.0003}}$ & $\text{0.7207}$ & $\text{0.7302}_{\pm\text{0.0018}}$ \\
		\midrule
		\multirow{3}{*}{SWAT} & A-R   & $\text{0.2567}$ & $\text{0.8202}_{\pm\text{0.0048}}$ & $\text{0.2730}$ & $\text{0.8258}_{\pm\text{0.0013}}$ & $\text{0.2339}$ & $\text{0.8274}_{\pm\text{0.0005}}$ & $\text{0.2444}$ & $\text{0.8283}_{\pm\text{0.0002}}$ & $\text{0.2721}$ & $\text{0.8298}_{\pm\text{0.0017}}$ \\
		& R-A-R & $\text{0.3604}$ & $\text{0.8228}_{\pm\text{0.0084}}$ & $\text{0.3918}$ & $\text{0.8079}_{\pm\text{0.0015}}$ & $\text{0.3523}$ & $\text{0.7533}_{\pm\text{0.0043}}$ & $\text{0.3585}$ & $\text{0.8044}_{\pm\text{0.0031}}$ & $\text{0.3861}$ & $\text{0.8296}_{\pm\text{0.0033}}$ \\
		& V-ROC & $\text{0.3519}$ & $\text{0.8116}_{\pm\text{0.0100}}$ & $\text{0.3784}$ & $\text{0.7885}_{\pm\text{0.0015}}$ & $\text{0.3417}$ & $\text{0.7148}_{\pm\text{0.0049}}$ & $\text{0.3489}$ & $\text{0.7845}_{\pm\text{0.0030}}$ & $\text{0.3729}$ & $\text{0.8086}_{\pm\text{0.0027}}$ \\
		\midrule
		\multirow{3}{*}{PUMP} & A-R   & $\text{0.5532}$ & $\text{0.7942}_{\pm\text{0.0037}}$ & $\text{0.7587}$ & $\text{0.5406}_{\pm\text{0.0009}}$ & $\text{0.5598}$ & $\text{0.7914}_{\pm\text{0.0009}}$ & $\text{0.5066}$ & $\text{0.7812}_{\pm\text{0.0002}}$ & $\text{0.7436}$ & $\text{0.7475}_{\pm\text{0.0034}}$ \\
		& R-A-R & $\text{0.6501}$ & $\text{0.7962}_{\pm\text{0.0018}}$ & $\text{0.7814}$ & $\text{0.7370}_{\pm\text{0.0006}}$ & $\text{0.6904}$ & $\text{0.7904}_{\pm\text{0.0006}}$ & $\text{0.6459}$ & $\text{0.7895}_{\pm\text{0.0004}}$ & $\text{0.7800}$ & $\text{0.7820}_{\pm\text{0.0018}}$ \\
		& V-ROC & $\text{0.6440}$ & $\text{0.7941}_{\pm\text{0.0020}}$ & $\text{0.7780}$ & $\text{0.7201}_{\pm\text{0.0006}}$ & $\text{0.6828}$ & $\text{0.7873}_{\pm\text{0.0006}}$ & $\text{0.6334}$ & $\text{0.7875}_{\pm\text{0.0003}}$ & $\text{0.7757}$ & $\text{0.7779}_{\pm\text{0.0020}}$ \\
		\midrule
		\multirow{3}{*}{NYC} & A-R   & $\text{0.5439}$ & $\text{0.5591}_{\pm\text{0.0129}}$ & $\text{0.3474}$ & $\text{0.3531}_{\pm\text{0.0051}}$ & $\text{0.8186}$ & $\text{0.8620}_{\pm\text{0.0036}}$ & $\text{0.5723}$ & $\text{0.7032}_{\pm\text{0.0220}}$ & $\text{0.7382}$ & $\text{0.7915}_{\pm\text{0.0131}}$ \\
		& R-A-R & $\text{0.6510}$ & $\text{0.6660}_{\pm\text{0.0105}}$ & $\text{0.4945}$ & $\text{0.4983}_{\pm\text{0.0027}}$ & $\text{0.8354}$ & $\text{0.8513}_{\pm\text{0.0016}}$ & $\text{0.7318}$ & $\text{0.7605}_{\pm\text{0.0211}}$ & $\text{0.7813}$ & $\text{0.8205}_{\pm\text{0.0074}}$ \\
		& V-ROC & $\text{0.6361}$ & $\text{0.6513}_{\pm\text{0.0087}}$ & $\text{0.4989}$ & $\text{0.5040}_{\pm\text{0.0027}}$ & $\text{0.8264}$ & $\text{0.8434}_{\pm\text{0.0017}}$ & $\text{0.7106}$ & $\text{0.7490}_{\pm\text{0.0229}}$ & $\text{0.7718}$ & $\text{0.8093}_{\pm\text{0.0095}}$ \\
		\midrule
		\multicolumn{2}{c|}{$p$-\textbf{values}} & \textbf{---} & \textbf{4.28e-05} & \textbf{---} & \textbf{0.0237} & \textbf{---} & \textbf{0.0092} & \textbf{---} & \textbf{0.0002} & \textbf{---} & \textbf{0.0066} \\
		\bottomrule
	\end{tabular}%
	}
	\label{tab:full_roc}%
\end{table}%